\let\optionkeymacros\null
 \def\registered{{\ooalign                                          
   {\hfil\kern+.05em\raise.12ex                                     
 \hbox{\tiny R}\hfil\crcr{\footnotesize\mathhexbox20D}}}}           
 \newcommand{\TrueMath}    [1]{\mbox{$#1$}}                  
 \def\half{\TrueMath{\leavevmode\kern.1em \raise.5ex
                     \hbox{\the\scriptfont0 1}
                     \kern-.1em / \kern-.15em\lower.25ex
                     \hbox{\the\scriptfont0 2}
           }         }
 \newcommand{\onR}[1]{\in\mathbb{R}^{#1}}
 \def\cool#1{{\fontfamily{cmss}\selectfont #1}}
 \def\cools#1{{\emph{\fontfamily{cmss}\selectfont #1}}}
 \def\dots{\TrueMath{\ldots}}
\def\set12{\newfont{\size12}{cmbx12}}
\renewcommand{\frac}[2]{\TrueMath{\TrueMath{#1}\over\TrueMath{#2}}}
\newcommand{\define}   {\TrueMath{\,\buildrel \triangle \over = \,}}
\newcommand{\sign}      {\hbox{\rm sign}}
\def\lie#1#2{\TrueMath{{\cal L}_{\bf #1}(#2)}}               
\def\lies#1#2#3{\TrueMath{{\cal L}_{\bf #1}^{#3}(#2)}}       
\def\simless
\def\simgreat
\newcommand*\reddot{{\protect \includegraphics[width=0.8em]{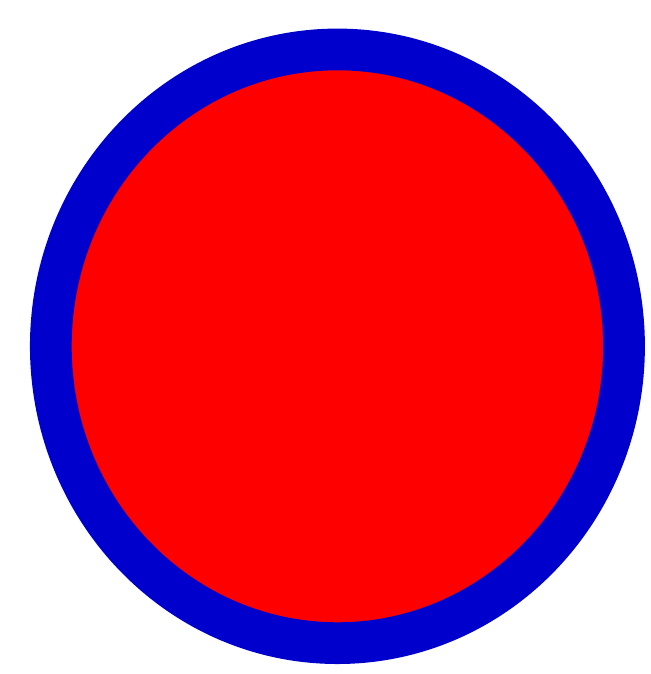}}}
\newcommand*\greendot{{\protect \includegraphics[width=0.8em]{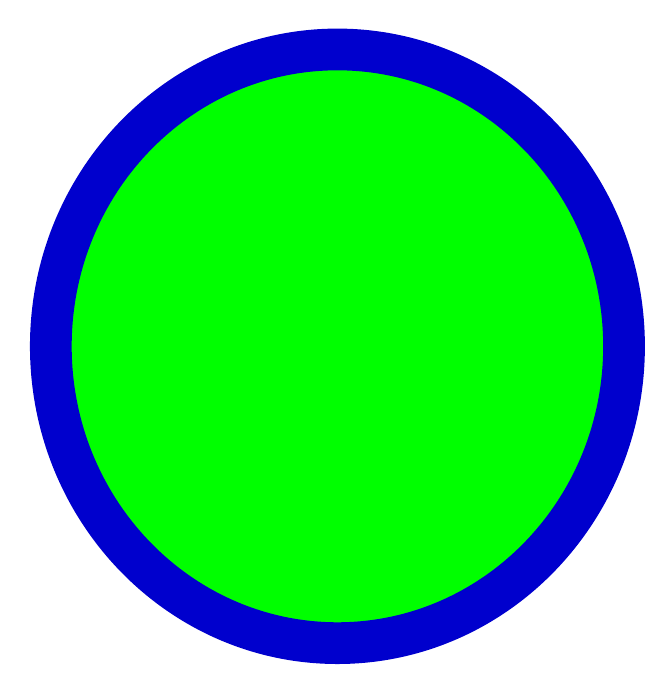}}}
\newcommand*\blueellipsoid{{\protect \includegraphics[width=1em]{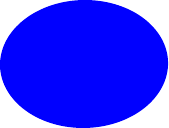}}}
\newcommand*\greenellipsoid{{\protect \includegraphics[width=1.4em]{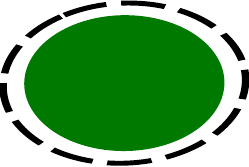}}}
\newcommand*\lightgreenellipsoid{{\protect \includegraphics[width=1em]{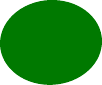}}}
\newlength\myindent
\newtheorem*{theorem*}{Theorem}
\newtheorem*{proposition*}{Proposition}
\newtheorem{proposition}{Proposition}
\newtheorem{definition}{Definition}
\newtheorem*{example*}{Example}
\newtheorem*{property*}{Property}
 \def\*tr{^{\,*T}}
 \renewcommand{\onR}[1]{\in\mathbb{R}^{#1}}
 \DeclareSymbolFont{rsfs}{U}{rsfs}{m}{n}
 \DeclareSymbolFontAlphabet{\mathrsfs}{rsfs}
 \def\Cont#1{\mathrsfs{C}^{#1}}
 \def\cool#1{{\fontfamily{cmss}\selectfont #1}}
 \def\cools#1{{\emph{\fontfamily{cmss}\selectfont #1}}}
 \newcounter{HCRLalgorno}
\newcommand\moveFigureUp[2]{%
   \raisebox{#1}{#2}}
 \newcommand{\customlabel}[2]{%
   \protected@write \@auxout {}{\string \newlabel {#1}{{#2}{}}}}
\journal{The International Journal of Robotics Research}
\begin{document}


\title{Robust Optimal Planning and Control of Non-Periodic Bipedal Locomotion with A Centroidal Momentum Model}


\author{Ye Zhao$^1$, Benito R. Fernandez$^2$, and Luis Sentis$^{1}$\thanks{Corresponding author;
                 E-mail addresses: yezhao@utexas.edu,
                                 benito@austin.utexas.edu, 
                                 lsentis@austin.utexas.edu.}
}

\address{$^1$Human Centered Robotics Laboratory, The University of Texas at Austin, TX, USA.\\
$^2$Neuro-Engineering Research and Development Laboratory, The University of Texas at Austin, TX, USA.
}
 
\maketitle

\begin{abstract}
This study presents a theoretical method for planning and controlling agile bipedal locomotion based on robustly tracking a set of non-periodic keyframe states. Based on centroidal momentum dynamics, we formulate a hybrid phase-space planning and control method which includes the following key components: (i) a step transition solver that enables dynamically tracking non-periodic keyframe states over various types of terrains, (ii) a robust hybrid automaton to effectively formulate planning and control algorithms, (iii) a steering direction model to control the robot's heading, (iv) a phase-space metric to measure distance to the planned locomotion manifolds, and (v) a hybrid control method based on the previous distance metric to produce robust dynamic locomotion under external disturbances. Compared to other locomotion methodologies, we have a large focus on non-periodic gait generation and robustness metrics to deal with disturbances. Such focus enables the proposed control method to robustly track non-periodic keyframe states over various challenging terrains and under external disturbances as illustrated through several simulations.
\end{abstract}

\keywords{Phase-Space Locomotion Planning, Non-Periodic Keyframe Mapping, Robust Hybrid Automaton, Optimal Control.}

\section{Introduction}
\label{sec:intro}

Humanoid and legged robots may soon nimbly and robustly maneuver over highly rough terrains and unstructured environments. 
This study formulates a new method for the generation of trajectories and an optimal controller to achieve locomotion in those types of environments using a phase-space formalism.  Using prismatic inverted pendulum dynamics and given a set of desired keyframe states, we present a phase-space planner that can precisely negotiate the challenging terrains.  The resulting trajectories are formulated as phase-space manifolds.  Borrowing from sliding mode control theory, we use the newly defined manifolds and a Riemannian distance metric to measure deviations due to external disturbances or model uncertainties.  A control strategy based on dynamic programming is proposed that steers the locomotion process towards the planned trajectories. Finally, we devise a robust hybrid automaton to effectively formulate control algorithms that involve both continuous and discrete input processes for disturbance recovery. We validate this planning methodology via various simulations  including: dynamically walking over a random rough terrain, walking under external disturbances, walking while changing the robot's heading and dynamically leaping over a disjointed terrain.
\subsection{Dynamic Legged Locomotion}
Dynamic legged locomotion has been a center of attention for the past few decades [\cite{grizzle2014models, pratt2001virtual, hubicki2016atrias, park2017high, hutter2014quadrupedal, erez2007bipedal, wu2010terrain, zhao2014human}]. The work in [\cite{raibert1986legged}] pioneered robust hopping locomotion of point-foot monoped and bipedal robots using simple dynamical models but with limited applicability to semi-periodic hopping motions. His focus is on dynamically stabilizing legged robots. Instead, our focus is on precisely tracking keyframe states, i.e. a discrete set of desired robot center-of-mass (CoM) positions and velocities along the locomotion paths. Such capability is geared towards the design of highly non-periodic gaits in unstructured environments or the characterization of dynamic gait structure in a generic sense. [\cite{pratt2001virtual}] achieved point-foot biped walking using a virtual model control method but with limited applicability to mechanically supported robots. Unsupported point-foot biped locomotion in moderately rough terrains has been recently achieved by [\cite{grizzle2014models}] and [\cite{ramezani2014performance}] using Poincar\'e stability methods. However, Poincar\'e maps cannot be leveraged to achieving non-periodic gaits for highly irregular terrains. The work [\mbox{\cite{yang2009framework}}] devised switching controllers for aperiodic walking via re-defining the notation of walking stability. In contrast, our work focuses on non-periodic gaits for unsupported robots in highly irregular and disjointed terrains.

Our method compares well with similar reduced-order approaches. [\cite{wu20133}] proposes a time-based deadbeat controller for highly robust hoping behaviors based on the SLIP model over uncertain terrains. This work achieves robustness via feedforward control instead of correcting for past disturbances. In our case we quantify robustness as the distance between the planned and the actual disturbed trajectories. And we use such distance as part of the cost function for control. Using a SLIP model for humanoid robots quickly turning is studied in [\cite{wensing20143d}] based on a steering optimization. However, this work does not address rough terrains and robustness quantification is still missing. The authors in [\cite{piovan2015reachability}] propose a SLIP model for energy-varying planning on rough terrains and devised reachability metrics via numerical analysis. A numerical algorithm is proposed to achieve desired apex states, which is analogous to the principle of our keyframe-based planning. However, their method focuses on 2D locomotion patterns.

The Capture Point method [\cite{pratt2006capture}] provides one of the most practical frameworks for locomotion. Sharing similar core ideas, the divergent component of motion \mbox{[\cite{takenaka2009real}]} and the extrapolated center-of-mass \mbox{[\cite{hof2008extrapolated}]} were independently proposed. Extensions of the Capture Point method [\cite{englsberger2015three, morisawa2012balance}], allow locomotion over rough terrains. Recently, the work in [\cite{ramos2015generalizations}] generalizes the Capture Point method by proposing a ``Nonlinear Inverted Pendulum'' model, but it is limited to the two-dimensional case, and angular momentum control is ignored. Motion planning techniques based on interpolation through kinematic configurations have been explored, among other works, by [\cite{hauser2014fast}] and [\cite{pham2013kinodynamic}]. Those techniques are making great progress towards mobility and locomotion in various kinds of environments. The main difference from these studies is that our controller provides a robust optimal recovery strategy and ensures stability to achieve under-actuated dynamic walking. 

Another close work on agile locomotion is [\mbox{\cite{mordatch2010robust}}] which proposes a physics-based locomotion controller and devises an online motion planner to generate various types of robust gaits over rough terrains. Recently, progress on this line of work enables the generation of non-periodic locomotion trajectories [\cite{mordatch2012discovery}]. One key missing aspect of these works is quantifying robustness and analyzing feedback stability. Additionally, these works does not address locomotion of point-foot robots.

\subsection{Optimal Control and Planning}
Optimal control of legged locomotion over rough terrains are explored in [\cite{kuindersma2016optimization, dai2012optimizing, feng2015optimization, Byl:09, carpentier2016versatile}]. The work in [\cite{manchester2011stable}] proposed a control technique to stabilize non-periodic motions of under-actuated robots with a focus on walking over uneven terrain. The control is achieved by constructing a lower-dimensional system of coordinates transverse to the target cycle and then computing a receding-horizon optimal controller to exponentially stabilize the linearized dynamics. Recently, progress on this line of work enables the generation of non-periodic locomotion trajectories [\cite{manchester2014real}]. In contrast with these works, we focus on robustness by providing a distance metric for recovery and an optimal control approach. In [\cite{saglam2014robust}], a controller switching strategy for walking on irregular terrains is proposed. They optimize policies for switching between a set of known controllers. Their method is further extended to incorporate noise on the terrain and through a value iteration process they achieve a certain degree of robustness through switching. Instead, our paper is focused on creating new optimal controllers from scratch for general types of terrains. Additionally, their work is focused on 2D locomotion whereas we focus on 3D.

Model predictive control is explored in [\cite{tassa2012synthesis, audren2014model, nguyen2017walking, faraji2014robust, van2017real, brasseur2015robust}] for complex humanoid behaviors. [\cite{stephens2010push}] uses model predictive control (MPC) for push recovery by planning future steps. The authors in [\cite{caron2016multi}] propose a preview control method of 3D center-of-mass accelerations for multi-contact rough terrain locomotion. To make the problem tractable, polyhedral bounds are used to decouple quadratic inequalities into a set of linear constraints. [\cite{wieber2006trajectory}] presents a linear MPC scheme for zero moment point control with perturbations. However, many MPC methods rely on linearizing system dynamics at each time step, and only local optimality for each step is guaranteed. Our robust control strategy uses a dynamic programming approach to generate a policy table offline and then execute it in an online pattern. This can be treated as an explicit MPC approach with a few walking step horizon.

\subsection{Robustness and Recovery Strategies}
Numerous studies have focused on recovery strategies upon disturbances [\cite{hofmann2006robust, posa2017balancing, li2015fall, zhao2013biologically}]. Various recovery methods have been proposed based on ankle, hip, knee, and stepping strategies [\cite{kuo1992human, stephens2010push}]. In [\cite{hyon2007disturbance}], a stepping controller based on ground contact forces is implemented in a humanoid robot. The study in [\cite{komura2005feedback}] controls hip angular momentum to achieve planar bipedal locomotion. In our study, we simultaneously control the rate of change of the torso angular momentum, the center-of-mass apex height and the foot placements to achieve unsupported rough terrain walking. 

In [\cite{hobbelen2007disturbance}], a gait sensitivity norm is presented to measure disturbance rejection during dynamic walking. In [\cite{hamed2016exponentially}], sensitivity analysis with respect to ground height variations is performed to model robustness of orbits. These techniques are limited to cyclic walking gaits. The work in [\cite{arslan2012reactive}] unifies planning and control to provide robustness. However, the technique is only applied to planar hopping robots.

Exact knowledge of the terrain profile is normally impractical due to the inaccurate sensing processes and ubiquitous noise. Many works in locomotion assume perfect terrain sensing [\cite{liu2015trajectory, feng2015optimization}]. The work of [\cite{Byl:09}] used mean first-passage time to quantify the robustness to unknown terrains, whose height follows a modeled probabilistic distribution.  Recently, [\cite{dai2012optimizing, griffin2016nonholonomic}] proposed robust optimization approaches with augmented cost function penalizing state and control deviations arising from unknown terrain heights. [\cite{park2013finite}] devised finite-state-machine-based controllers for unexpected terrain height variations and implemented them in a planar robot. Although our study in this paper does not explicitly model terrain uncertainties, our proposed robustness metric and recovery strategies could be applicable to deal with unknown terrains. For instance, it is plausible to analyze the effect of terrain uncertainties in terms of the disturbance categories defined in Section~\ref{sec:optimization}. As a result, the robust control strategies developed in Section~\ref{sec:simulations} could be applied for recovery.

In [\cite{frazzoli2001robust}], a robust hybrid automaton is introduced to achieve time-optimal motion planning of a helicopter in an environment with obstacles. The same group studies robustness to model uncertainties [\cite{schouwenaars2003robust}] but ignores external disturbances. More recently, [\cite{majumdar2013robust}] accounts for external disturbances like cross-wind, by computing funnels via Lyapunov functions and switching between these funnels for maneuvering unmanned air vehicles in the presence of obstacles and disturbances. We apply some of these concepts to point-foot locomotion. Our dynamic system is hybrid, i.e., possessing a different set of dynamic equations for each contact stage.  As a result, we propose a hybrid control algorithm that switches states when the physical system changes the number of contacts.  We use the hybrid automaton as a tool for planning and control of bipedal locomotion.  We in fact extend their use of hybrid automaton to accommodate for hybrid systems. Additionally we re-generate phase-space trajectories on demand while the previous works rely on pre-generated primitives. 
\subsection{Contributions and Organization}
In light of the discussions above, our contributions are the following: (i) we formulate a hybrid automaton to characterize non-periodic locomotion dynamics, (ii) using the automaton, we synthesize motion plans in the phase-space to maneuver over irregular terrains while tracking a set of desired keyframes composed of CoM apex states and heading direction angles, (iii) a phase-space manifold is created with a Riemannian distance metric to measure nominal trajectory deviations and design an in-step controller, and (iv) we derive an optimal control method to recover from disturbances and uncertainties, and propose a theorem for its attractiveness. Overall, the key difference compared with previous works is our focus on trajectory generation and robust control of non-periodic and hybrid gaits. We are less centered on dynamic balance or moving from an initial to a final location but instead on tracking desired keyframes. A preliminary version of this work is presented in our conference paper [\cite{zhao2016robustplanning}].

This paper is outlined as follows. Section~\ref{sec:ProblemDefinition} introduces the control formalism and presents preliminary notations. In Section~\ref{sec:model}, we present the proposed centroidal-momentum-based locomotion model. An algorithm is devised to produce nominal phase-space trajectories. Section~\ref{sec:3d-moplan} introduces key planner components including a robust hybrid automaton, a step transition solver, and a steering direction model.  In Section~\ref{sec:manifold}, we devise analytical solutions for phase-space tangent and cotangent manifolds. Additionally, we classify disturbance patterns, guards and recovery strategies in the phase-space. Section~\ref{sec:optimization} formulates a two-stage control procedure for disturbance rejection. We propose a theorem for the existence and estimation of a recoverability bundle. Simulation results of locomotion over rough terrains and under disturbances are shown in Section~\ref{sec:simulations}. In Section~\ref{sec:Discussion}, we discuss the results, make conclusions, and motivate future work. The Appendix presents mathematical notations and proofs.

\section{Problem Definition}
\label{sec:ProblemDefinition}
In this section, we present basic control formalism and manifold analysis that will allow us to characterize, plan and control non-periodic locomotion processes in later sections.
\subsection{System Equations}\label{sec:SystemEquations}
Legged robots can be characterized as Multi-Input/Multi-Output (MIMO) systems.  Let us assume that a bipedal robot can be characterized by $n_j$ joint degrees of freedom (DOF), $\boldsymbol{q}=[q_1,q_2,\dots,q_{n_j}]^T\onR{n_j}$. Letting $\boldsymbol{x}(t)=[\boldsymbol{q}^T(t),\dot{\boldsymbol{q}}^T(t))]^T\in\mathbb{R}^{n}$, be the state-space vector ($n=2n_j$), $\boldsymbol{u}(t)\in\mathbb{R}^{m}$, represents the control input vector (generalized torques and forces), and defining $\boldsymbol{f}(\boldsymbol{x}(t))$, $\boldsymbol{g}(\boldsymbol{x}(t))$, and $\boldsymbol{h}(\boldsymbol{x}(t))$ in the obvious manner, the mechanical model is expressed in state variable form as
\begin{subeqnarray}\label{eqs:plant} 
\dot{\boldsymbol{x}}(t) &=&  \boldsymbol{f} (\boldsymbol{x}(t)) 
                    +   \boldsymbol{g} (\boldsymbol{x}(t)) \boldsymbol{u}(t)
                    +   \boldsymbol{J}_d (\boldsymbol{x}(t)) \boldsymbol{d}(t),  \label{eq:plantStates}\\
   \boldsymbol{y}(t)   &=&  \boldsymbol{h} (\boldsymbol{x}(t)),                  \label{eq:plantOutputs}
\end{subeqnarray}
where $\boldsymbol{d}(t)$ represent the generalized external disturbance forces, and $\boldsymbol{J}_d(\boldsymbol{x}(t))$ is the disturbance distribution matrix.  
The output vector $\boldsymbol{y}(t) = [y_1,y_2,\dots,y_p]^T\in \mathbb{R}^{p}$ is generated by $\boldsymbol{h}(\boldsymbol{x}(t))$, that may represent positions and/or velocities in the task space. Without loss of generality, let us consider systems in the normal form, where $\boldsymbol{h}(\cdot)$ is at least $\Cont{r}$, where $r$ is the relative order of the output. The disturbances and modeling errors satisfy the matching conditions [\cite{Fernandez:PhD}]. 
\subsection{System Normalization for Phase-Space Planner Design}
\label{sec:Normalization}
General robotic systems are not in normal form, but we can transform them by finding what relative order of the output derivatives are explicitly controllable.  Each of the outputs $y_i$ in Eq.~(\ref{eq:plantOutputs}) has a relative order $r_i$, defined by the smallest derivative order where the control appears,
 \begin{subeqnarray}
     y_i^{[k]}   &=& \dfrac{d^ky_i}{dt^k} = \lies{\boldsymbol{f}}{h_i(\boldsymbol{x})}{k} 
                +  \lie{\boldsymbol{g}}{\lies{\boldsymbol{f}}{h_i(\boldsymbol{x})}{k-1}}\boldsymbol{u},\\
     \lie{\boldsymbol{g}}{\lies{\boldsymbol{f}}{h_i(\boldsymbol{x})}{k-1}}   &  =  & 0 \quad ~{\rm for} \quad 0\le k<r_i
     \label{eq:RO},\\
     y_i^{[r_i]} &=& \lies{\boldsymbol{f}}{h_i(\boldsymbol{x})}{r_i} 
                +  \lie{\boldsymbol{g}}{\lies{\boldsymbol{f}}{h_i(\boldsymbol{x})}{r_i-1}}\boldsymbol{u},\\
     \lie{\boldsymbol{g}}{\lies{\boldsymbol{f}}{h_i(\boldsymbol{x})}{r_i-1}} & \ne & 0 \quad ~
          \quad \forall \boldsymbol{x} \in \mathbb{S}_i \subset \mathbb{R}^n,
     \label{eq:RO_2}
  \end{subeqnarray}
where $\lies{\boldsymbol{f}}{h_i(\boldsymbol{x})}{0}=h_i(\boldsymbol{x})$, $\lie{\boldsymbol{f}}{\boldsymbol{h}}$ and $\lie{\boldsymbol{g}}{\boldsymbol{h}}$ are the directional Lie derivatives of function $\boldsymbol{h}(\boldsymbol{x})$ in the directions of $\boldsymbol{f}(\boldsymbol{x})$ and $\boldsymbol{g}(\boldsymbol{x})$ respectively [\cite{Isidori:Book}], and $\mathbb{S}_i$ is the output-controllable subspace, where the Lie derivative in Eq.~(\ref{eq:RO_2}) does not vanish,
\begin{equation}
\mathbb{S}_i~=~\Bigl\{~\boldsymbol{x} \onR{n}~~\left|~~\lie{\boldsymbol{g}}{\lies{\boldsymbol{f}}{h_i(\boldsymbol{x})}{r_i-1}}~\ne~0\right.\Bigr\}.
\label{uncontrollableSpace}
\end{equation}
The relative order tells us that the $r_i^{\rm th}$-derivative of output $y_i$ can be explicitly controlled. The region where $\mathbb{S}_i$ vanishes, entails the system loses relative order and hence the $r_i^{\rm th}$-derivative is no longer controllable (at least explicitly). For a controllable system, $r_i \le n$. Following the normalization  procedure, we get the output controllable subspace,
\begin{subeqnarray}\label{eq:xi}
  \xi_{i,1} &=& y_i = h_i(\boldsymbol{x}) = \lies{\boldsymbol{f}}{h_i(\boldsymbol{x})}{0}, \\[-2.5mm]
              &\dots& \nonumber\\[-2.5mm]
  \xi_{i,j} &=& y_i^{[j-1]} = \dot {\xi}_{i,j-1} = \lies{\boldsymbol{f}}{h_i(\boldsymbol{x})}{j-1} 
                              \quad ~{\rm for} \quad 1<j<r_i,\\[-2.5mm]
              &\dots& \nonumber\\[-2.5mm]
            &~& y_i^{[r_i]}   = \dot {\xi}_{i,r_i}   = \lies{\boldsymbol{f}}{h_i(\boldsymbol{x})}{r_i} 
                              + \lie{\boldsymbol{g}}{\lies{\boldsymbol{f}}{h_i(\boldsymbol{x})}{r_i-1}}\boldsymbol{u}(t).
\end{subeqnarray}
The output space variables $\boldsymbol{\xi}_i = [\xi_{i,1}, \xi_{i,2}, \dots, \xi_{i,r_i-1}]^T\onR{r_i}$ represent the phase-space for the $i$-th output. For instance, the output phase-space for locomotion control could be chosen to be the robot's center-of-mass. We can concatenate all $\boldsymbol{\xi}_i$, $\forall i=1,2,\dots,m$ into a single phase-space vector $\boldsymbol{\xi} = [\boldsymbol{\xi}_1^T, \boldsymbol{\xi}_2^T, \dots, \boldsymbol{\xi}_m^T]^T\onR{r}$, where $r=\sum{r_i}$.  
For phase-space motion, we define a phase-space manifold $\mathcal{M}_i$ for each task-space output $y_i$ in terms of its phase-space vector $\boldsymbol{\xi}_i$,
\begin{equation}
 \mathcal{M}_i~=~\Bigl\{~\boldsymbol{\xi}_i \in \mathbb{R}^{r_i}\subset\mathbb{R}^n~~\left|~~\sigma_i
             ~\define~\sigma_i(\boldsymbol{\xi}_i)~=~ 0\right.\Bigr\}, \label{eq:surface}
\end{equation}
where $\sigma_i$ is referred to as the $i^{\rm th}$ element of \cools{deviation vector}, which measures the deviation distance from the manifold $\mathcal{M}_i$ using a Riemannian metric. More details are shown in Appendix~\ref{sec:PSMBasics}. In order to be able to control this deviation, the order of the manifold is one less than the relative order of the $i^{\rm th}$-output, i.e., $r_i-1$. For most legged robots (not considering actuator dynamics), the relative order is $r=2$.
\section{Prismatic Inverted Pendulum Dynamics on a Parametric Surface}
\label{sec:model}
\begin{figure*}
 \centering
 \includegraphics[width=0.95\linewidth]{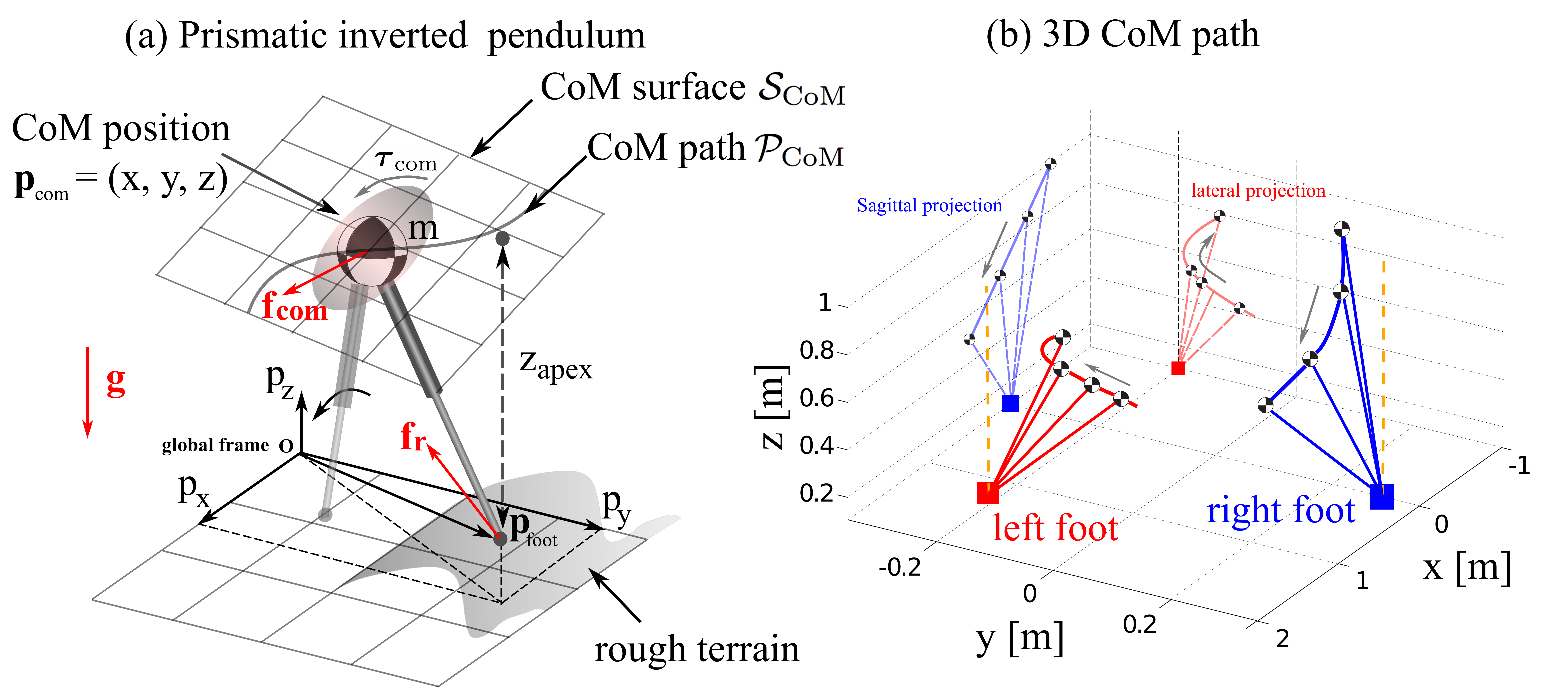}
 \caption{\captionsize 3D prismatic inverted pendulum model. (a) We define a prismatic inverted pendulum model with all of its mass located at its base while equipping it with a flywheel to generate moments. We restrict the movement of the center-of-mass to 3D planes (surfaces) $\mathcal{S}_{\rm CoM}$. Three red arrows represent the CoM inertial force $\boldsymbol{f}_{\rm com}$, the ground reaction force $\boldsymbol{f}_r$ and the gravity force $m \boldsymbol{g}$, respectively. These forces satisfy $\boldsymbol{f}_{\rm com} = \boldsymbol{f}_r - m \boldsymbol{g}$. (b) shows motions of pendulum dynamics restricted to a 3D plane. Note that, our study assumes time-varying leg length. The apex height $z_{\rm apex}$ is $1$ m.}
 \label{fig:Inverted_Pendulum}
\end{figure*}
The rigid body dynamics of point-foot bipedal robots during single contact resemble those of a simple inverted pendulum model (see Fig.~\ref{fig:Inverted_Pendulum}), as observed by studies in dynamic human walking [\cite{kuo2002energetics, matthis2013humans}].  In our case, our model consists of a prismatic massless joint with all the mass concentrated on the hip position [\cite{kajita2003biped, koolen2012capturability}], defined as the 3D CoM position $\boldsymbol{p}_{\rm com}=(x, y, z)^T$, and a flywheel spinning around it, with orientation angles $\boldsymbol{R} = (\phi, \theta, \psi)^T$. Various human walking [\cite{kuo1992human}] and balancing [\cite{winter1995human}] studies emphasize that controlling the centroidal angular momentum can improve CoM tracking, balancing and recovering from disturbances. This rule of thumb has been recently adopted in dynamic robot locomotion [\mbox{\cite{pratt2006capture, komura2005feedback, yun2011momentum}}]. The objective of locomotion is to move the robot's CoM along a certain path from point A to B over a terrain. As such, we first specify a 3D surface $\mathcal{S}_{\rm CoM}$, where the CoM path will evolve, which in general, may have the following implicit form,
\begin{align}\label{eq:genericSurface}
 \mathcal{S}_{\rm CoM}~=~\Bigl\{\boldsymbol{p}_{\rm com} \onR{3}
            ~~\left|~~\psi_{\rm CoM}(\boldsymbol{p}_{\rm com})~=~ 0\right.\Bigr\}.
\end{align}
This surface can be specified in various ways, such as via piecewise arc geometries [\cite{mordatch2010robust, srinivasan2006computer}] and spline functions [\cite{morisawa2005pattern, englsberger2015biologically}]. Once the controller is designed, the CoM will follow a concrete trajectory $\mathcal{P}_{\rm CoM}$ (as shown in Fig.~\ref{fig:Inverted_Pendulum}), which we specify via piecewise splines described by a progression variable $\zeta \in [\zeta_{j-1}, \zeta_j]$, for the $j^{\rm th}$ path manifold, i.e.
\begin{eqnarray}\label{eq:splineManifold}
 \mathcal{P}_{\rm CoM} &= \bigcup_{j}^{} \mathcal{P}_{{\rm CoM}_j} 
   \subseteq \mathcal{S}_{\rm CoM},\hspace{0.2in} 
   \mathcal{P}_{{\rm CoM}_j} &= 
   \Bigl\{~\boldsymbol{p}_{{\rm com}_j} \in \mathbb{R}^3
         ~~\left|~~\boldsymbol{p}_{{\rm com}_j} = \sum_{k = 0}^{n_p} \boldsymbol{a}_{jk} \zeta^k\right.\Bigr\},
\end{eqnarray}
where $n_p$ is the degree of the spline.  The progression variable $\zeta$ is therefore the arc length along the CoM path acting as the Riemannian metric for distance. Each $\boldsymbol{a}_{jk} \in \mathbb{R}^3$ is the coefficient vector of $k^{\rm th}$ order. To guarantee spline smoothness, $\boldsymbol{p}_{\rm com}$ requires the connection points, i.e. the knots at progression instant $\zeta_j$, to be $\Cont{n_p-1}$ continuous, 
\begin{align}
\boldsymbol{p}^{[l]}_{{\rm com}_j}(\zeta_j) 
  = \dfrac{d^l\boldsymbol{p}_{{\rm com}_j}}{d\zeta^l}(\zeta_j) 
  = \boldsymbol{p}^{[l]}_{{\rm com}_{j+1}}(\zeta_j), 
  \quad \forall \; 0 \leq l \leq n_p-1.
\end{align}
The purpose of introducing the CoM manifold $\mathcal{S}_{\rm CoM}$ is to constrain CoM motions on surfaces that are designed to conform to generic terrains while allowing free motion within this surface. Following a concrete CoM path is achieved by selecting proper control inputs as we will see further down. The CoM path manifold $\mathcal{P}_{\rm CoM}$ (embedded in $\mathcal{S}_{\rm CoM}$), can be represented in the phase-space $\boldsymbol{\xi}$.  We call this representation as the \emph{phase-space manifold} and define it as,
\begin{align}\label{eq:Mcom}
 \mathcal{M}_{\rm CoM} = \bigcup_{j}^{} \mathcal{M}_{{\rm CoM}_j}, 
   \qquad 
    \mathcal{M}_{{\rm CoM}_j} = \Bigl\{~\boldsymbol{\xi} \in \mathbb{R}^6
            ~~\left|~~\sigma_j(\boldsymbol{\xi})~=~ 0\right.\Bigr\},
\end{align}
which is the main manifold used in our planning and control methods.  The function $\sigma_j(\boldsymbol{\xi})$ is an implicit function in the phase-space measuring the distance to the manifold.

The centroidal momentum dynamics can be characterized via formulating the dynamic balance of moments around the system's centroidal point.
\begin{align}\label{eq:linearmomentum}
\boldsymbol{\dot{l}} = m \boldsymbol{\ddot{p}}_{\rm com} = \sum_i^{N_c} \boldsymbol{f}_{r_i} - m \boldsymbol{g}, \quad \boldsymbol{\dot{k}} = \boldsymbol{\tau}_{\rm com} = \sum_i^{N_c} (\boldsymbol{p}_{{\rm foot}_i} - \boldsymbol{p}_{\rm com}) \times \boldsymbol{f}_{r_i} +  \boldsymbol{\tau}_i,
\end{align}
where $\boldsymbol{l} \in \mathbb{R}^3$ and $\boldsymbol{k} \in \mathbb{R}^3$ represent the centroidal linear and angular momenta, respectively. $\boldsymbol{f}_{r_i} \in \mathbb{R}^3$ is the $i^{\rm th}$ ground reaction force, $m$ is the total mass of the robot, $\boldsymbol{g} =(0, 0, g)^T$ corresponds to the gravity field, $\boldsymbol{f}_{\rm com} = m \boldsymbol{\ddot{p}}_{\rm com} = m (\ddot{x}, \ddot{y}, \ddot{z})^T$ is the vector of center-of-mass inertial forces. The first equation above represents the rate of change of linear momentum being equal to the total action of linear contact forces minus gravitational forces. $\boldsymbol{\tau}_{\rm com} = (\tau_x, \tau_y, \tau_z)^T$ is the vector of angular moments of the modeled flywheel attached to the inverted pendulum. $\boldsymbol{p}_{{\rm foot}_i} = (p_{i,x}, p_{i,y}, p_{i,z})^T$ is the position of the $i^{\rm th}$ foot contact contact. $\boldsymbol{\tau}_{r_i} \in \mathbb{R}^3$ is the $i^{\rm th}$ contact torque vector. The second equation above represents the rate of change of angular momentum being equal to the sum of the torques generated by total action of contact wrenches projected to the CoM. 
In our case, $\boldsymbol{\tau}_{r_i} = \boldsymbol{0}$ due to having point-foot contacts.
\subsection{Single Contact Dynamics}
\label{subsec:singlecontact}
For our single contact scenario, the sum of moments, with respect to the global reference frame (see Fig.~\ref{fig:Inverted_Pendulum}) is 
\begin{equation}\label{eq:balance-2}
-\boldsymbol{p}_{\rm foot} \times \boldsymbol{f}_{r} + \boldsymbol{p}_{\rm com} \times \Big( \boldsymbol{f}_{\rm com} + m \, \boldsymbol{g} \Big) +
\boldsymbol{\tau}_{\rm com} = 0,
\end{equation}
The system's linear force equilibrium can be formulated as $\boldsymbol{f}_{r} = \boldsymbol{f}_{\rm com} + m \, \boldsymbol{g}$, allowing us to simplify Eq.~(\ref{eq:balance-2}) to
\begin{equation}\label{eq:balance-3}
\Big(\boldsymbol{p}_{\rm com} - \boldsymbol{p}_{\rm foot} \Big) \times (\boldsymbol{f}_{\rm com} + m \, \boldsymbol{g}) = -\boldsymbol{\tau}_{\rm com}.
\end{equation}
For our purposes, we consider only the class of prismatic inverted pendulums whose center-of-mass is restricted to a path surface $\mathcal{S}_{\rm CoM}$ as indicated in Eq.~(\ref{eq:genericSurface}). In our previous work [\cite{zhao2012three}], we had assumed that the CoM height is invariant to lateral coordinate changes. To remove this restriction, we model a ``true'' 3D plane (i.e., both sagittal and lateral variations of the CoM height are permitted).
%
\noindent A detailed definition of this 3D plane will be presented in Eq. (\ref{eq:linear2DSurface}). This type of model with varying height is called the Prismatic Inverted Pendulum Model (PIPM) [\cite{zhao2012three}].

Previously we had observed that the CoM behavior during human walking approximately follows the slope of terrains [\cite{zhao2012three, zhao2016humanwalking}]. Based on this observation, we design piecewise CoM planes approximating terrain slopes and adjust the CoM planes according to the acceleration or deceleration phases.

A variety of CoM trajectory design methods have been proposed over the years. The Capture Point method in [\cite{koolen2012capturability}] assumes a constant CoM height. Closely related to us, [\cite{kajita2003biped}] constrains the CoM motion to a 3D plane. However, our focus is on robust hybrid control. Designing CoM trajectories with a varying CoM height are described in [\cite{englsberger2015three, koolen2016balance}]. The work described in [\cite{ramos2015generalizations}] proposed a Nonlinear Inverted Pendulum model and the CoM path is extended to a parabola, but it focuses on planar locomotion.

Considering as our output state the CoM positions $\boldsymbol{p}_{\rm com}$, the state space $\boldsymbol{\xi}= (\boldsymbol{p}_{\rm com}^T, \boldsymbol{\dot{p}}_{\rm com}^T)^T = (x,y,z,\dot{x},\dot{y},\dot{z})^T \in \Xi \subseteq \mathbb{R}^6$ is the phase-space vector, where $\Xi$ is the set of admissible CoM positions and velocities. Then from Eq. (\ref{eq:balance-3}) it can be shown that the prismatic inverted pendulum model for a $q^{\rm th}$ walking step, is simplified to the following control system
\begin{align}\label{eq:accel}
\dot{\boldsymbol{\xi}} = \boldsymbol{\mathcal{F}}(q, \boldsymbol{\xi}, \boldsymbol{u}) = 
 \begin{pmatrix}
  \dot{x}\\
  \dot{y}\\
  \dot{z}\\
  \omega_q^2 (x - x_{{\rm foot}_q}) - \frac{ \omega_q^2}{mg}( \tau_y + b_q \tau_z)\\[2mm]
  \omega_q^2 (y - y_{{\rm foot}_q}) - \frac{ \omega_q^2}{mg}(\tau_x + a_q \tau_z)\\[2mm]
 a_q\omega_q^2 (x - x_{{\rm foot}_q}) + b_q\omega_q^2 (y - y_{{\rm foot}_q}) - \frac{\omega_q^2}{mg}(a_q \tau_y + b_q \tau_x + 2 a_q b_q \tau_z)
 \end{pmatrix},
\end{align}
where the phase-space asymptotic slope is defined as
\begin{align}\label{eq:CoMaccelRate}
\omega_q = \sqrt{\dfrac{g}{z_{{\rm apex}_q}}}, \;{\rm with}\; z_{{\rm apex}_q} = (a_q \cdot x_{{\rm foot}_q} + b_q \cdot y_{{\rm foot}_q} + c_q - z_{{\rm foot}_q}),
\end{align}
where $g$ is the gravity constant. $a_q$ and $b_q$ are the slope coefficients while $c_q$ is the constant bias for the linear CoM path surfaces that we consider, i.e.
\begin{align}\label{eq:linear2DSurface}
\mathcal{S}_{{\rm CoM}_q} = \left\{(x,y,z) \onR{3} \quad\Big|\quad 
                            \psi_{{\rm CoM}_q}(x,y,z) = z - a_q x - b_q y  - c_q = 0\right\}.
\end{align}
\begin{figure}[t]
 \centering
 \includegraphics[width=0.85\linewidth]{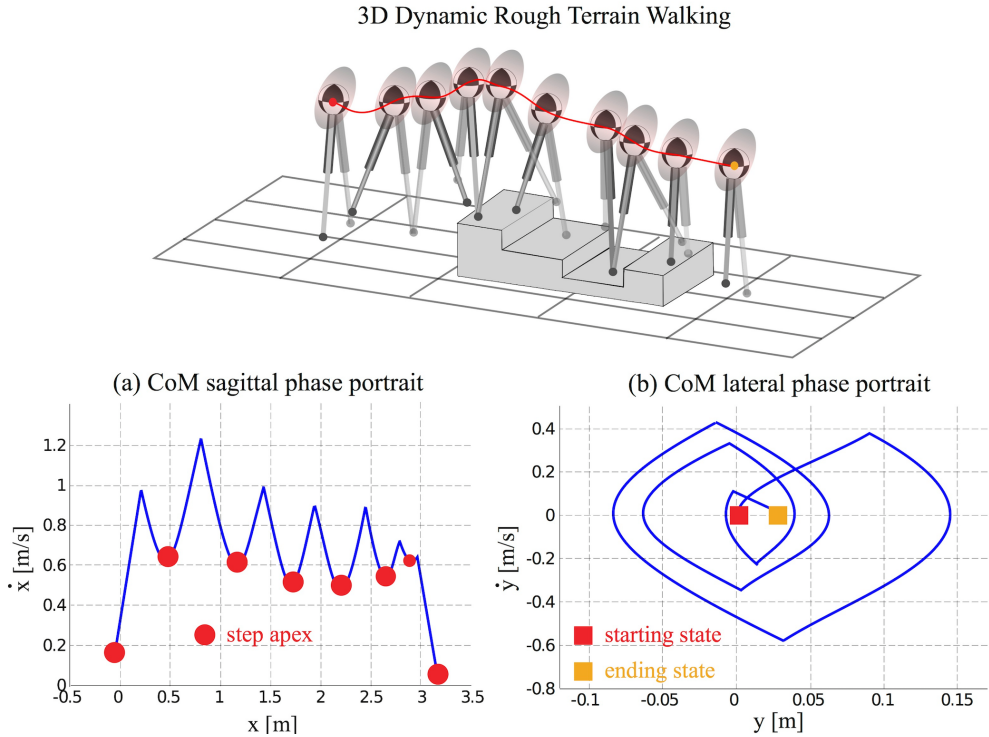}
 \caption{\captionsize 3D phase-space planning. Given step apex conditions, single contact dynamics generate the valley profiles shown in (a). (b) depicts a similar strategy in the lateral plane. However, since foot transitions have already been determined, what is left is to determine foot lateral positions. This is done so the lateral CoM behavior shown in (b) follows a semi-periodic trajectory that is bounded within a closed region.}
 \label{fig:3Dsinglecontact}
\end{figure}
\noindent Detailed derivations of Eq.~(\ref{eq:accel}) are provided in Appendix~\ref{sec:derivationEq}. $z_{{\rm apex}_q}$ is the height of the CoM at the apex of its sagittal path ``$x$ direction'' as shown in Fig.~\ref{fig:Inverted_Pendulum} such that it corresponds to the vertical distance between the CoM and the location of the foot contact at the instant when the CoM is on the top of the foot location. $\boldsymbol{\mathcal{F}}$ represents a vector field of inverted pendulum dynamics, which is assumed to be infinitely continuous and differentiable (i.e., $\mathcal{C}^{\infty}$) in the domain $\mathcal{D}(\boldsymbol{\xi})$ and globally Lipschitz in $\Xi$, given fixed control inputs.
In general, there will be a hybrid control policy $\boldsymbol{u}=\boldsymbol{\pi}(q,\boldsymbol{\xi})$ defined by the control variables $\boldsymbol{u} = (\omega_q, \boldsymbol{\tau}_{{\rm com}_q}, \boldsymbol{p}_{{\rm foot}_q})^T \in \mathcal{U}$,  where $\mathcal{U}$ is a set of admissible control values. The sets $\Xi$ and $\mathcal{U}$ are assumed to be compact. Our design of foot placement algorithms further into the paper will guarantee the tracking of keyframe states within a specified tolerance.

\begin{definition}[Sagittal and Lateral Apex]\label{def:sagapex}
The sagittal apex occurs when the projection of the CoM is equal to the location of the foot contact in the system's sagittal axis. The lateral apex is defined as the CoM lateral position when the sagittal apex occurs.
\end{definition}
The concept of apex state has been widely used in locomotion using the SLIP model to represent the state at the highest CoM position during the flight phase. In our case, we define keyframes as apex states during walking and use them as inputs to generate non-periodic trajectories, suitable for variable terrain heights. More details about contact switching strategy will be introduced in Section~\ref{subsec:StepTrans}.

From a physical perspective, the continuous control input $\omega_q$ in Eq.~(\ref{eq:accel}) is equivalent to modulating the leg force magnitude, since it can directly change the CoM accelerations by modulating the apex height $z_{\rm apex}$ as shown in Eq. (\ref{eq:CoMaccelRate}).
Using piecewise-linear CoM planes can cause sharp changes on the phase trajectories when the center-of-mass switches among multiple steps. To mitigate this problem, we will employ multi-contact strategies to smooth CoM trajectories. The multi-contact phase controls the CoM behavior when two feet are on the ground. 
Another point worthy to note is that although the CoM plane is piecewise-linear, the CoM path constrained within this plane is a 3D curve instead of piecewise-linear lines. Also, the sagittal and lateral phase space trajectories end up being continuous and smooth between contact phases.

\subsection{Multi-Contact Dynamics}
\label{subsec:dualContact}
We introduce a multi-contact model and briefly present how to modulate the internal tension force such that the friction cone constraints are satisfied. Differently from the 2D planar scenario described in [\cite{sentis2011humanoids}], this study focuses on the 3D walking. Based on the virtual linkage model [\cite{sentis2010compliant}], the multi-contact dynamics can be approximated by a multi-contact/grasp matrix as,
\begin{equation}\label{eq:CoMintension}
\begin{pmatrix}
\boldsymbol{f}_{\rm com}+m \boldsymbol{g}\\
\boldsymbol{\tau}_{\rm com}\\
f_{\rm int}
\end{pmatrix}
=[\boldsymbol{G}]_{7\times6}
\begin{pmatrix}
\boldsymbol{f}_{r_{\rm left}}\\
\boldsymbol{f}_{r_{\rm right}}\\
\end{pmatrix},
\end{equation}
where $f_{\rm int}$ represents the internal force along the line of dual feet contact points. $[\boldsymbol{G}]_{7\times6}$ is the multi-contact/grasp matrix defined as 

\begin{equation}\label{eq:GraspMatrix}
[\boldsymbol{G}]_{7\times6} = 
\begin{pmatrix}
[\boldsymbol{W}_{\rm com}]_{6\times6}\\[2mm]
[\boldsymbol{W}_{\rm int}]_{1\times6}
\end{pmatrix},
\end{equation}
By inverting Eq.~(\ref{eq:CoMintension}), we can solve the ground reaction forces for given center-of-mass inertial forces and moments, and a desired internal force trajectory
\begin{equation}\label{eq:reactionForce}
\begin{pmatrix}
\boldsymbol{f}_{r_{\rm left}}\\
\boldsymbol{f}_{r_{\rm right}}\\
\end{pmatrix}
= [\boldsymbol{G}]^+_{7\times6}
\begin{pmatrix}
\boldsymbol{f}_{\rm com}+m \boldsymbol{g}\\
\boldsymbol{\tau}_{\rm com}\\
f_{\rm int}
\end{pmatrix} = \boldsymbol{G}_f (\boldsymbol{f}_{\rm com}+m \boldsymbol{g}) + \boldsymbol{G}_{\tau} \boldsymbol{\tau}_{\rm com} + \boldsymbol{G}_{\rm int} f_{\rm int}.
\end{equation}
Matrices $\boldsymbol{W}_{\rm com}, \boldsymbol{W}_{\rm int}, \boldsymbol{G}_f, \boldsymbol{G}_{\tau}$ and $\boldsymbol{G}_{\rm int}$ are outlined in [\cite{sentis2011humanoids}]. Different from the method of simultaneously controlling CoM and internal force behaviors described in [\cite{sentis2010compliant}], this study implements the 
following procedure: (i) we first design a multi-contact phase trajectory between single contact phases that satisfies CoM position, velocity, and acceleration boundary conditions. The duration of the multi-contact phase and boundary velocities can be chosen by the designer. A similar multi-contact transition strategy, named "Continuous Double Support" (CDS) trajectory generator, was proposed in [\cite{englsberger2014trajectory}] to achieve smooth "Enhanced Centroidal Moment Pivot" (eCMP) and leg force profiles. We had ourselves previously used this strategy in [\cite{zhao2012three}]. (ii) Using Eq.~(\ref{eq:reactionForce}) and the CoM inertial wrench trajectory, we solve for the internal forces such that they satisfy friction constraints.
\subsection{Nominal Phase-Space Trajectory Generation}
We will first focus on the generation of trajectories in the sagittal plane of the robot's walking reference. Sagittal dynamics are represented \-- ignoring for simplicity, the discrete variable $q$, \-- in the first and fourth row of the system of Eq. (\ref{eq:accel}), i.e.
\begin{equation}\label{eq:dynx}
\boldsymbol{\dot x} = \boldsymbol{\mathcal{F}_x}(\boldsymbol{x}, \boldsymbol{u_x})=
\begin{pmatrix}
\dot x\\
\omega^2 (x - x_{\rm foot}) - \frac{ \omega^2}{mg}(\tau_y + b_q \tau_z)
\end{pmatrix}.
\end{equation}
This system would be fully controllable if its control inputs $\boldsymbol{u_x} = (\omega, \tau_y, \tau_z, x_{\rm foot})^T$ were unconstrained. However, their limited range urges us to first consider the motion trajectories under nominal values (i.e. open loop). As we previously motivated in Eq. (\ref{eq:linear2DSurface}), the path manifold $\mathcal{S}_{\rm CoM}$ is defined a priori to conform to the terrains via simple heuristic methods previously described in \mbox{[\cite{zhao2012three, sentis2011humanoids}]}. From Eq. (\ref{eq:CoMaccelRate}), once the path manifold is defined and for known contact locations, the set of phase-space asymptotic slopes $\omega$ is also known from Eq. (\ref{eq:CoMaccelRate}). For simplicity, the nominal flywheel moments are designed to be null, i.e. $\tau_y = 0, \tau_z = 0$. Under these considerations, the following algorithm produces nominal phase-space trajectories of the robot's center-of-mass in the sagittal direction of reference:\\

\noindent\textbf{Algorithm 1.} Nominal Phase-Space Trajectory Generation.
\begin{itemize}
\item[] \textbf{Input:}
\item[] \textbf{(i)}: $\mathcal{S}_{\rm CoM} \leftarrow \{\mathcal{S}_{{\rm CoM}_q} : [\zeta_{q-1}, \zeta_q] \rightarrow  \mathbb{R}^3, \; \forall q = 1, \ldots, N\}$
\item[] \textbf{(ii)}: $x_{{\rm foot}} \leftarrow \{x_{{\rm foot}_1}, x_{{\rm foot}_2}, \ldots, x_{{\rm foot}_{N}}\}$
\item[] \textbf{(iii)}: $\dot{x}_{{\rm apex}} \leftarrow \{\dot{x}_{{\rm apex}_1}, \dot{x}_{{\rm apex}_2}, \ldots, \dot{x}_{{\rm apex}_N}\}$
\item[] \textbf{(iv)}: $(\tau_y(t), \tau_z(t)) \leftarrow {\bf 0}$

\item[] \textbf{Operation:} 
\item[] \textbf{(i)}: $\omega \coloneqq \{\omega_1, \omega_2, \ldots, \omega_N\}$ is assigned via Eqs.~(\ref{eq:CoMaccelRate}) and~(\ref{eq:linear2DSurface})
\item[] \textbf{(ii)}: $(x(t), \dot x(t), \ddot x(t)) \leftarrow {\rm PIPM} (\omega, \tau_y(t), \tau_z(t), x_{\rm foot})$ via Eq.~(\ref{eq:dynx}) and the analytical solution proposed in Eq.~(\ref{eq:simplifiedPSM})

\textbf{Output:}
\item[] Phase-space trajectories $\mathcal{M}_{\rm CoM} := \bigcup_q^{} \mathcal{M}_{{\rm CoM}_q}$
\end{itemize}
The reader should refer to Fig.~\ref{fig:OverallPlanningStructure} to see the end-to-end planning and control process of the proposed locomotion methodology. It is specially important to understand that the desired CoM surfaces, nominal foot positions, keyframe states, and zero flywheel torques are provided a priori by the designer. A similar algorithm can be designed to generate trajectories in the lateral CoM direction via Eq.~(\ref{eq:accel}). Here, $\dot x_{\rm apex}$ represents the desired apex velocity. PIPM represents the prismatic inverted pendulum model defined in Eq. (\ref{eq:dynx}), used to derive CoM accelerations. Trajectories for multiple steps of a locomotion sequence on rough terrain are simulated using this process in Fig.~\ref{fig:3Dsinglecontact}.
\section{Hybrid Phase-Space Motion Planning}
\label{sec:3d-moplan}
In this section we propose a robust hybrid automaton [\cite{branicky1998unified, frazzoli2001robust, lygeros2008hybrid}] with the following key features: (i) an invariant bundle and a recoverability bundle to characterize control robustness, i.e., the bundle of attractiveness, and (ii) a non-periodic step transition strategy based on the previously described phase-space trajectories.  The hybrid automaton governs the planner's behavior across multiple walking steps and as such constitutes the theoretical core of our proposed phase-space locomotion planning method.

We continue our focus on sagittal plane dynamics first, then extend the planner to all directions. For practical purposes we will use the symbol $\boldsymbol{x} = \{x, \dot x\}$ to describe the sagittal state space associated with CoM dynamics. Note that this symbol represents now the output dynamics outlined in Eq. (\ref{eq:xi}) instead of the robot plant of Eq. (\ref{eqs:plant}). Eq. (\ref{eq:Mcom}) can thus be re-considered in the output space as $\mathcal{M}_{{\rm CoM}_q} = \left\{\boldsymbol{x} \in \mathcal{X} ~\big|~ \sigma_q(\boldsymbol{x}) = 0 \right\}$
 where $\sigma_q$ represents the deviation from the manifold $\mathcal{M}_{{\rm CoM}_q}$.
\begin{definition}[Invariant Bundle]\label{def:invariantBundle}
A set $\mathcal{B}_q(\epsilon)$ is an invariant bundle if, given $\boldsymbol{x}_{\zeta_0} \in \mathcal{B}_q(\epsilon)$, with $\zeta_0  \in \mathbb{R}_{\geq 0}$, and an increment $\epsilon>0$, $\boldsymbol{x}_{\zeta}$ stays within an $\epsilon$-bounded region of $\mathcal{M}_{{\rm CoM}_q}$,
\begin{align}\label{eq:path-manifold}
  \mathcal{B}_q(\epsilon) = \left\{\boldsymbol{x} \in \mathcal{X} 
   \quad\Big|\quad \left|\sigma_q(\boldsymbol{x})\right| \le \epsilon \right\},
 \end{align}
where $\zeta_0$ and $\zeta$ are initial and current phase progression variables, respectively and $\boldsymbol{x}_{\zeta_0}$ is an initial condition. 
\end{definition}
\noindent This type of bundle characterizes ``robust subspaces'' (i.e., ``tubes'') around nominal phase-space trajectories which guarantee that, if the state initializes within this space, it will remain on it.

\begin{definition}[Finite-Phase Recoverability Bundle]\label{def:recoverBundle}
 The invariant bundle $\mathcal{B}_q(\epsilon)$ around a phase-space manifold $\mathcal{M}_{\rm CoM_q}$ has a finite-phase recoverability bundle, $\mathcal{R}_q(\epsilon,\zeta_f)  \subseteq \mathcal{X}$ defined as,
 \begin{align}
   \mathcal{R}_q(\epsilon,\zeta_f) = 
      \left\{ 
        \boldsymbol{x}_{\zeta} \in \mathcal{X}, \quad \zeta_0\le\zeta\le \zeta_f
        \quad\Big|\quad 
        \boldsymbol{x}_{\zeta_f} \in \mathcal{B}_q(\epsilon)
       \right\}.
 \end{align}
\end{definition}

\noindent Note that this bundle assumes the existence of a control policy for recoverability. We will later use these metrics to characterize robustness of our controllers. Visualization of the invariant and recoverability bundles are shown in Fig. \ref{fig:Detailed-2Half-Steps2}.
\begin{figure}
\begin{center}
 \moveFigureUp{5mm}{\includegraphics[width=0.49\textwidth]{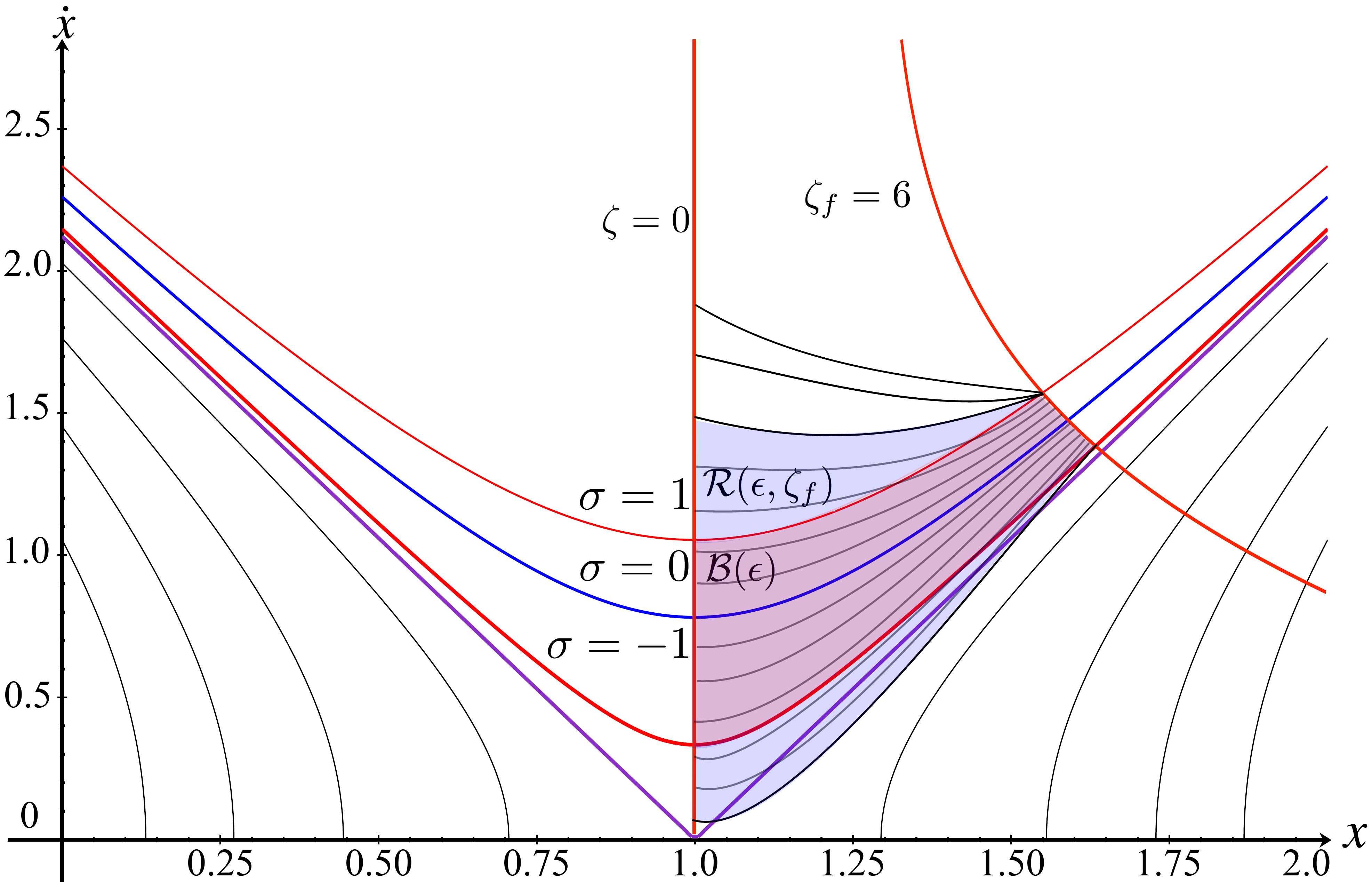}}
 \includegraphics[width=0.48\linewidth]{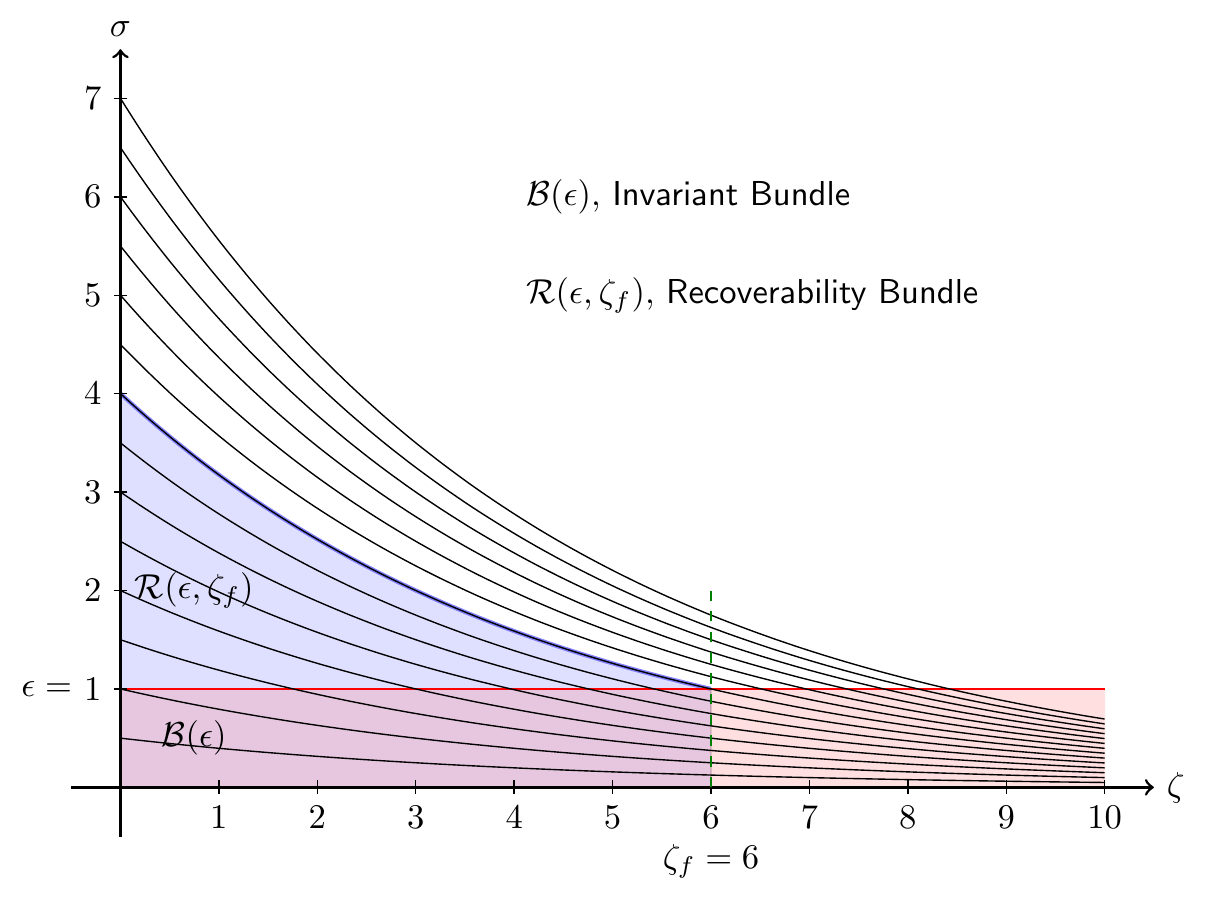}
 \caption{Mapping between Cartesian phase-space and $\zeta-\sigma$ space. The two subfigures show the invariant bundle $\mathcal{B}(\epsilon)$ (shown in red) and the recoverability bundle $\mathcal{R}(\epsilon,\zeta_f)$ (shown in blue) in different spaces. The left subfigure shows Cartesian phase-space while the right one shows $\zeta-\sigma$ space ($\sigma$ denotes the phase-space manifold as defined in Eq.~(\ref{eq:simplifiedPSM}) of Section~\ref{sec:manifold}). The figure on the right only shows positive bundles of $\sigma$ while the negative ones are symmetric about the $\zeta$ axis.  Since this is a Euclidean space, the manifold for a constant $\sigma$ is a horizontal line and constant values of $\zeta$ are vertical lines. If the condition when we expect the transition to occur is at $\zeta=\zeta_f$, the recoverability bundle shows the range of perturbations that can be tolerated at different $\zeta$ -- the system recovers to the invariant bundle before $\zeta_f$.}
 \label{fig:Detailed-2Half-Steps2}
\end{center}
\end{figure}
\subsection{Hybrid Locomotion Automaton}
\label{subsec:RHLA}
Legged locomotion is a naturally hybrid control system, with both continuous and discrete dynamics. We define discrete states $\mathcal{Q} = \{q_{l}, q_{r}, q_{s}\}$ representing the contact of the left foot $q_{l}$, the right foot $q_{r}$ or dual feet $q_{s}$ (stance) as shown in Fig.~\ref{fig:Automaton}.  In each mode, the continuous dynamics are represented by Eq.~(\ref{eq:dynx}) and over a domain $\mathcal{D}(q)$, except for the dual contact phase, $q_s$, where we use the multi-contact dynamic procedure defined in Subsection~\ref{subsec:dualContact}.  We characterize the hybrid system as a directed graph $(\mathcal{Q},\mathcal{E})$ (see Fig.~\ref{fig:Automaton}), with nodes represented by $q\in\mathcal{Q}$ and \cools{edges} represented by $\mathcal{E}(q,q+1)$, that characterize the transitions between nodes. The transitions between states can be grouped into eight classes depending on whether a vector field or variable changes discontinuously and what the trigger mechanism is.  Table~\ref{table:SwitchingGuard} shows the transition classification. 
\begin{table}[ht]
 \caption{Transition Classifications.  System vector field is $\boldsymbol{\mathcal{F}_x}$ as shown in Eq.~(\ref{eq:dynx}).}
  \begin{center}\vspace{-5mm}
   \begin{tabular}{c||c|c|c} \hline\hline
    {\bf Type}        & {\bf Transition}     & {\bf Switching} &          {\bf Jump}             \\     \hline \hline
    Autonomous  & $\Delta_a^{[\tau]}$ 
                & $\quad\boldsymbol{\mathcal{F}_x}^{+}(\cdot\;,\; \cdot\;, \boldsymbol{x}^{+}, \;\cdot\;,\; \cdot) \leftarrow \Delta_a^{[\delta_s]}(\boldsymbol{x}^{-})$
                & $\quad\boldsymbol{x}^{+} \leftarrow \Delta_a^{[\delta_j]}(\boldsymbol{x}^{-})$\\ 
    \hline 
    Controlled  & $\Delta_c^{[\tau]}$ 
                & $\quad\boldsymbol{\mathcal{F}_x}^{+}(\cdot\;,\; \cdot\;,\; \cdot\;,\; \boldsymbol{u_x}^{+},\; \cdot) \leftarrow \Delta_c^{[\delta_s]}(\boldsymbol{u_x}^{-})$
                & $\quad\boldsymbol{u_x}^{+} \leftarrow \Delta_c^{[\delta_j]}(\boldsymbol{u_x}^{-})$\\ 
    \hline 
    ``Timed''     & $\Delta_t^{[\tau]}$ 
                & $\quad\boldsymbol{\mathcal{F}_x}^{+}(\zeta, \;\cdot\;,\; \cdot\;,\; \cdot\;, \;\cdot\;, \;\cdot) \leftarrow \Delta_t^{[\delta_s]}(\zeta)$
                & $\quad\boldsymbol{x}^{+} \leftarrow \Delta_t^{[\delta_j]}(\zeta)$\\ 
    \hline 
    ``Disturbed''    & $\Delta_d^{[\tau]}$ 
                & $\quad\boldsymbol{\mathcal{F}_x}^{+}(\cdot\;,\; \cdot\;,\; \cdot\;, \;\cdot\;, w_d) \leftarrow \Delta_d^{[\delta_s]}(w_d)$
                & $\quad\boldsymbol{x}^{+} \leftarrow \Delta_d^{[\delta_j]}(w_d)$\\ 
    \hline 
   \end{tabular}
  \end{center}
  \label{table:SwitchingGuard}
\end{table}

\noindent The hybrid automaton state is given by: $\boldsymbol{s}=(\zeta, q, \boldsymbol{x}^T)^T$.  $\tau\in\{\delta_s,\delta_j\}$ represents the ``switching'' or ``jump'' transition types, respectively. $\mu \in \{a, c, t, d\}$ represents the ``autonomous", ``controlled", ``timed" and ``disturbed" transitions, respectively. The transition map $\Delta_{\mu}^{[\tau]}(\cdot)$ is described in further details in Appnedix~\ref{sec:RHAutomaton}. Other details for these types of transitions can be found in [\cite{branicky1998unified}]. The condition that triggers the type of \cool{event} (switching or jump) is determined by a \cools{guard} $\mathcal{G}(q, q+1)$ for the particular edge $\mathcal{E}(q,q+1)$. Given these preliminaries, let us formulate a robust hybrid automaton to mathematically support our locomotion planner. 
\begin{figure}[t]
 \centering
   \includegraphics[width=.8\linewidth]{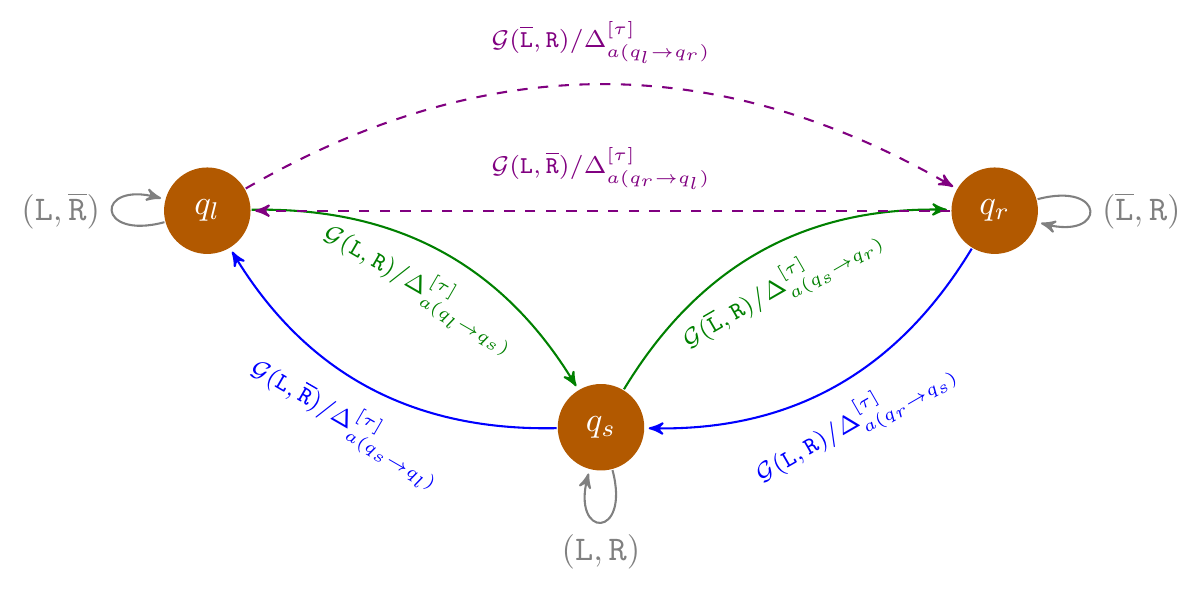}
 \caption{\captionsize This figure shows the hybrid locomotion automaton for a biped walking process. This automaton has three generic discrete modes $\mathcal{Q}=\{q_l,q_s,q_r\}$, that represent when the robot is in left leg contact ($q_l$), in right leg contact ($q_r$), and in dual stance contact ($q_s$), respectively.  Shown in the edges are the guard $\mathcal{G}(q, q+1)$ and the transition map $\Delta_{a(q\rightarrow q+1)}^{[\tau]}$. This locomotion automaton has non-periodic mode transitions.}
\label{fig:Automaton}
\end{figure}
\begin{definition}\label{def:PSRHA}
A phase-space robust hybrid automaton is a dynamical system, described by a $n$-tuple 
\begin{equation}\label{eq:PSRH}
{\rm PSRHA} \coloneqq (\zeta, \mathcal{Q},\mathcal{X}, \mathcal{U}, \mathcal{W}, \mathcal{F}, \mathcal{I},\mathcal{D}, \mathcal{R}, \mathcal{B}, \mathcal{E}, \mathcal{G}, \mathcal{T}, \Delta),
\end{equation}
\end{definition}
\noindent where $\zeta$ is the previously defined phase-space progression variable, $\mathcal{Q}$ is the set of discrete states, $\mathcal{X}$ is the set of continuous states, $\mathcal{U}$ is the set of control inputs, $\mathcal{W}$ is the set of disturbances, $\mathcal{F}$ is the vector field, $\mathcal{I}$ is the initial condition, $\mathcal{D}$ is the domain, $\mathcal{R}$ is the collection of recoverability bundles, $\mathcal{B}$ is the collection of invariant bundles, $\mathcal{E}\coloneqq \mathcal{Q}\times\mathcal{Q}$ is the edge, $\mathcal{G}: \mathcal{Q}\times\mathcal{Q}\rightarrow2^\mathcal{X}$ is the \cools{guard}, $\mathcal{T}$ is the transition termination set, and $\Delta$ is the transition map. More detailed definitions of these symbols are provided in Appendix~\ref{sec:RHAutomaton}, including arguments and subscripts. This automaton will be used to represent non-periodic trajectories since our planning process focuses on walking over irregular and disjointed terrain. A directed diagram of this non-periodic automaton is shown in Fig.~\ref{fig:Automaton}.

To the best of the authors' knowledge, this is the first formulation of a robust hybrid automaton used for dynamic locomotion. 
In Section~\ref{sec:optimization}, more details will be provided for how this automaton governs the hierarchical optimization sequence. To demonstrate the usefulness of this hybrid automaton, we provide an example of a planning sequence as follows.

For example, consider a phase-space trajectory that contains two consecutive walking steps $\mathcal{Q} = \{q, q+1\}$ (e.g., left and right feet). Given an initial condition $(\zeta_0, q, \boldsymbol{x}_{q}(\zeta_0)) \in \mathcal{I}$, the hybrid system will evolve following the dynamics of Eq.~(\ref{eq:accel}) as long as the continuous state $\boldsymbol{x}_{q}$ remains in $\mathcal{D}(q)$ (e.g., one foot in the ground the other one is swinging).  If at some point $\boldsymbol{x}_{q}$ reaches the guard $\mathcal{G}(q, q+1)$ (e.g., the right foot touches the ground) of some edge $\mathcal{E}(q, q+1)$, the discrete state will switch from $q$ to $q+1$.  At the same time the continuous dynamics will reset to some value via $\Delta^{[\tau]}_{a(q\rightarrow q+1)}$.  After this transition, the whole procedure repeats.
\subsection{Step Transition Strategy}
\label{subsec:StepTrans}
Step transitions can be analyzed as an idealize instantaneous contact change (as in Fig.~\ref{fig:PS-stepTransition}(a)) or being more realistic by having a multi-contact phase (as in Fig.~\ref{fig:PS-stepTransition}(b)). Below, we first create a strategy for the instantaneous contact switch, and then extend it to the multi-contact case in Appendix~\ref{sec:multicontact}.
\begin{definition}[A Phase-Space Walking Step]\label{def:simple-walking-step} A walking step, $q^{\rm th}$, is a phase-space trajectory in domain $\mathcal{D}(q)$, having two guards $\mathcal{G}(q-1, q)$ and $\mathcal{G}(q, q+1)$.
\end{definition}
\noindent To characterize the non-periodic mapping associated with walking in rough terrains, we define a keyframe map between keyframe states.

\begin{definition}[Keyframe Map of Non-Periodic Gaits]\label{def:snpg} We define the keyframe map of non-periodic gaits as a return map $\Phi$ that takes the robot's center-of-mass from one desired keyframe $(\dot{x}_{{\rm apex}_q}, x_{{\rm foot}_q}, \theta_{q})$ to the next one, and via the control input $\boldsymbol{u_x}$ i.e.
\begin{equation}\label{eq:mapping}
(\dot{x}_{{\rm apex}_{q+1}}, x_{{\rm foot}_{q+1}}, \theta_{q+1}) = \Phi(\dot{x}_{{\rm apex}_q}, x_{{\rm foot}_q}, \theta_{q}, \boldsymbol{u_x}).
\end{equation}
where $\theta_{q}$ represents the heading of the $q^{\rm th}$ walking step.
\end{definition}
\noindent We will use the above map for the walking model of Section~\ref{subsec:steeringDirec} which includes steering abilities. The above map addresses the nature of ``non-periodic" gaits by enabling arbitrary keyframe specifications. Users can design ``non-periodic" keyframes that change the speed and steer the robot through its walk. For this study, we use heuristics to design keyframes. More recently, we have proposed to use a keyframe decision maker based on linear temporal logic [\cite{zhao2016high}].

Our motion planner employs CoM apex states instead of touchdown states as keyframes due to our focus on non-periodic CoM dynamics. CoM apex states represent practical salient states for agile walking and help to design walking directions and velocities in a versatile fashion.
\begin{figure}[t]
 \centering
\includegraphics[width=\textwidth]{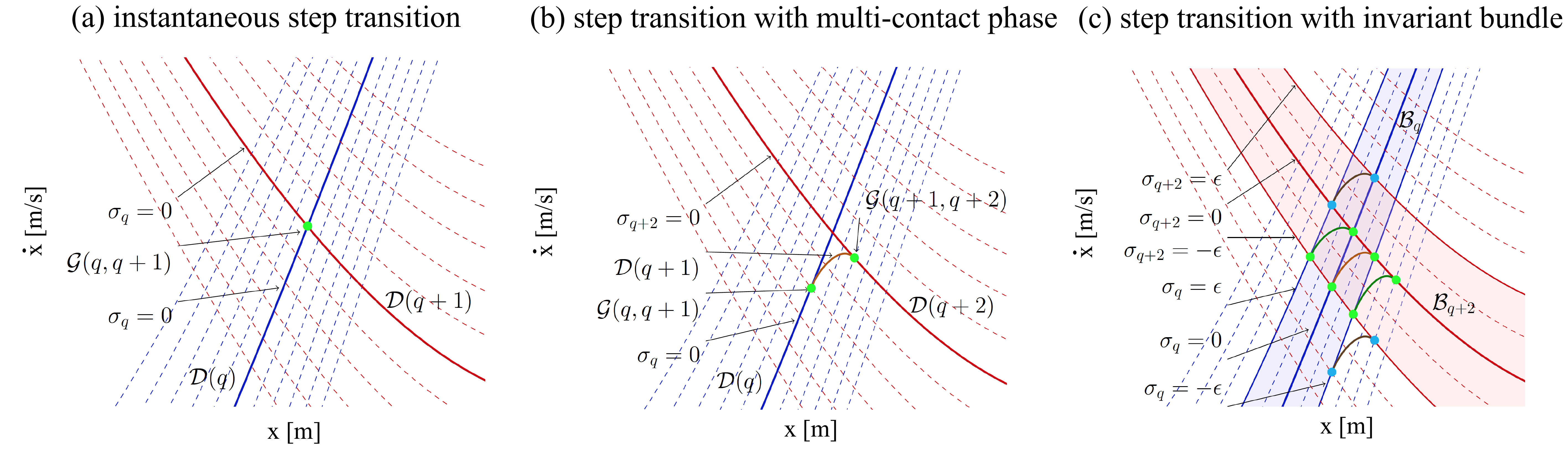}
 \caption{\captionsize Step transitions. This figure illustrates three types of step transitions in the sagittal phase-space, associated with $\sigma$-isolines: instantaneous step transition in (a), step transition with multi-contact phase in (b) and step transition with invariant bundle in (c). (a) switches between two single contacts instantaneously while (b) has a multi-contact phase. (c) shows several guard alternatives for multi-contact transitions, from the current single-contact manifold value $\sigma_q$ to the next single-contact step bundle $\sigma_{q+2}$.  In particular the invariant bundle bounds $\sigma_{q} = \pm\epsilon$ are shown.  The transition bundle in green reattaches to the original manifold $\sigma_{q+2}=0$, while the transition bundle (in brown) maintains its $\sigma$ value, i.e., $\sigma_{q+2}=\sigma_q$.}
 \label{fig:PS-stepTransition}
\end{figure}
\begin{definition}[Phase Progression Transition Value] \label{def:progVariable}
A phase progression transition value $\zeta_{\rm trans} : \mathcal{Q} \times \mathcal{X} \rightarrow \mathbb{R}_{\geq 0}$ is the value of the phase progression variable when the state $\boldsymbol{x}_q$ intersects a guard $\mathcal{G}$, i.e.,
\begin{align}
\zeta_{\rm trans} \coloneqq \inf \{\zeta > 0 \quad {\large|} \quad 
                  \boldsymbol{x}_{q} \in \mathcal{G}\}.
\end{align}
\end{definition}
\noindent We propose an algorithm to find transitions between adjacent steps, which occur at $\zeta_{\rm trans}$. Given known step locations and apex conditions, phase-space trajectories can be derived using Algorithm 1. One of the characteristics of pendulum dynamics portrayed in the phase-space is displaying infinite slopes when crossing the zero-velocity axis [\cite{zhao2013phase}]. To deal effectively with this difficulty, we use NURBS (non-uniform rational B-splines)\footnote{Different from polynomials, non-rational splines or B\'ezier curves, NURBS can be used to precisely represent conics and circular arcs by adding weights to control points.} for fitting the data from our numerical integration process (see Fig. \ref{fig:PSMSurf} for an illustration of adjacent step manifolds). Subsequently, finding step transitions, $\zeta_{\rm trans}$, consists on finding the root difference between adjacent curves. Such a process is straightforward using NURBSs. The pipeline for finding step intersections is shown below.
\begin{center}
   \includegraphics[width=0.8\textwidth]{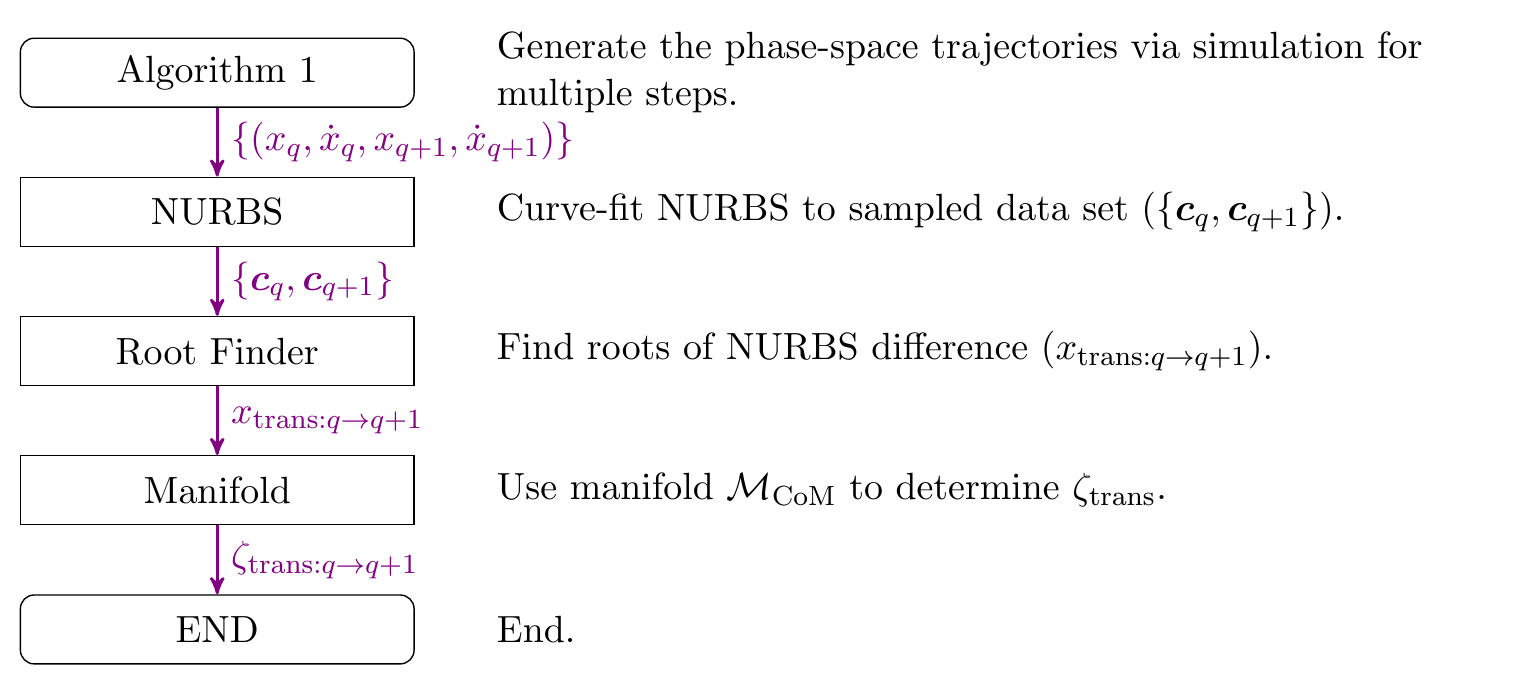}
 \end{center}
For clarity, sagittal apices are the states relating the robot's CoM velocities to their positions when crossing the sagittal contact positions. On the other hand, the instants at which contact transitions occur, derived from the above algorithm, are halfway between apices.

In hybrid dynamics, impact dynamics have often been considered as a discrete jump to address the sudden joint velocity changes on the robot joints [\cite{yang2009framework, grizzle2014models}]. However, our model considers negligible impact dynamics for planning since the planning algorithm focuses exclusively on the CoM behavior and since we assume light limbs. This issue was recently addressed by our group in [\cite{WBOSC15}]: 1) most of the robot's mass is concentrated on the upper body and the legs are considered lightweight, 2) we assume there are practical elastic elements that reduce impacts such as foot bumpers or series elastic elements on some actuators, 3) the actuators are frictionless and have a low reflected inertia, 4) the robot's upper body is practically decoupled from the foot impact points due to the kinematic chains of the limbs, and (5)  the knee of the landing leg is away from a singular configuration, i.e. straight knee.

\subsection{Lateral Foot Placement Algorithm}
\label{subsec:searchlateral}
To complete the 3D walking planner, we formulate a searching strategy for lateral foot placement that complies with the timing of sagittal step transitions. The main objective of the lateral dynamics is to return the robot's center-of-mass to a walking center through a semi-periodic cycle. If lateral foot placements are not adequately picked, the lateral behavior will drift away or even become unstable. According to Eq.~(\ref{eq:accel}), lateral center-of-mass dynamics are equal to
\begin{equation}\label{eq:dyny}
\boldsymbol{\dot y} = \boldsymbol{\mathcal{F}_y}(\boldsymbol{y}, \boldsymbol{u_y})=
\begin{pmatrix}
\dot y\\
\omega^2 (y - y_{\rm foot}) - \frac{ \omega^2}{mg}(\tau_y + a_q \tau_z)
\end{pmatrix},
\end{equation}
which can be simulated adapting Algorithm 1 to the lateral dynamics (see Fig.~\ref{fig:Numerical_Integration} for simulations of lateral dynamics).
\setcounter{algorithm}{1}
\begin{algorithm}
\begin{algorithmic}[1]
\setstretch{1.3}
\STATE Initialize iteration index $n \leftarrow 1$, maximum iterations $n_{\rm max}$, tolerance $\dot{y}_{\rm tol}$ and initial state $y_{\rm init}, \hat{y}_{\rm foot}(1)$, $\ddot{y}_{\rm apex}(1)$
\STATE  $\dot{y}_{\rm apex}(1) \leftarrow$ integration of inverted pendulum model given in Eq.~(\ref{eq:dyny}) with $\hat{y}_{\rm foot}(1)$
\WHILE {$n < n_{\rm max}$ and $|\dot{y}_{\rm apex}(n)| >\dot{y}_{\rm tol}$} 
\STATE $\quad \hat{y}_{\rm foot}(n+1) = \hat{y}_{\rm foot}(n) - \dot{y}_{\rm apex}(n)/\ddot{y}_{\rm apex}(n)$ by Newton-Raphson method
\STATE $\quad \dot{y}_{\rm apex}(n+1) \leftarrow$ integration of the inverted pendulum dynamics in Eq.~(\ref{eq:dyny}) with $\hat{y}_{\rm foot}(n+1)$
\STATE $\quad \ddot{y}_{\rm apex}(n+1) = (\dot{y}_{\rm apex}(n+1) - \dot{y}_{\rm apex}(n))/(\hat{y}_{\rm foot}(n+1) - \hat{y}_{\rm foot}(n))$
\STATE $\quad n \leftarrow n +1$
\ENDWHILE
\end{algorithmic}
\caption{Newton-Raphson Search for Lateral Foot Placement}
\label{al:newtonraphson}
\end{algorithm}
To generate bounded lateral trajectories, we choose the simple criterion of achieving zero apex lateral velocity $\dot{y}_{\rm apex} = 0$ at the instant when the CoM lateral apex position $y_{\rm apex}$ is located between the two feet. Here $y_{\rm apex}$ and $\dot{y}_{\rm apex}$ are the CoM lateral position and velocity when the center-of-mass crosses the sagittal apex as defined in Def. \ref{def:sagapex}. Algorithm~\ref{al:newtonraphson} achieves this objective by using the Newton-Raphson method, which approximates the roots of real-valued functions. In our case, this function is chosen to be the apex lateral velocity $\dot{y}_{\rm apex}$ with the lateral foot placement $\hat{y}_{\rm foot}$ as its independent variable (as shown in line 4 of Algorithm~\ref{al:newtonraphson}). $\hat{y}_{\rm foot}(n)$ represents the estimated lateral foot placements for the $n^{\rm th}$ search iteration. Feasible ranges $\hat{y}_{\rm foot, {\rm min}} \leq \hat{y}_{\rm foot} \leq \hat{y}_{\rm foot, {\rm max}}$ and a maximum number of iterations $n < n_{\rm max}$ are also enforced. Examples of usage of this algorithm are shown in Fig.~\ref{fig:Numerical_Integration}.
\begin{figure*}
 \centering
   \includegraphics[width=.85\linewidth]{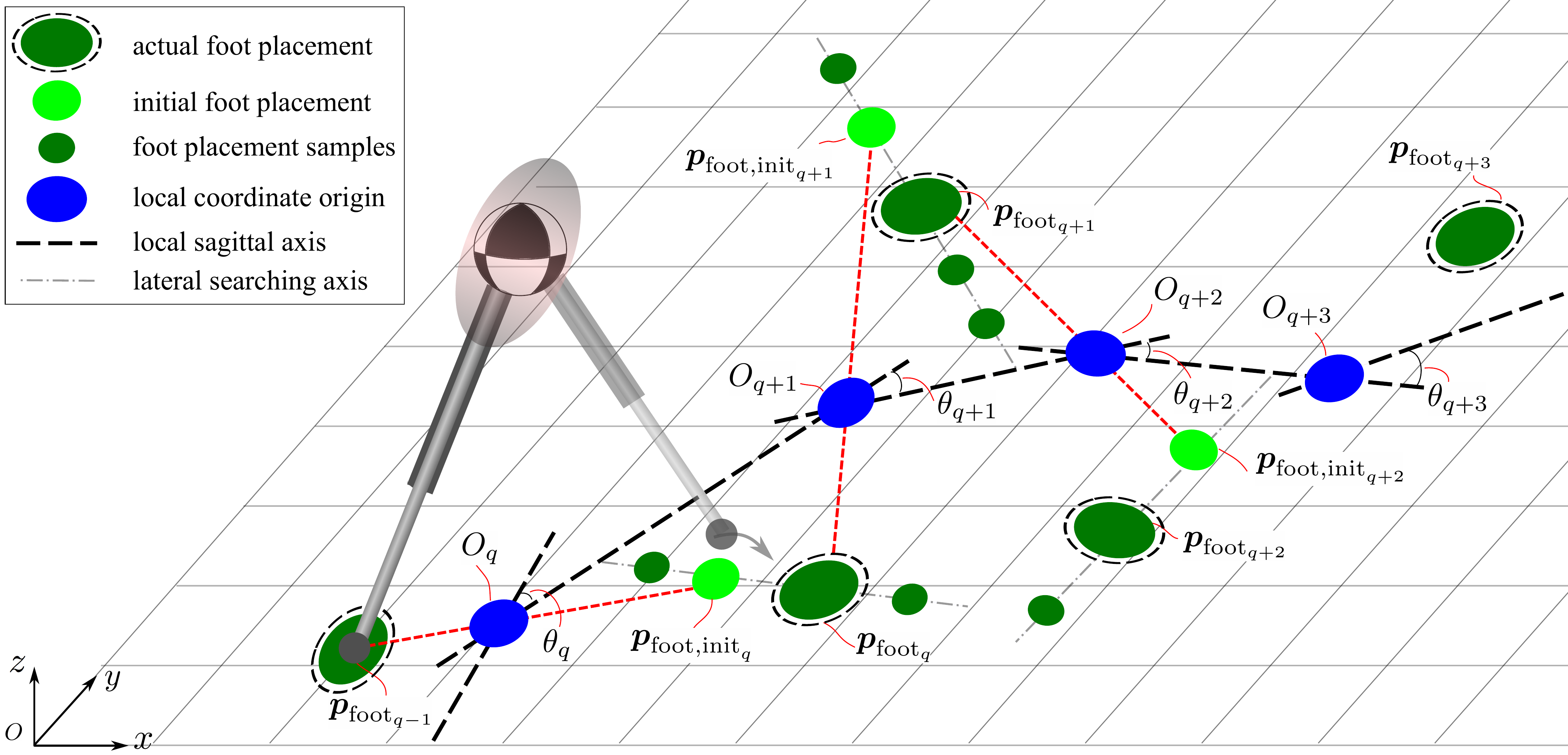}
 \caption{\captionsize Strategy for Steerable Walking. We define a local coordinate frame with origin represented by $\blueellipsoid$ and local sagittal axes represented by the black dash lines. The lateral foot placement searching algorithm described in Algorithm~\ref{al:newtonraphson} is applied using the newly defined local frames. $\greenellipsoid$ represents the final foot locations found via this procedure.
}
 \label{fig:SteerDirect}
\end{figure*}
\begin{figure*}[!t]
 \centering
   \includegraphics[width=.95\linewidth]{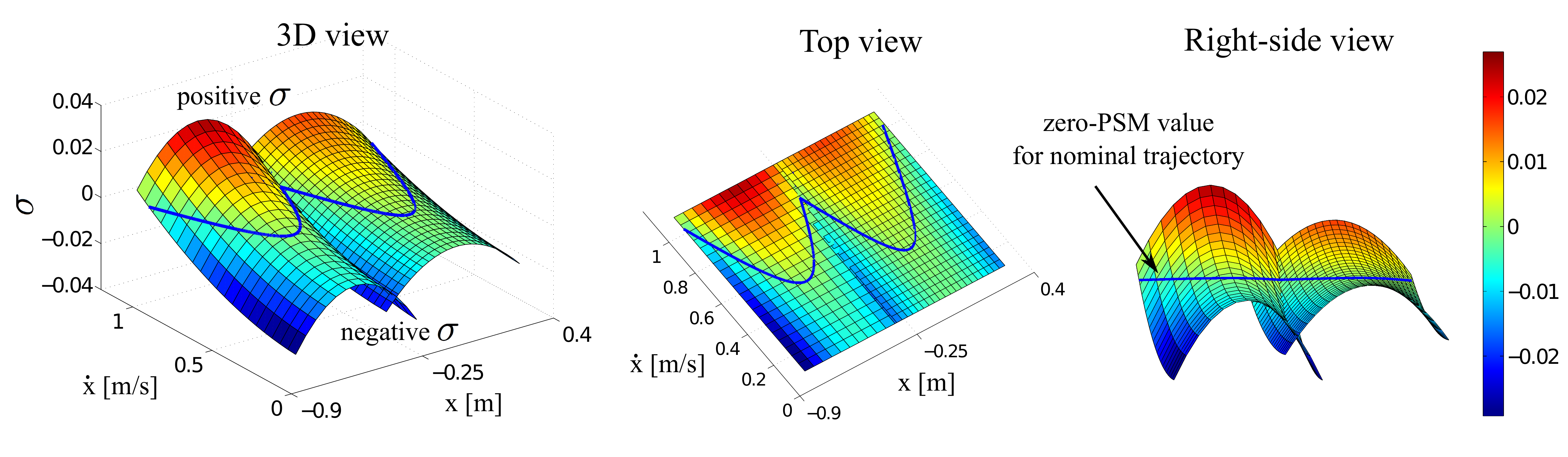}
 \caption{\captionsize Phase-space manifold isolines. This three-dimensional space demonstrates our phase-space manifold isolines defined in Eq.~(\ref{eq:simplifiedPSM}) by the color map. The horizontal plane represents the sagittal phase-space while the vertical third dimension represents the non-zero $\sigma$ value in Eq.~(\ref{eq:simplifiedPSM}). As we can see, the blue nominal trajectory has a zero $\sigma$ value. The phase-space region above the nominal trajectory has positive $\sigma$ values while the lower region has negative $\sigma$ values.
}
 \label{fig:PSMSurf}
\end{figure*}

\subsection{Steerable Walking}
\label{subsec:steeringDirec}

%
 To plan practical walking behaviors, we introduce a steerable walking model based on local coordinates as shown in Fig.~\ref{fig:SteerDirect}. The pipeline for the steerable walking process is as follows: (i) define a local sagittal axis (black dash line) projected to level ground which specifies the heading angle $\theta_{q}$ for the $q^{\rm th}$ step; (ii) define the local origin $O_{q}$ (represented by $\blueellipsoid$) as the intersection of the local sagittal axis and a dash line (red dash line) connecting the previous foot placement $\boldsymbol{p}_{{\rm foot}_{q-1}}$ (represented by $\greenellipsoid$) and an initial guess of the foot placement $\boldsymbol{p}_{{\rm foot, init}_q}$ (represented by $\lightgreenellipsoid$); (iii) search the lateral foot placement with respect to the local frame (note that the lateral search line, shown as a gray color dash-dot line, is orthogonal to the local sagittal axis); (iv) once determined the foot placement $\boldsymbol{p}_{{\rm foot}_q}$, we generate CoM and foot trajectories for the walking step; (v) after this step, we provide the new desired heading angle $\theta_{q+1}$ and re-start the planning process to step (i). Once all trajectories in local frame are obtained, we transform them to global frame via the recorded heading angles. A circular walking example is provided in Section~\ref{subsec:circularwalking} using the above planning strategy.

\section{Phase-Space Manifold Analysis}
\label{sec:manifold}
In this section, we propose an analytical phase-space manifold (PSM) and use it as a metric of deviations from planned locomotion trajectories. 
\begin{proposition}[Phase-Space Tangent Manifold]\label{theorem:PSM}
Given the prismatic inverted pendulum dynamics of Eq. (\ref{eq:dynx}) with initial conditions $(x_0, \dot{x}_0)$ and known foot placement $x_{\rm foot}$, the phase-space tangent manifold is
\begin{align}\label{eq:surface1}
\sigma =  \; (x_0 - x_{\rm foot}) ^2\big(2\dot{x}^2_0 - \dot{x}^2 + \omega^2 (x - x_0) (x + x_0 - 2x_{\rm foot})\big)
- \dot{x}^2_0 (x - x_{\rm foot})^2 + \dot{x}^2_0 (\dot{x}^2 - \dot{x}^2_0)/\omega^2,
\end{align}
where $\sigma = 0$ represents the nominal phase-space manifold of locomotion. Additionally, $\sigma$ represents the Riemannian distance to the estimated locomotion trajectory.
\end{proposition}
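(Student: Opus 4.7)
My plan is to prove the proposition in three stages: derive the first integral (conservation law) of the nominal prismatic inverted pendulum dynamics, show that the given expression for $\sigma$ is an algebraic rescaling of that first integral (so $\sigma = 0$ coincides with the nominal phase-space trajectory), and finally justify the Riemannian-distance interpretation.

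First, I would take the sagittal dynamics in Eq.~(\ref{eq:dynx}) with the nominal (open-loop) choice $\tau_y = \tau_z = 0$ assumed in Algorithm~1, which reduces the controlled ODE to $\ddot x = \omega^2 (x - x_{\rm foot})$. Multiplying both sides by $\dot x$ and integrating in time yields the standard pendulum-like conservation law
\begin{equation}
\dot x^{\,2} - \omega^2 (x - x_{\rm foot})^2 \; = \; \dot x_0^{\,2} - \omega^2 (x_0 - x_{\rm foot})^2, \nonumber
\end{equation}
since $\omega$ and $x_{\rm foot}$ are constant within a single walking step (Definition~\ref{def:simple-walking-step}). Denote the residual of this conservation law by $C(x,\dot x) := \dot x^{\,2} - \dot x_0^{\,2} - \omega^2 (x - x_0)(x + x_0 - 2x_{\rm foot})$, so that the nominal phase-space trajectory is precisely the zero level set $\{C = 0\}$.

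Second, I would show by direct algebraic manipulation that the expression in Eq.~(\ref{eq:surface1}) factors as
\begin{equation}
\sigma \; = \; -\,\frac{1}{\omega^2}\,C(x,\dot x)\,\bigl[\,\omega^2(x_0 - x_{\rm foot})^2 - \dot x_0^{\,2}\,\bigr]. \nonumber
\end{equation}
Since the bracketed quantity is a constant of the step (depending only on initial conditions and on $\omega$), this immediately gives $\sigma = 0 \iff C = 0$ generically, establishing that $\sigma = 0$ characterizes the nominal phase-space manifold. Concretely, I would verify this by substituting the conservation identity $(\dot x^{\,2} - \dot x_0^{\,2})/\omega^2 = (x - x_{\rm foot})^2 - (x_0 - x_{\rm foot})^2$ (valid on the nominal trajectory) into each of the three groups of terms of Eq.~(\ref{eq:surface1}) and collecting; off the nominal trajectory, the same substitution with the residual $C/\omega^2$ appended produces the claimed factorization.

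Third, for the Riemannian-distance interpretation I would argue that, with the conservation law providing a locally non-degenerate first integral, the function $\sigma(x,\dot x)$ is smooth in a neighborhood of the nominal trajectory, vanishes exactly on it, and has nonzero gradient transverse to it (because $C$ does, away from the apex where $\dot x = 0$ and $x = x_{\rm foot}$ simultaneously). Hence the level sets $\{\sigma = {\rm const}\}$ foliate a tube around the nominal manifold, and $\sigma$ can be used as a signed deviation coordinate whose sign distinguishes the two sides of the nominal trajectory (as illustrated in Fig.~\ref{fig:PSMSurf}). Appendix~\ref{sec:PSMBasics} formalizes the Riemannian metric in the phase-space $\zeta$-$\sigma$ coordinates, so I would invoke that construction to conclude that $|\sigma|$ is proportional to the phase-space Riemannian distance to $\mathcal{M}_{{\rm CoM}_q}$ along the transverse coordinate.

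The main obstacle will be the last step: the first two stages are essentially a bookkeeping exercise around a well-known pendulum first integral, but showing that $\sigma$ is a genuine distance (rather than merely a defining function of the manifold) requires care near the apex, where the Jacobian of the map $(x,\dot x) \mapsto (\zeta,\sigma)$ degenerates and the pendulum trajectories exhibit the infinite-slope behavior noted in Section~\ref{subsec:StepTrans}. I would handle this by restricting to a tubular neighborhood excluding the apex and appealing to the NURBS-based reparametrization of $\zeta$ used in the step-transition algorithm to extend the distance interpretation across the apex by continuity.
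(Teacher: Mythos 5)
Your proposal is correct, and it reaches the identity in Eq.~(\ref{eq:surface1}) by a genuinely different route from the paper. The paper's proof (Appendix~\ref{sec:PSMDerivation}) solves $\ddot{\tilde{x}}=\omega^2\tilde{x}$ explicitly via a Laplace transform, writes $(x-x_{\rm foot},\dot x)^T$ as a constant matrix acting on $(\cosh\omega t,\sinh\omega t)^T$, inverts that matrix, and eliminates time through $\cosh^2-\sinh^2=1$. You instead take the energy first integral $\dot x^2-\omega^2(x-x_{\rm foot})^2=\dot x_0^2-\omega^2(x_0-x_{\rm foot})^2$ and exhibit $\sigma$ as a constant multiple of its residual; I checked your claimed factorization $\sigma=-\tfrac{1}{\omega^2}C(x,\dot x)\bigl[\omega^2(x_0-x_{\rm foot})^2-\dot x_0^2\bigr]$ by expanding both sides with $(x-x_0)(x+x_0-2x_{\rm foot})=(x-x_{\rm foot})^2-(x_0-x_{\rm foot})^2$, and it holds identically. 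Your route is more elementary (no explicit time solution, no matrix inversion) and makes transparent \emph{why} $\sigma=0$ is invariant under the nominal flow, which the paper's derivation leaves implicit. The two arguments are two faces of the same fact: the hyperbolic identity the paper invokes is the conservation law in disguise, and your constant prefactor $\omega^2(x_0-x_{\rm foot})^2-\dot x_0^2$ is (up to a factor of $\omega$) the determinant of the matrix the paper inverts, so both derivations degenerate on exactly the same set, namely initial conditions on the asymptotes through the saddle $(x_{\rm foot},0)$ — a caveat you flag and the paper does not. On the Riemannian-distance claim, note that the paper's proof simply stops at the algebraic identity and offers no argument at all, so your transversality discussion goes beyond what is being matched; one small imprecision there is that the gradient of the simplified $\sigma$ in Eq.~(\ref{eq:simplifiedPSM}) vanishes at the saddle point $(x_{\rm foot},0)$, not at the apex $(x_{\rm foot},\dot x_{\rm apex})$, which is where you should localize the degeneracy rather than at the apex crossing.
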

\begin{proof}
See Appendix~\ref{sec:PSMDerivation}.
\end{proof}
\noindent If we use the apex conditions as initial values, i.e. $(x_0, \dot{x}_0) = (x_{\rm foot}, \dot{x}_{\rm apex})$, the tangent manifold can be easily simplified to
\begin{align}\label{eq:simplifiedPSM}
\sigma(x, \dot{x}, \dot{x}_{\rm apex}, x_{\rm foot}) = \dfrac{\dot{x}^2_{\rm apex}}{\omega^2} \big(\dot{x}^2 - \dot{x}^2_{\rm apex} 
                                            - \omega^2(x - x_{\rm foot})^2\big).
\end{align}
Since the tangent manifold is considered as a trajectory deviation metric in the phase-space, we will use it in the next section as a feedback control parameter to ensure robustness. Fig.~\ref{fig:PSMSurf} provides an illustration of the value of $\sigma$ as a function of the state. The same type of analysis can be done for lateral trajectory deviations using the lateral pendulum dynamics of Eq. (\ref{eq:dyny}).
\begin{figure}[t]
  \begin{center}
    \includegraphics[width=0.49\linewidth]{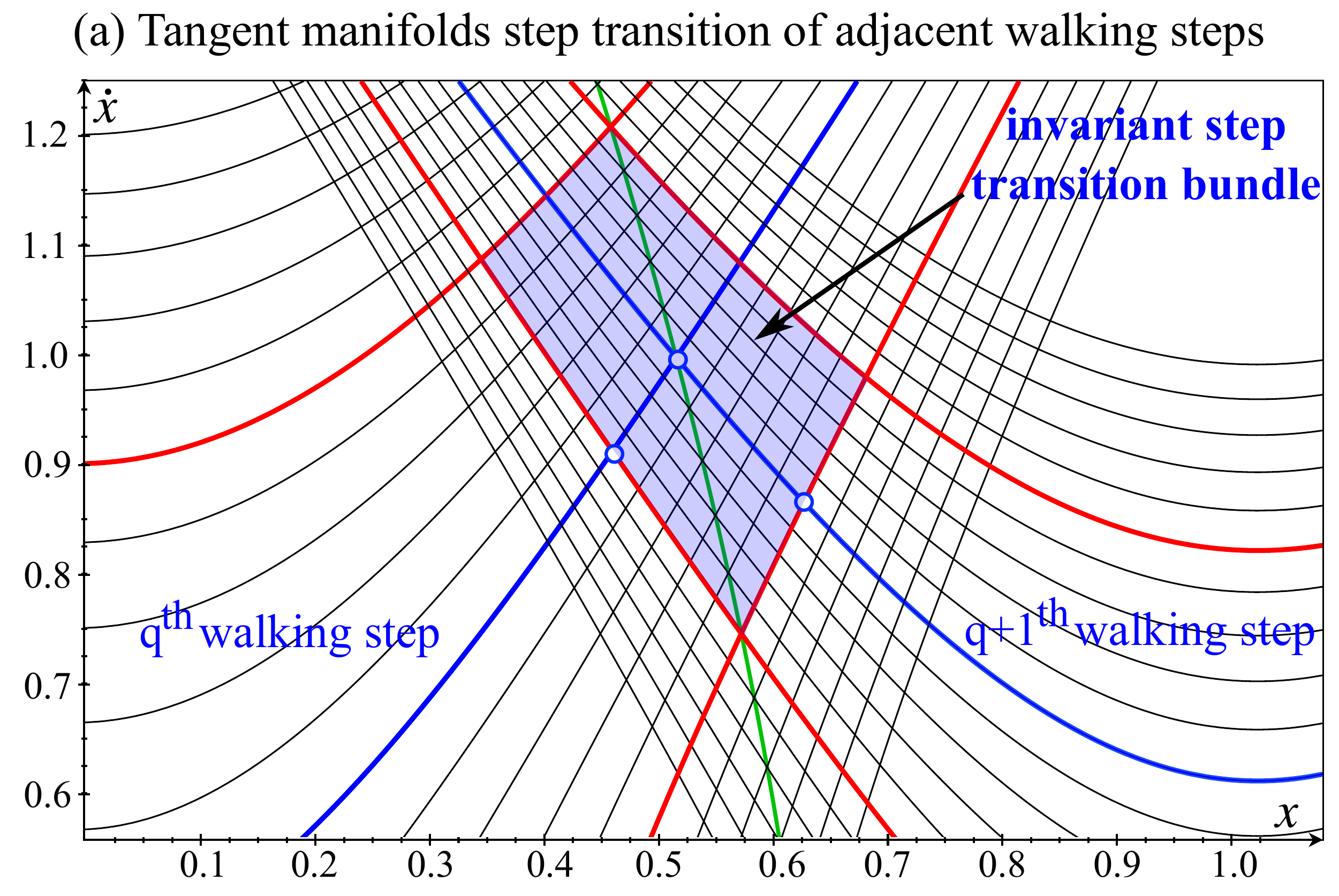}
    \includegraphics[width=0.45\linewidth]{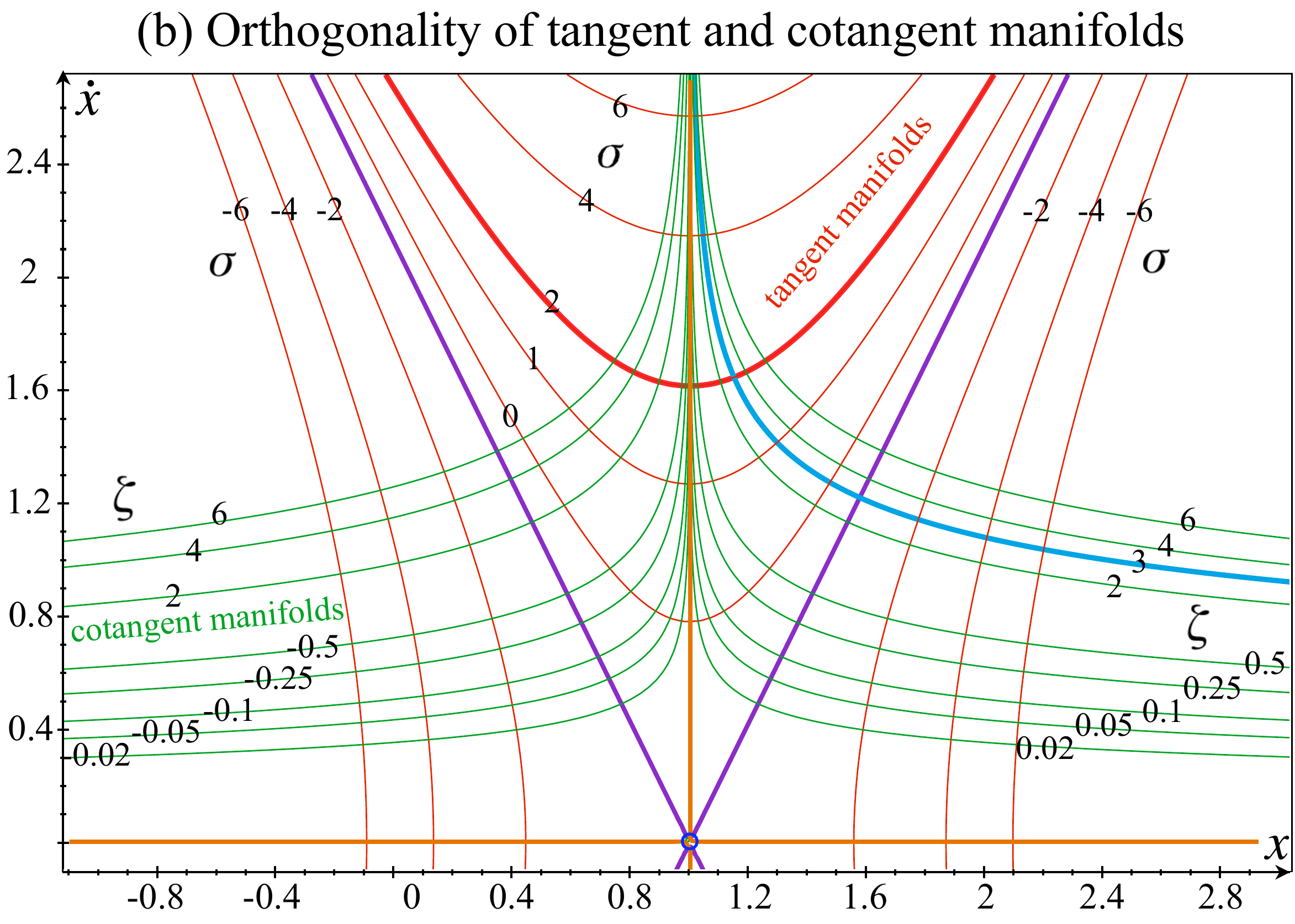}
 \caption{Phase-space tangent and cotangent manifolds. The left subfigure shows the tangent manifolds for two consecutive walking steps.  Nominal manifolds are shown as thick blue trajectories. The subfigure on the right shows orthogonal tangent and cotangent manifolds (i.e., iso-parametric curves) in phase-space. The tangent manifold $\sigma$ is defined in Eq. (\ref{eq:simplifiedPSM}) while the cotangent manifold $\zeta$ is defined in Eq. (\ref{eq:zeta_manifold}). The numbers in subfigure (b) represent the manifold deviations from the nominal one. For instance, the cotangent manifold $\zeta = 4$ has a tangent Riemannian distance 4 to the nominal cotangent manifold $\sigma = 0$.}
   \label{fig:Detailed-2Half-Steps1}
  \end{center}
 \end{figure}
\begin{proposition}[Phase-Space Cotangent Manifold]\label{prop:PSCoM}
Given the prismatic inverted pendulum dynamics of Eq.~(\ref{eq:dynx}), the cotangent manifold is equal to
\begin{align}\label{eq:zeta_manifold}
\zeta  = \zeta_0(\dfrac{\dot{x}}{\dot{x}_0})^{\omega^2} \dfrac{x - x_{\rm foot}}{x_0 - x_{\rm foot}},
\end{align}
and represents the arc length along the tangent manifold of Eq.~(\ref{eq:simplifiedPSM}). $\zeta_0$ is a nonnegative scaling factor, which represents the initial condition of a cotangent manifold. We choose it as the phase progression value when a contact switch occurs.
\end{proposition}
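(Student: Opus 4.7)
The plan is to construct $\zeta$ explicitly from the closed-form solution of the PIPM dynamics and then verify that it parametrizes motion along the tangent manifold of Proposition 1. Under the nominal dynamics $\ddot{x} = \omega^2 (x - x_{\rm foot})$ (zero nominal torques), the general solution can be written in hyperbolic form with constants determined by the initial conditions $(x_0,\dot{x}_0)$. My first step is to eliminate time and re-express everything in the phase variables by invoking the first integral
\begin{equation*}
\dot{x}^2 - \omega^2(x-x_{\rm foot})^2 \;=\; \dot{x}_0^2 - \omega^2(x_0 - x_{\rm foot})^2,
\end{equation*}
which is precisely the $\sigma = \text{const}$ tangent manifold of Proposition~\ref{theorem:PSM}. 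This identifies each trajectory as a level set of $\sigma$ and sets up the search for a complementary coordinate that advances monotonically along each level set.

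Second, I would propose a candidate $\zeta$ by combining the two independent phase quantities $(x-x_{\rm foot})$ and $\dot{x}$ into a product that is strictly monotone along every tangent manifold. Starting from the infinitesimal relations $d(x-x_{\rm foot}) = \dot{x}\, dt$ and $d\dot{x} = \omega^2(x-x_{\rm foot})\, dt$, the logarithmic combination
\begin{equation*}
\log(\zeta/\zeta_0) \;=\; \omega^2\,\log(\dot{x}/\dot{x}_0) + \log\!\bigl((x-x_{\rm foot})/(x_0-x_{\rm foot})\bigr)
\end{equation*}
yields $d\zeta/dt>0$ along any trajectory and, upon exponentiating, reproduces the product form in Eq.~(\ref{eq:zeta_manifold}). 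I would then check two sanity conditions explicitly: (i) evaluating at the chosen initial state $(x_0,\dot{x}_0)$ recovers $\zeta = \zeta_0$, confirming the role of $\zeta_0$ as the scaling factor fixed at a contact switch, and (ii) restricting to the nominal manifold $\sigma = 0$ the level curves of $\zeta$ cross the trajectory transversally, matching the iso-parametric picture in Fig.~\ref{fig:Detailed-2Half-Steps1}(b).

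Third, to justify calling $\zeta$ an \emph{arc length along the tangent manifold}, I would verify orthogonality between the level sets $\{\sigma = \text{const}\}$ and $\{\zeta = \text{const}\}$ in the Riemannian metric intrinsic to the PIPM flow, namely the metric on $(x,\dot{x})$ weighted to balance positions against velocities scaled by $\omega$. Computing the gradients of $\sigma$ from Eq.~(\ref{eq:simplifiedPSM}) and of $\zeta$ from Eq.~(\ref{eq:zeta_manifold}), I would show that their inner product vanishes, and that the induced line element along $\{\sigma=\text{const}\}$ agrees, up to the normalization by $\zeta_0$, with the pull-back of $d\zeta$.

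The main obstacle I anticipate is the third step: pinning down the precise Riemannian metric under which the proposed product form is literally an arc length, since in naive Euclidean coordinates on $(x,\dot{x})$ it is not. The derivation must commit to a metric adapted to the pendulum's divergence rate $\omega$ so that the factor $\omega^2$ appearing as an exponent is consistent with the weighting of the $\dot{x}$ direction. Once this metric is fixed, the remaining verifications are algebraic and reduce to differentiating the candidate formula along the dynamics of Eq.~(\ref{eq:dynx}).
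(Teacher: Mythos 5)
Your proposal is correct and is essentially the paper's own argument run in reverse: the paper imposes orthogonality of the $\zeta$- and $\sigma$-level sets first, reads off the separable ODE $\omega^2\, d\dot{x}/\dot{x} = -\,dx/(x - x_{\rm foot})$, and integrates it to obtain the product formula, whereas you guess the logarithmic combination and then verify orthogonality --- the same key computation either way. Two minor corrections: your anticipated obstacle in step three dissolves, because the gradients of Eq.~(\ref{eq:simplifiedPSM}) and Eq.~(\ref{eq:zeta_manifold}) are already orthogonal in the plain Euclidean metric on $(x,\dot{x})$ (the exponent $\omega^2$ is precisely what makes $\nabla\sigma\cdot\nabla\zeta \propto -1+1=0$, and the paper itself only asserts, never proves, the literal arc-length reading); and $d\zeta/dt$ is not globally positive --- its sign follows that of $\dot{x}\,(x-x_{\rm foot})$ --- though monotonicity is not needed for the proposition.
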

\begin{proof}
See Appendix~\ref{sec:OrthogonalManifoldDerivation}.
\end{proof}
\noindent Illustration of the tangent and cotangent manifolds are given in Fig. \ref{fig:Detailed-2Half-Steps1}. In subfigure~\ref{fig:Detailed-2Half-Steps1}(a), their intersection corresponds to the phase progression transition value $\zeta_{\rm trans}$ and the guard $\mathcal{G}_{q \rightarrow q+1}$.  Shown in red are the boundary manifolds $\sigma=\pm \epsilon$ of the invariant bundle $\mathcal{B}_q(\epsilon)$.  For the current $q^{\rm th}$ step, we can use the $\sigma= -\epsilon$ manifold of the next-step invariant bundle $\mathcal{B}_{q+1}(\epsilon)$ as the guard, namely, $\mathcal{G}_{q\rightarrow q+1}=\{(x, \dot{x}) \;\big|\; \sigma_{q+1}=-\epsilon\}$. In subfigure (b), the red tangent manifolds are shown as curves of constant $\sigma$ as defined in Eq.~(\ref{eq:simplifiedPSM}). Thick lines in purple are the asymptotes of tangent manifolds\footnote{These two asymptotes are equivalent to the eigenvector lines in Capture Point [\cite{pratt2006capture}]. The one in the second quadrant is stable while the one in the first quadrant is not.} where the thick red line illustrates a specific manifold $\sigma=2$. The asymptotes intercept the saddle point $(x_{\rm foot}, 0)$, where $x_{\rm foot}$ is the sagittal foot position.  The green cotangent manifold are curves of constant $\zeta$ that are orthogonal to the tangent manifolds.  Horizontal and vertical lines in orange are the asymptotes of cotangent manifolds, and the thick cyan line represents a specific manifold $\zeta=3$.  The vertical asymptote represents a manifold $\zeta=0$.
\begin{figure*}[t]
 \centering
   \includegraphics[width=\linewidth]
   {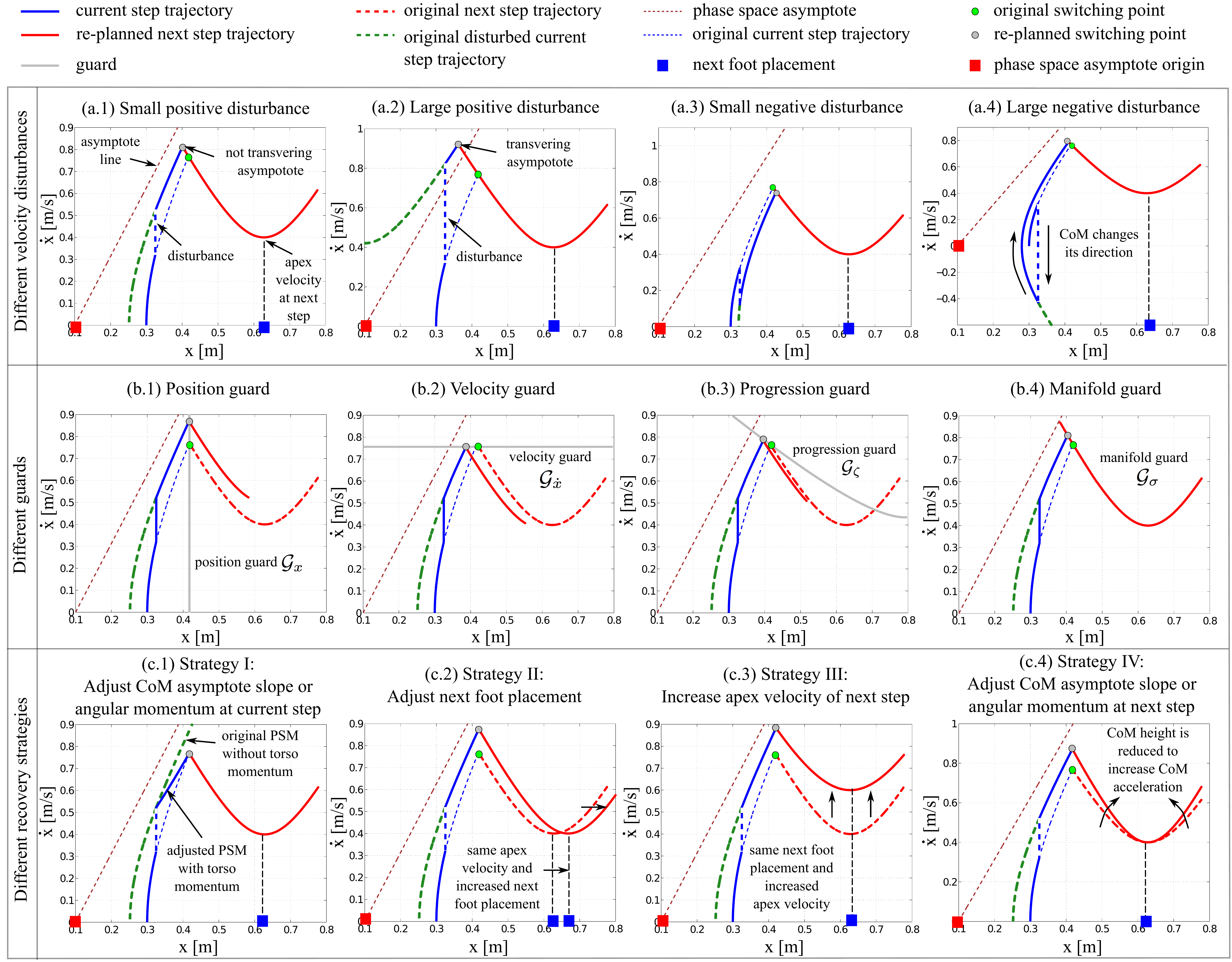}
 \caption{\captionsize Disturbance pattern, guard and recovery strategy classification. Four different velocity disturbance cases are shown in subfigures (a.1)-(a.4). The second row shows four proposed guards for the next step transition while the last row provides four recovery strategies.}
 \label{fig:VelocityDisturbSet1}
\end{figure*}

In robust control theory [\cite{zhou1996robust}], close-loop input-output behavior can be optimized using system norms. In this spirit, we define a new norm that characterizes sensitivity to disturbances of our non-periodic gaits, as
\begin{definition}[Phase-Space Sensitivity Norm]\label{def:sensitivityNorm}
Given a disturbance $d$, we define a phase-space sensitivity norm as
\begin{align}
\kappa\Big(\sigma(x_{\zeta_d}, \dot{x}_{\zeta_d})\Big) = \Big(\dfrac{1}{\zeta_{\rm trans} - \zeta_d} \int_{\zeta_d}^{\zeta_{\rm trans}}\sigma(x_\zeta, \dot{x}_\zeta)^2 d \zeta\Big)^{1/2},
\end{align}
where $\zeta_d$ corresponds to the phase value when a disturbance occurs and $\zeta_{\rm trans}$ is the phase transition for a given step defined in Def.~\ref{def:progVariable}. 
\end{definition}
\noindent In contrast to other sensitivity norms [\cite{hobbelen2007disturbance, hamed2016exponentially}], our gait norm evaluates disturbance sensitivity for non-periodic gaits. It does so by explicitly accounting for disturbance magnitude and for the instant where disturbances occur. And it does not rely on approximate linearization nor Taylor series expansion as periodic gait norms require. We will use this norm in the control section for dynamic programming. 
The disturbances that we consider are assumed to be impulses that change the CoM velocity instantaneously, regardless of the entity that generates them. They could be of diverse types: (i) instantaneous changes to the CoM behavior; (ii) continuous perturbations [\cite{englsberger2015three}]; (iii) terrain height disturbances [\cite{piovan2016approximation}], and (iv) friction-like drag forces. In the case of continuous force disturbances, the method proposed in [\cite{hyon2007full}] can be used to estimate the effect of unwanted external forces. In any case, our proposed disturbance characteristics and recovery strategies could address this diversity.

We consider various types of disturbances and outline potential recovery strategies. Disturbances can be categorized in the phase space based on four characteristics: (i) the disturbance direction, (ii) the disturbance magnitude, (iii) the terminal asymptote-region, and (iv) the change of the motion direction.  Fig.~\ref{fig:VelocityDisturbSet1} (a.1)-(a.4) illustrates those four scenarios, respectively. (a.2) has a larger positive disturbance than (a.1) such that velocity after the disturbed trajectory crosses the asymptote of the inverted pendulum model. On the other hand, (a.3) has a smaller negative disturbance such that velocity after disturbance keeps the same direction while (a.4) does not. In general, a disturbance can be characterized by its direction and magnitude. However, our study provides support for designing recovery strategies using the proposed phase space control strategies. In such case, we need to understand whether the disturbances cross terminal regions. This is the reason why we incorporate additional disturbance categories.

More disturbance scenarios could be defined, depending on specific occurrence states and characteristic patterns. We discuss various types of guard strategies to recover by changing step transitions \--- see Fig.~\ref{fig:VelocityDisturbSet1} (b.1)-(b.4). The guards shown are: position guard $\mathcal{G}_x$ (vertical line), velocity guard $\mathcal{G}_{\dot{x}}$ (horizontal line), progression guard $\mathcal{G}_\zeta$ ($\zeta$-isoline), and manifold guard $\mathcal{G}_\sigma$ ($\sigma$-isoline).  We find each guard such that they have the same transition point for the nominal phase-space manifold (PSM). Although this guard recovering strategy causes the motion to adjust, it might not be sufficiently corrective. If that is the case, we consider designing more recovery strategies by appropriately using control inputs. In the last four subfigures of Fig.~\ref{fig:VelocityDisturbSet1}, four recovery strategies are illustrated. These strategies are inspired by observations of human walking behaviors [\cite{hofmann2006robust, kuo1992human, abdallah2005biomechanically}] and further motivated via our experiences gained during extensive simulations. In the next section we explore control policies to deal with disturbances.

\section{Hybrid Control Strategy under Disturbances}
\label{sec:optimization}

This section formulates a two-stage control procedure to recover from disturbances.  When a disturbance occurs, the robot's CoM deviates from the planned phase-space manifolds obtained via Algorithm 1.  We use dynamic programming to find an optimal policy of the continuous control variables for recovery, and, when necessary, we re-plan feet placements from their initial locations based on the guards defined in Fig.~\ref{fig:VelocityDisturbSet1}. Our proposed controller, relies on the distance metric of Eq.~(\ref{eq:simplifiedPSM}) to steer the robot current's trajectory to the planned manifolds.  
\subsection{Dynamic Programming-Based Optimal Control}
\label{subsec:DP}
We propose a dynamic programming-based controller for the continuous control of the sagittal locomotion behavior. A similar controller can be formulated for the vertical CoM behaviors, given the PIPM dynamics of Eq.~(\ref{eq:accel}). To robustly track the planned CoM manifolds, we minimize a finite-phase quadratic cost function and solve for the continuous control parameters, i.e.
\begin{equation}\label{eq:optimization-1}
\begin{aligned}
&\underset{\boldsymbol{u}_{\boldsymbol{x}}^c}{\text{min}} \;\; \mathcal{V}_N(q, \;\boldsymbol{x}_N) + \sum_{n=0}^{N-1} \eta^n \mathcal{L}_n(q, \boldsymbol{x}_n, \boldsymbol{u}^c_{\boldsymbol{x}})& \\
&\textrm{subject to}:\; \boldsymbol{\dot x} = \boldsymbol{\mathcal{F}_x}(\boldsymbol{x}, \boldsymbol{u}^c_{\boldsymbol{x}}, d), \\
&\hspace{0.7in}\omega^{\rm min} \leq \omega \leq \omega^{\rm max}, \\
&\hspace{0.7in} \tau^{\rm min}_y \leq \tau_y \leq \tau^{\rm max}_y,
\end{aligned}
\end{equation}
where $\boldsymbol{u}_{x}^c = \{\omega, \tau_y\}$ corresponds to the continuous variables of the hybrid control input $\boldsymbol{u_x}$ of Eq. (\ref{eq:dynx})\footnote{For simplicity, the yaw torque $\tau_z$ is assumed to be zero in the disturbance case and thus it is not included in $\boldsymbol{u}_{x}^c$.}, $\omega$ and $\tau_y$ are scalars in this case, $0 \leq \eta \leq 1$ is a discount factor, $N$ is the number of discretized stages until the next step transition $\zeta_{\rm trans}$, the terminal cost is $\mathcal{V}_N(q, \boldsymbol{x}_N) = \alpha (\dot{x}(\zeta_{\rm trans}) - \dot{x}(\zeta_{\rm trans})^{\rm des})^2$. Here, $\dot{x}(\zeta_{\rm trans})$ is the terminal velocity associated with the disturbance at the instant of the next step transition, and $\dot{x}(\zeta_{\rm trans})^{\rm des}$ is the desired transition velocity at that instant. The first equality constraint $\boldsymbol{\mathcal{F}_x}(\cdot)$ is defined by the PIPM dynamics of Eq. (\ref{eq:dynx}) with an extra input disturbance $d$. Additionally, $\mathcal{L}_n$ is the one step cost-to-go function at the $n^{\rm th}$ stage defined as
\begin{align} \label{eq:L_s}
\mathcal{L}_n(q, \boldsymbol{x}_n, \boldsymbol{u}^c_{\boldsymbol{x}}) = & \int_{\zeta_{q, n}}^{\zeta_{q, n+1}} \big[\beta \sigma ^2 + \Gamma_1 \tau_y^2 + \Gamma_2 (\omega - \omega^{\rm ref})^2 \big] d\zeta,
\end{align}
where $\sigma$ is the tangent manifold of Eq.~(\ref{eq:simplifiedPSM}) used as a feedback control parameter, $\zeta_{q, n}$ and $\zeta_{q, n+1}$ are the starting and ending phase progression variables for the $n^{\rm th}$ stage of the $q^{\rm th}$ walking step, $\alpha$, $\beta$, $\Gamma_1$ and $\Gamma_2$ are weights, and $\omega^{\rm ref}$ is the reference phase-space asymptote slope given in Algorithm 1. Eq.~(\ref{eq:optimization-1}) is solved in a backward propagation pattern. More details of dynamic programming are provided in Appendix~\ref{sec:DP}. This optimal control process is applied only when a disturbance occurs. In disturbance-free scenario, no control adjustments are required, as the system naturally follows its CoM dynamics. We do not currently consider flywheel position limits as our focus has been on outlining a proof-of-concept control approach. For real implementations in future extensions of this work, we will need to account for flywheel dynamics and constraint on the positions.

To avoid chattering effects\footnote{This chattering is caused by the digital controllers with finite sampling rate. In theory, an infinite switching frequency will be required. However, the control input in practice is constant within a sampling interval, and thus the real switching frequency cannot exceed the sampling frequency. This limitation leads to the chattering.} in the neighborhood of the planned manifold, a $\epsilon$-boundary layer is defined and used to saturate the controls, i.e.
\begin{subequations}
\label{eq:slidingcontrol}
\begin{empheq}[left={\boldsymbol{u}^{c'}_{\boldsymbol{x}} = }\empheqlbrace]{align}\label{eq:slidingcontrol_a}
  & \boldsymbol{u}^c_{\boldsymbol{x}} & |\sigma| > \epsilon \\\label{eq:slidingcontrol_b}
  & \dfrac{|\sigma|}{\epsilon}\boldsymbol{u}^{c, \epsilon}_{\boldsymbol{x}} + \dfrac{\epsilon - |\sigma|}{\epsilon}\boldsymbol{u_x}^{c, {\rm ref}} & |\sigma| \leq \epsilon
\end{empheq}
\end{subequations}
where $\epsilon$ corresponds to the boundary value of an invariant bundle $\mathcal{B}(\epsilon)$ as defined in Def.~\ref{def:invariantBundle}, $\boldsymbol{u}_{\boldsymbol{x}}^{c, \epsilon} = \{\omega^\epsilon, \tau_y^\epsilon\}$ are control inputs at the instant when the trajectory enters the invariant bundle $\mathcal{B}(\epsilon)$, $\boldsymbol{u}_{\boldsymbol{x}}^{c, {\rm ref}} = \{\omega^{\rm ref}, \tau_y^{\rm ref}\}$ are nominal control inputs defined by Algorithm 1. A proof of smoothness of the above control saturation function is discussed in [\cite{utkin2013sliding}]. As Eq.~(\ref{eq:slidingcontrol}) shows, when $|\sigma| \leq \epsilon$, the control effort $\boldsymbol{u}^{c'}_{\boldsymbol{x}}$ is scaled between $\boldsymbol{u}_{\boldsymbol{x}}^{c, \epsilon}$ and $\boldsymbol{u}_{\boldsymbol{x}}^{c, {\rm ref}}$. This control law is composed of an ``inner'' and an ``outer'' controller. The ``outer'' controller steers states into $\mathcal{B}(\epsilon)$ while the ``inner'' controller maintains states within $\mathcal{B}(\epsilon)$. Note that, this controller performs better than asymptotic stability since the invariant bundle $\mathcal{B}(\epsilon)$ is reached in finite time. Recovery trajectories are shown in Fig.~\ref{fig:DP_robust_bound} for two scenarios in the presence of random disturbances.
\begin{figure}[t]
\centering
\includegraphics[width=0.95\linewidth]{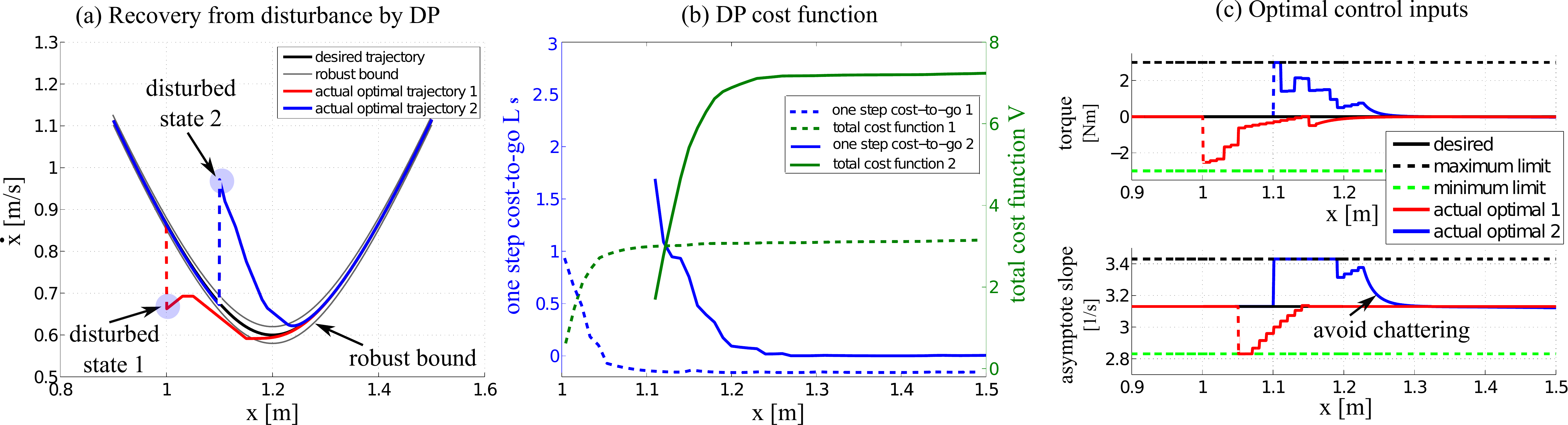}
\caption{\captionsize Chattering-free recoveries from disturbance by the proposed optimal recovery continuous control law. Subfigure (a) shows two random disturbances, where disturbed state 1 has a negative impulse while the disturbed state 2 has a positive impulse. Control variables are piecewise constant within one stage as shown in subfigure (c). In these simulations, torque control range is $[-3, 3]$ Nm and phase-space asymptote slope range is $[2.83, 3.43]$ 1$/$s. Other parameters are shown in Table~\ref{table:DP_parameter}.}
\label{fig:DP_robust_bound}
\end{figure}

\begin{table}[hbt!]
\caption{Dynamic Programming Parameters} 
\begin{center}\vspace{-5mm}
\begin{tabular}{c|c||c|c}
\hline\hline
Parameter & Range & Parameter & Range\\ \hline\hline
nominal pitch torque $\tau^{\rm ref}_y$ & 0 Nm& nominal asymptote slope $\omega^{\rm ref}$ &3.13 1$/$s\\ \hline
pitch torque range $\tau_y^{\rm range}$ & [-3, 3] Nm& asymptote slope range $\omega^{\rm range}$ &[2.83, 3.43] 1$/$s\\ \hline
apex height $z_{\rm apex}$ & 1 m & mass $m$ & 1 kg\\\hline
stage range & [0.9, 1.5] m & state range & [0.03, 1.5] m/s \\\hline
stage resolution & 0.01 m & state resolution & 0.01 m/s \\\hline
disturbed initial state $s_{\rm initial}$ & $(1.1$ m, $0.7$ m$/$s$)$ & nominal apex velocity $\dot{x}_{\rm apex}$ & 0.6 m$/$s\\\hline
weighting scalar $\Gamma_{1}$ & 5 &weighting scalar $\Gamma_{2}$ & 5\\\hline 
weighting scalar $\beta$ & $4 \times 10^4$ &weighting scalar $\alpha$ & 100 \\\hline 
\end{tabular}
\end{center}
\label{table:DP_parameter}
\end{table}

Since the control inputs are constrained within a desired range, i.e. $\boldsymbol{u}^c_{\boldsymbol{x}}  \in \boldsymbol{u}^{c, {\rm range}}_{\boldsymbol{x}}$, we re-define the finite-phase control-dependent recoverability bundle.
Given an acceptable deviation $\epsilon_0$ from the manifold, the practical invariant bundle is $\mathcal{B}(\epsilon_0)$.
The control policy in Eq.~(\ref{eq:slidingcontrol}) generates a control-dependent practical recoverability bundle (a.k.a., region of attraction to the ``boundary-layer'') defined as 
 \begin{align}\label{eq:OptimalRecoverabilty}
  \mathcal{R}(\epsilon, \zeta_{\rm trans}) = \Big\{\boldsymbol{x}_{\zeta} \in \mathbb{R}^2, \quad \zeta_0\le\zeta\le \zeta_{\rm trans}\; \big| \; 
    \boldsymbol{x}_{{\zeta}_{\rm trans}} \in \mathcal{B}(\epsilon), \quad \boldsymbol{u}^c_{\boldsymbol{x}}  \in \boldsymbol{u}^{c, {\rm range}}_{\boldsymbol{x}} \Big\}. 
 \end{align}

\begin{theorem*}[Existence of Recoverability Bundle]\label{theo:theorem}
Given a Lyapunov function $V = \sigma^2/2$, a phase progression transition value $\zeta_{\rm trans}$, and the control policy in Eq. (\ref{eq:slidingcontrol}), a recoverability bundle $\mathcal{R}(\epsilon, \zeta_{\rm trans})$ exists and can be bounded by a maximum tube radius $\sigma^{\rm max}_0$.
\end{theorem*}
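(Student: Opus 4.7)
The plan is to establish the bundle via a finite-phase Lyapunov argument applied to the candidate $V=\sigma^2/2$, differentiating with respect to the phase progression variable $\zeta$ rather than time. I would first evaluate $V' \equiv dV/d\zeta = \sigma \, d\sigma/d\zeta$ along the PIPM dynamics of Eq.~(\ref{eq:dynx}) using the analytic expression of $\sigma$ in Eq.~(\ref{eq:simplifiedPSM}). The chain rule yields $d\sigma/d\zeta$ as an explicit function of $(x,\dot{x})$ and the continuous controls $(\omega,\tau_y)$, which is algebraically routine but a bit tedious.

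Next I would split the analysis according to the saturation rule of Eq.~(\ref{eq:slidingcontrol}). In the outer layer $|\sigma|>\epsilon$, the control equals the unsaturated dynamic programming policy, whose running cost in Eq.~(\ref{eq:L_s}) is dominated by the $\beta\sigma^2$ term. The Bellman-optimal input therefore selects the admissible $(\omega,\tau_y)\in \boldsymbol{u}^{c,\rm range}_{\boldsymbol{x}}$ that drives $\sigma\,d\sigma/d\zeta$ as negative as possible. By compactness of the admissible control set and continuity of the resulting vector field, I would extract a uniform reaching condition $\sigma \, d\sigma/d\zeta \le -\gamma|\sigma|$ for some constant $\gamma>0$, analogous to the classical reaching condition of sliding-mode theory, but phrased in the phase-progression variable.

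From this contraction inequality, a standard comparison argument gives $|\sigma(\zeta)|\le |\sigma_0|-\gamma(\zeta-\zeta_0)$ up to the first hitting time of $\mathcal{B}(\epsilon)$. Requiring the hit to occur no later than $\zeta_{\rm trans}$ produces the explicit bound
\[
\sigma^{\rm max}_0 \;=\; \epsilon + \gamma\,(\zeta_{\rm trans}-\zeta_0),
\]
which certifies that every initial condition with $|\sigma(\boldsymbol{x}_{\zeta_0})|\le \sigma^{\rm max}_0$ enters the invariant bundle before the scheduled step transition. The recoverability bundle of Eq.~(\ref{eq:OptimalRecoverabilty}) is thus nonempty and is contained in the sublevel set $\{\boldsymbol{x}\in\mathcal{X}\mid|\sigma(\boldsymbol{x})|\le\sigma^{\rm max}_0\}$, establishing both existence and a concrete tube radius.

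The main obstacle will be producing a truly uniform $\gamma$. The Jacobian of $d\sigma/d\zeta$ with respect to $(\omega,\tau_y)$ degenerates as $\dot{x}\to 0$ near the pendulum saddle at $x=x_{\rm foot}$, which is exactly where the output loses relative order through the subspace of Eq.~(\ref{uncontrollableSpace}). To sidestep this, I would restrict the argument to the subset of $\mathcal{X}$ bounded away from the singular locus, or partition the phase-space into regions and estimate $\gamma$ piecewise, then take the worst-case constant to state the final tube radius. A secondary subtlety is that a saturating DP controller is only piecewise smooth, so the comparison step must be interpreted in the Filippov sense where the saturation boundary is crossed.
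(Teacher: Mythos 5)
Your overall architecture (finite-phase Lyapunov decrease of $V=\sigma^2/2$, a reaching condition, a comparison lemma, and a tube radius linear in the horizon) matches the paper's, but the step you defer as ``algebraically routine'' is in fact the entire content of the proof, and the device you propose to replace it with does not work. The paper computes $\dot\sigma$ explicitly along Eq.~(\ref{eq:dynx}) and finds that the drift term $-2\dot{x}_{\rm apex}^2\dot{x}(x-x_{\rm foot})$ cancels exactly against the natural-dynamics part of $2\dot{x}_{\rm apex}^2\dot{x}\ddot{x}/\omega^2$, with $\omega$ dropping out entirely, leaving $\dot V=-\tfrac{2\dot{x}_{\rm apex}^2}{mg}\,\sigma\,\dot{x}\,\tau_y$. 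In other words, $\sigma$ is a first integral of the unforced PIPM flow, so negativity of $\dot V$ reduces to the sign condition $\sigma\,\tau_y>0$ (for forward walking, $\dot{x}>0$); there is no drift to dominate. Your substitute --- ``by compactness of the admissible control set and continuity \dots extract a uniform reaching condition $\sigma\,d\sigma/d\zeta\le-\gamma|\sigma|$'' --- is not a proof: compactness yields a minimizing control but says nothing about the sign of the minimum, and the Bellman-optimal policy of Eq.~(\ref{eq:optimization-1}) is \emph{not} the greedy minimizer of $\sigma\,d\sigma/d\zeta$, since it trades tracking against the $\Gamma_1\tau_y^2+\Gamma_2(\omega-\omega^{\rm ref})^2$ penalties and a terminal velocity cost. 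The paper itself must impose extra structure at exactly this point: Case~I assumes $|\tau_y|>|\tau_y^{\epsilon}|$ for the DP policy, and Case~II uses the explicit supremum control $\tau_y=\tau_y^{\rm max}\,{\rm sign}(\dot{x})$.

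Your second difficulty --- degeneracy of the control authority as $\dot{x}\to 0$, which you propose to handle by excising the singular locus or partitioning the phase space --- is dissolved by the paper's actual argument. Integrating $dV/\sqrt{V}$ over the step and using the change of variables $\dot{x}\,dt=dx$ converts the accumulated decrease into the sagittal displacement, yielding $\sigma_0\le\epsilon+\tfrac{\sqrt{2}}{2}\nu\,(x_{\rm trans}-x_0)\,\tau_y=\sigma_0^{\rm max}$ with $\nu=2\sqrt{2}\,\dot{x}_{\rm apex}^2/(mg)$; no uniform-in-$\zeta$ contraction rate, piecewise estimate, or Filippov interpretation is needed, and the tube radius comes out linear in $x_{\rm trans}-x_0$ rather than in $\zeta_{\rm trans}-\zeta_0$. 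To repair your proposal you must actually carry out the $\dot\sigma$ computation and exhibit the cancellation; the downstream comparison argument then goes through essentially as you sketched.
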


\begin{proof}
Given $\exists\; \sigma_0 > \epsilon$ such that $\sigma_{\rm trans} \leq \epsilon$, then $\mathcal{R}(\epsilon, \zeta_{\rm trans})$ is composed of the range $(x, \dot{x})_{\zeta}, \; \zeta_0\le\zeta\le \zeta_{\rm trans}$, such that $V_{\rm trans} = \sigma_{\rm trans}^2/2\le \;\epsilon^2/2$. Taking the derivative of $V$ along the pendulum dynamics in Eq.~(\ref{eq:accel}), we have
\begin{align}\nonumber
\dot{V} = \sigma\dot{\sigma} &= \sigma \dot{x}^2_{\rm apex} \big( -2 \dot{x} (x - x_{\rm foot}) + 2 \dot{x} \ddot{x}/\omega^2\big) = \sigma  \dot{x}^2_{\rm apex} \Big( -2 \dot{x} (x - x_{\rm foot}) + 2 \dot{x} \big((x - x_{\rm foot}) - \dfrac{\tau_y}{mg}\big) \Big)\\\label{eq:stability}
&= -\dfrac{2  \dot{x}^2_{\rm apex} \sigma \dot{x} \tau_y}{mg} = -\dfrac{2 \sqrt{2}  \dot{x}^2_{\rm apex} \dot{x} \tau_y  \cdot {\rm sign}(\sigma)}{mg} \sqrt{V} \le 0.
\end{align}
\noindent which can prove the stability (i.e., attractiveness) of $\sigma=0$ under certain assumptions. For instance, consider the case of forward walking $\dot{x} > 0$. Then, as long as $\sigma \cdot \tau_y > 0$, i.e., the pitch torque has the same sign as $\sigma$, the attractiveness is guaranteed.  That is, if $\sigma>0$ (the robot moves forward faster than expected), then we need $\tau_y > 0$ to slow down, and vice-versa. If $\tau_y = 0$, then $\dot{V} = 0$, which implies a zero convergence rate. This means that the CoM state will follow its natural inverted pendulum dynamics without converging. As such, in order to converge to the desired invariant bundle, $\tau_y$ control action is required.

Note that, Eq.~(\mbox{\ref{eq:stability}}) shows the interesting phenomenon that $\dot{V}$ is independent of $\omega$, which cancels out during the derivation. The reason for this is the structure of the phase-space manifold $\sigma$ in Eq.~(\mbox{\ref{eq:simplifiedPSM}}), which is derived from Eq.~(\mbox{\ref{eq:surface1}}) by choosing the initial condition as $(x_0, \dot{x}_0) = (x_{\rm foot}, \dot{x}_{\rm apex})$. This $\omega$-independence makes our attraction analysis tractable.
\begin{figure}[t]
 \centering
   \includegraphics[width=0.9\linewidth]{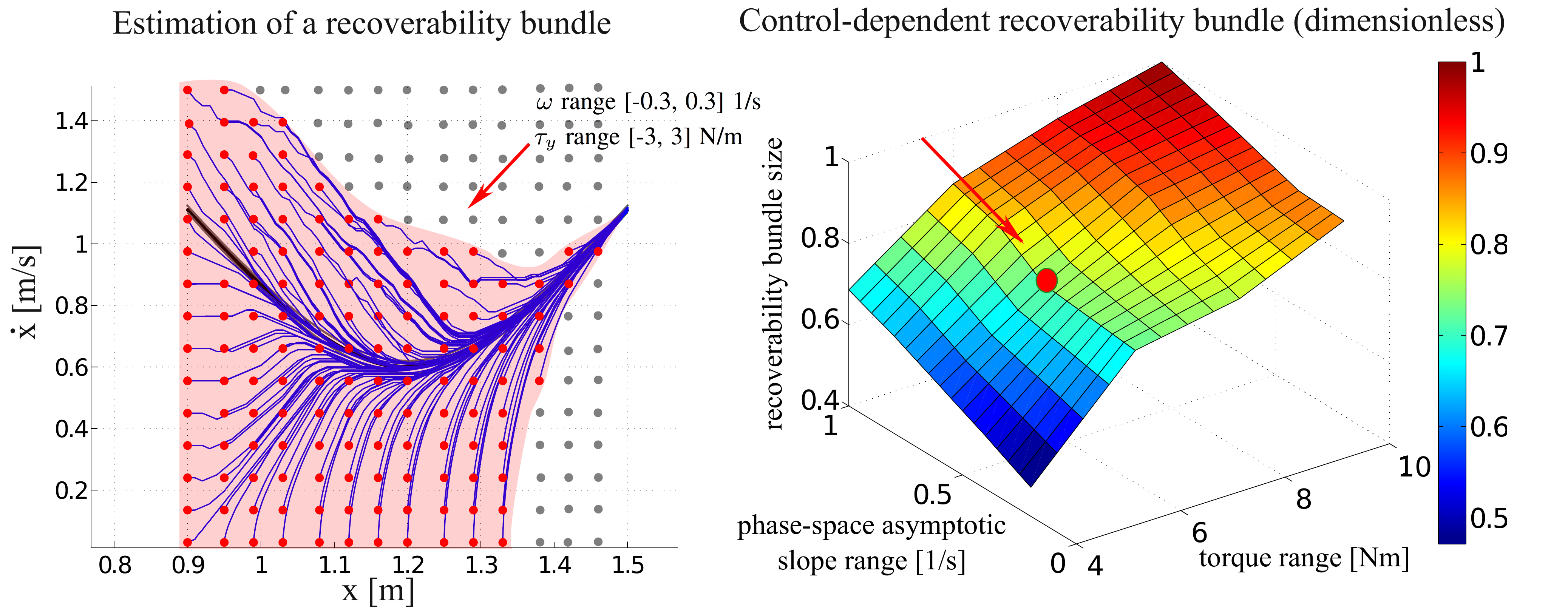}
 \caption{\captionsize Estimation of dimensionless control-dependent recoverable region. In the left figure, disturbed states are sampled in a discretized grid and the shaded region represents the recoverability bundle. As it is shown, a larger recoverable region is achieved at the beginning of the phase (i.e., before the apex state). In the ending phase, the recoverable region shrinks to the invariant bundle. Here the control constraint is: $\omega \in [-0.3, 0.3]$ 1/s and $\tau_y \in [-3, 3]$ N/m. The right figure shows the dependence of the size of the recoverable region with respect to the allowed control ranges. For better visualization, we use the range value to represent the control inputs in the horizontal axes. For instance, if the torque range is $r$, then it implies $\tau_y \in [-r/2, r/2]$.
 }
 \label{fig:RecoverabilitySet}
\end{figure}

To estimate $\mathcal{R}(\epsilon,\zeta_{\rm trans})$, we propose the following two methods: (i) use the optimal control policy proposed in Eq.~(\ref{eq:slidingcontrol}), defining an ``optimal'' recoverability bundle; or (ii) use the maximum control inputs (without any regards to optimality) obtained by selecting the bounds $\boldsymbol{u}^{c, {\rm range}}_{\boldsymbol{x}}$, defining a ``maximum'' recoverability bundle. These two cases can be characterized as:
\paragraph{Case I: DP based Control.}
 If $\tau_y$ is derived using the optimal controller of Eq.~(\ref{eq:optimization-1}), we get $|\tau_y| > |\tau^\epsilon_y|$. Then Eq.~(\ref{eq:stability}) becomes
\begin{align}\label{eq:recoverability-condition-1}
\dot{V} < -\dfrac{2 \sqrt{2}  \dot{x}^2_{\rm apex} \dot{x} |\tau^\epsilon_y|}{mg} \sqrt{V} <   0.
\end{align}
\paragraph{Case II: Supremum Control.}
 If we design $\tau_y= \tau_y^{\rm max}\sign(\dot{x})$ for the forward walking case, i.e., $\dot{x} > 0$, then,
\begin{align}\label{eq:recoverability-condition-2}
\dot{V} = -\dfrac{2 \sqrt{2}  \dot{x}^2_{\rm apex} \dot{x} \tau_y^{\rm max}\sign(\dot{x})}{mg} \sqrt{V} 
            = -\dfrac{2 \sqrt{2}  \dot{x}^2_{\rm apex} \dot{x} \tau_y^{\rm max}}{mg} \sqrt{V} < 0.
\end{align}
\noindent Note that, $\dot{V}$ in Eqs.~(\ref{eq:recoverability-condition-1}) and (\ref{eq:recoverability-condition-2}) have similar structure and can be analyzed considering the following integral equation, derived from basic manipulation of the equality $\dot V = dV / dt$
\begin{equation}\label{eq:integralLyapunov}
\int_{V_0}^{V_{\rm trans}} \dfrac{dV}{\sqrt{V}} \leq - \int_{t_0}^{t_{\rm trans}} \nu \dot{x} \tau_y dt = - \nu \tau_y(x_{\rm trans} - x_0),
\end{equation}
where $\nu = (2 \sqrt{2}  \dot{x}^2_{\rm apex})/(mg)$, $\tau_y = \tau_y^\epsilon$ for Case I while $\tau_y = \tau_y^{\rm max}$ for Case II. Eq.~(\ref{eq:integralLyapunov}) can be solved using common integral rules to yield 
\begin{equation}
\sqrt{V_0} \leq \sqrt{V_{\rm trans}} + \dfrac{1}{2}\nu \cdot (x_{\rm trans} - x_0) \cdot \tau_y.
\end{equation}
Since $V_0 = \sigma_0^2/2, V_{\rm trans} = \sigma_{\rm trans}^2/2 \leq \epsilon^2/2$, we get
\begin{align}
\quad \sigma_0 \leq \epsilon + \dfrac{\sqrt{2}}{2}\nu \cdot (x_{\rm trans} - x_0) \cdot \tau_y = \sigma^{\rm max}_0,
\end{align}
where $\sigma^{\rm max}_0$ defines the maximum tube radius. Therefore we can re-write the recoverability bundle of Eq.~(\ref{eq:OptimalRecoverabilty}) using this new tube radius as:
\begin{equation}\label{eq:practicalRecover}
\mathcal{R}(\epsilon, \zeta_{\rm trans}) = \Big\{\boldsymbol{x}_{\zeta} \in \mathbb{R}^2, \; \zeta_0\le\zeta\le \zeta_{\rm trans}\; \big| \; 
    \epsilon \leq \sigma_0^{\rm max}\Big\}.
\end{equation}
The existence of a recoverability bundle has been proven with a maximum tube radius, $\sigma_0^{\rm max}$.
\end{proof}
\noindent Since Eq.~(\ref{eq:recoverability-condition-1}) has an inequality bound while Eq.~(\ref{eq:recoverability-condition-2}) has an equality bound, DP based control is an optimal but conservative estimation of the true recoverability bundle while supremum control is an accurate but non-optimal estimation for the recoverability bundle. Our study aims at optimal performance, and therefore the control policy generated from dynamic programming will be used to estimate the recoverability bundle. 
\begin{figure}[!t]
\centering
\includegraphics[width=0.95\linewidth]{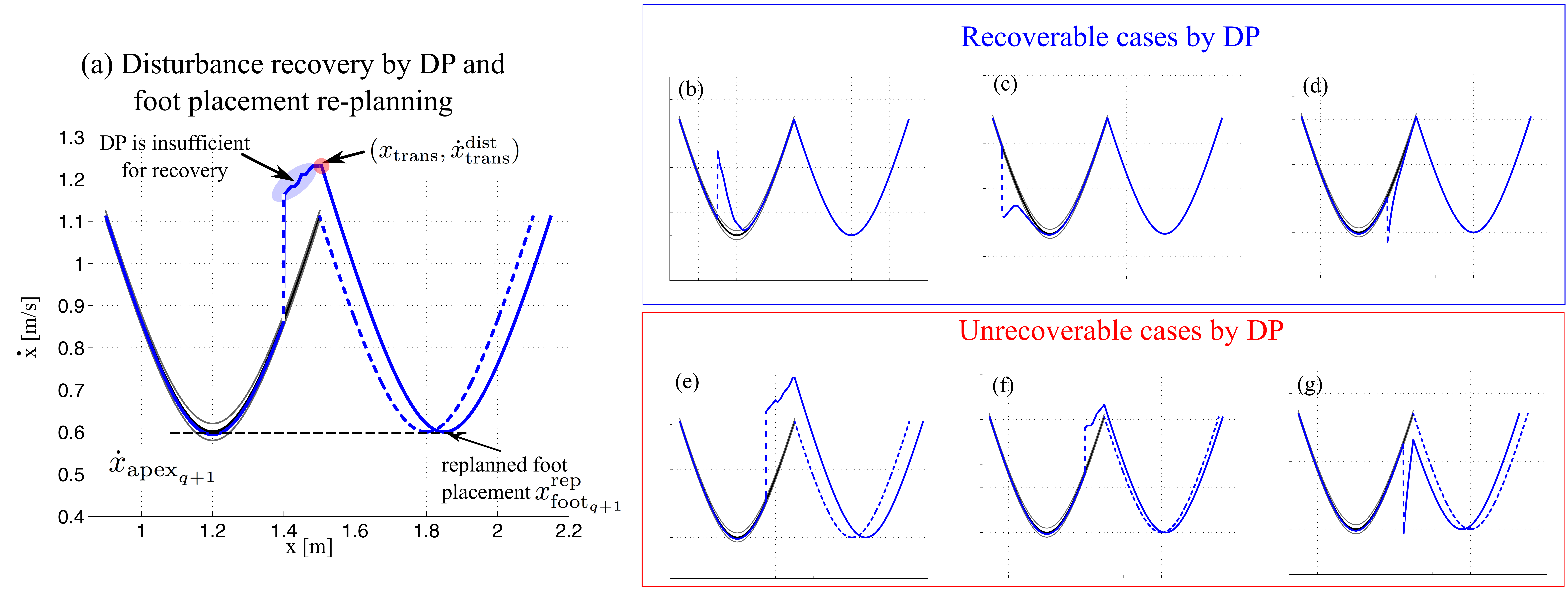}
\caption{\captionsize Recovery from a disturbance by re-planning sagittal foot placement. In this case, the next apex velocity is given \textit{a priori} and maintained despite the disturbance. In subfigures (b)-(d), first-stage continuous DP control is sufficient to achieve the recovery while in the cases of subfigures (e)-(g) it is not. The latter cases occur when either disturbance occurs too close to the transition or is too large. In these cases, a new next foot placement is automatically re-planed based on Eq.~(\ref{eq:replanfoot}).}
\label{fig:High_Level_Robust}
\end{figure}

To estimate $\mathcal{R}(\epsilon, \zeta_{\rm trans})$, we perform a grid sampling from the initial condition $\boldsymbol{x}_{\zeta_0}$, based on the ranges of Table~\ref{table:DP_parameter}. Then we execute the optimization of Eq.~(\ref{eq:optimization-1}) for each sampled $\boldsymbol{x}_{\zeta_0}$ (treated as a realization) and repeat this procedure for all $\boldsymbol{x}_{\zeta_0}$ in the grid. The feasible realizations of recovery trajectories (i.e. the convergence into $\mathcal{B}(\epsilon)$ before $\zeta_{\rm trans}$) constitute the recoverability bundle\footnote{Here, only forward walking is considered. Recovery from disturbances during backward or forward-to-backward walking could be achieved in a similar manner. If we take the backward walking for instance, all that is needed is to plan a proper sequence of apex states and integrate phase-space trajectories in a backward pattern, detect the PSM deviation via Eq.~(\ref{eq:simplifiedPSM}) and look up an offline DP policy table designed for backward walking.
}. 
An example of an estimated recoverability bundle is shown in Fig.~\ref{fig:RecoverabilitySet} (a). 
\subsection{Discrete Foot Placement Control}
\label{subsec:secondoptimization}
When the disturbance is large enough to move the state outside its recoverability bundle, the controller can not recover to the invariant bundle within a single stepping cycle. We propose to use the guard strategies discussed in Section~\ref{sec:manifold} for recovery. As a case study, let us use the position guard strategy and re-plan the foot placement for the next step as was illustrated in Fig.~\ref{fig:VelocityDisturbSet1} (c.2). In that strategy, it is assumed that we keep the previously planned apex velocity $\dot{x}_{{\rm apex}_{q+1}}$ for the next step. Hence, we analytically solve for a new foot placement based on the PSM of Eq.~(\ref{eq:simplifiedPSM}). Let us define the disturbed phase-space transition state as $(x_{\rm trans}, \dot{x}^{\rm dist}_{\rm trans})$. Equating the right hand side of Eq.~(\ref{eq:simplifiedPSM}) to zero, the re-planned sagittal foot placement $x^{\rm rep}_{{\rm foot}_{q+1}}$ is solved as,
\begin{align}\label{eq:replanfoot}
 x^{\rm rep}_{{\rm foot}_{q+1}} = x_{\rm trans} + \dfrac{1}{\omega} (\dot{x}^{{\rm dist} 2}_{\rm trans} - \dot{x}^2_{{\rm apex}_{q+1}})^{1/2}.
\end{align}
In forward walking, the condition $x^{\rm rep}_{{\rm foot}_{q+1}} > x_{\rm trans}$ holds, prompting us to ignore the solution with the negative square root. Note that if $\dot{x}_{{\rm apex}_{q+1}} = 0$, i.e., the robot is coming to a stop, Eq.~(\ref{eq:replanfoot}) becomes $x^{\rm rep}_{{\rm foot}_{q+1}}=x_{\rm trans}+\dot{x}^{\rm rep}_{\rm trans}/\omega$, which is equivalent to the Capture Point dynamics described in [\cite{englsberger2011bipedal}].

To evaluate the performance of this step re-planning method, we consider the six disturbances scenarios of Fig.~\ref{fig:High_Level_Robust}. The top three scenarios are recoverable using the DP-based continuous controller that we presented earlier. In the bottom three scenarios the disturbance occurs too close to the transition or is too large and therefore requires the foot placement re-planner described above to be executed. Once foot placements have been re-planned in the sagittal direction, lateral foot placements are re-planned using Algorithm~\ref{al:newtonraphson}. 

To conclude, the two-stage procedure discussed in this section constitutes the core process of our robust-optimal phase-space planning strategy. The combined locomotion planning procedure is shown in Algorithm~\ref{al:overall-planning} in the Appendix. We use the continuous control strategy first to better track the desired locomotion trajectories. As such, the continuous controller represents a servo process that is always on. When deviations are too large for recovery, we apply the foot placement re-planner.
The computational burden of our control process is low. The reason lies in that the DP-based continuous controller is designed offline to compute a table storing all possible policies for any admissible disturbance. Therefore, once disturbances are detected, the offline table is quickly looked at. The computation time for generating an offline policy table depends on various parameters: state grid resolution, control constraints and boundary phase-space states. For the policy table corresponding to the scenario in Fig. 11, it takes around $6$ hours on a standard laptop with $2.4$ GHz Intel Core $i7$. Once this table is generated offline, the online policy execution takes around $0.3-0.7$ ms for looking up optimal recovery states and control trajectories. In the case of the discrete foot placement re-planner, it is fast to compute due to its algebraic simplicity given in Eq. (\ref{eq:replanfoot}).
%
\subsection{End-to-End Phase Space Planning Procedure}
\label{subsec:summary}
An overall planning and control procedure is shown in Fig.~\ref{fig:OverallPlanningStructure} and Algorithm~\ref{al:overall-planning} in the Appendix.

\begin{figure}[h!]
 \centering
 \includegraphics[width=0.9\linewidth]{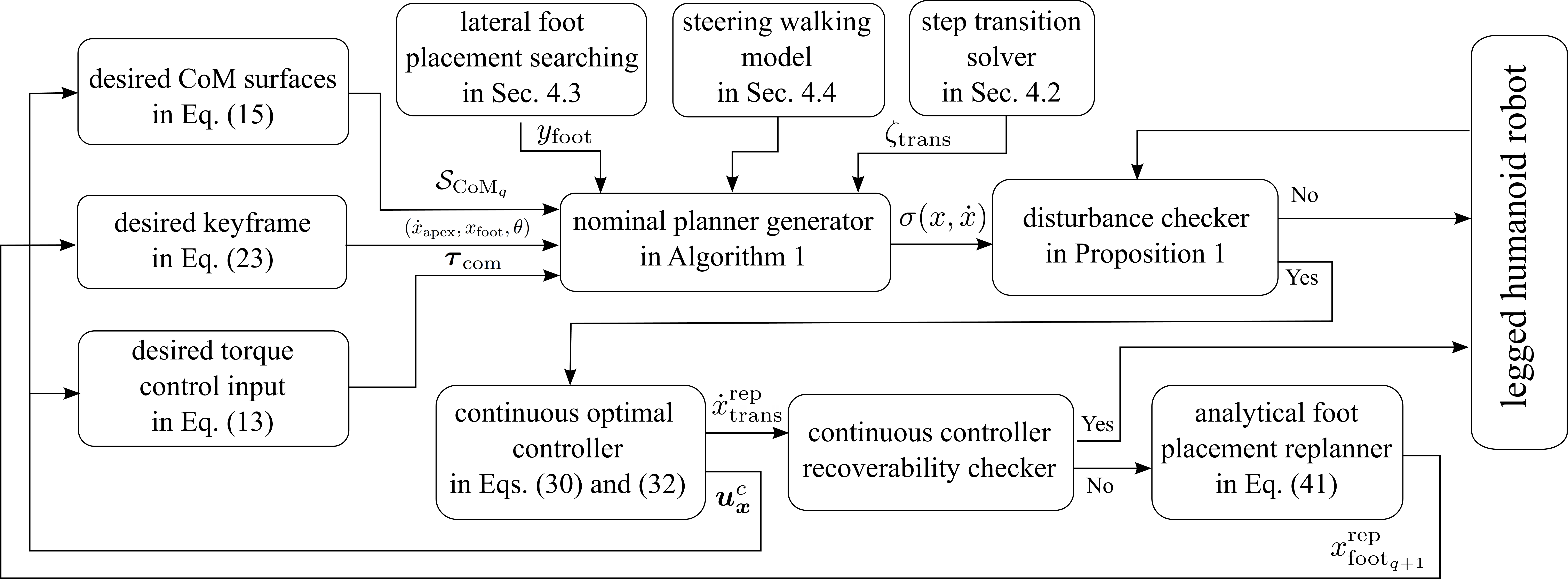}
 \caption{\captionsize End-to-end planning and control process. This diagram describes the pipeline for generating locomotion plans in the phase-space and robust control strategies. The locomotion designer first provides the following pieces of information: (i) desired CoM surfaces, (ii) nominal foot positions, (iii) desired keyframe states, and (iv) flywheel torque limits. Algorithm 1 produces the locomotion trajectories and the distance metric aided by the step transition solver, the lateral foot placement planner, and the steering model. A disturbance checker verifies if the current trajectories are within the invariant bundle. If they are not, the results stored on a table from the DP-based controller are utilized as a control policy and if this is not sufficient, new steps are re-planned in an online fashion.}
 \label{fig:OverallPlanningStructure}
\addtolength{\belowcaptionskip}{-5pt}
\end{figure}

\section{Results}
\label{sec:simulations}

We evaluate four types of locomotion scenarios for the purpose of (i) testing our planner's ability to handle walk on rough terrains and (ii) its robustness to large external disturbances. Our robot model uses six-dimensional free floating states, three degrees-of-freedom (DOFs) per leg and one DOF for torso pitch flexion/extension. Each leg has three actuated joints: hip abduction/adduction, hip flexion/extension and knee flexion/extension. We assume that each actuated joint has enough torque capability to achieve the planned motion. This model has a $0.55$ m torso length, a $0.56$ m hip width, a $0.6$ m thigh length and a $0.55$ m calf length, respectively. Given CoM trajectories and foot locations generated from the planner, we use inverse kinematics to obtain corresponding joint angles. On the other hand, because the feet contact transitions are discrete, we create smooth swinging trajectories to land the feet at the desired locations with the given time stamps. An accompanying video of dynamic walking over various terrain topologies is available here\footnote{\url{https://youtu.be/eSqQS4z7EYA}}. The source code is available online\footnote{\url{https://github.com/YeZhao/phase-space-planner-locomotion}}.
\subsection{Dynamic Walking over Rough Terrain}
We validate the versatility of our phase-space planning and control strategy by performing locomotion over terrains with random but known height variations. Three challenging terrains are tested as shown in Fig.~\ref{fig:DifferentTerrain}: (a) a terrain with convex steps, (b) a terrain with concave steps and (c) a terrain with inclined steps. The height variation $\Delta h_k$ of two consecutive steps is randomly generated based on the uniform distribution, 
\begin{equation}
\Delta h_k  = h_{k+1} - h_{k} \sim \textrm{Uniform} \left\{
     (-\Delta h_{\rm max}, -\Delta h_{\rm min})\cup
     (\Delta h_{\rm min}, \Delta h_{\rm max})\right\},
\end{equation}
where $h_{k}$ represents the height of the $k^{\rm th}$ step, $\Delta h_{\rm min} = 0.1$ m, $\Delta h_{\rm max} = 0.3$ m. A $10^\circ$ tilt angle is used for the slope of the steps. Foot placements are chosen a priori using simple kinematic rules and considering the length of the terrain steps. We design apex velocities according to a heuristic that accounts for terrain heights, and we use an average apex velocity of $0.6$ m/s. We choose CoM surfaces that conform to the terrains. We then apply the trajectory generation and controller pipeline procedures outlined in previous sections including the generation of trajectories based on Algorithm 1 and the search for step transitions based on the procedures of Section 5.2. 
\begin{figure*}[!th]
 \centering
 \vspace{-1in}
 \includegraphics[width=\linewidth]{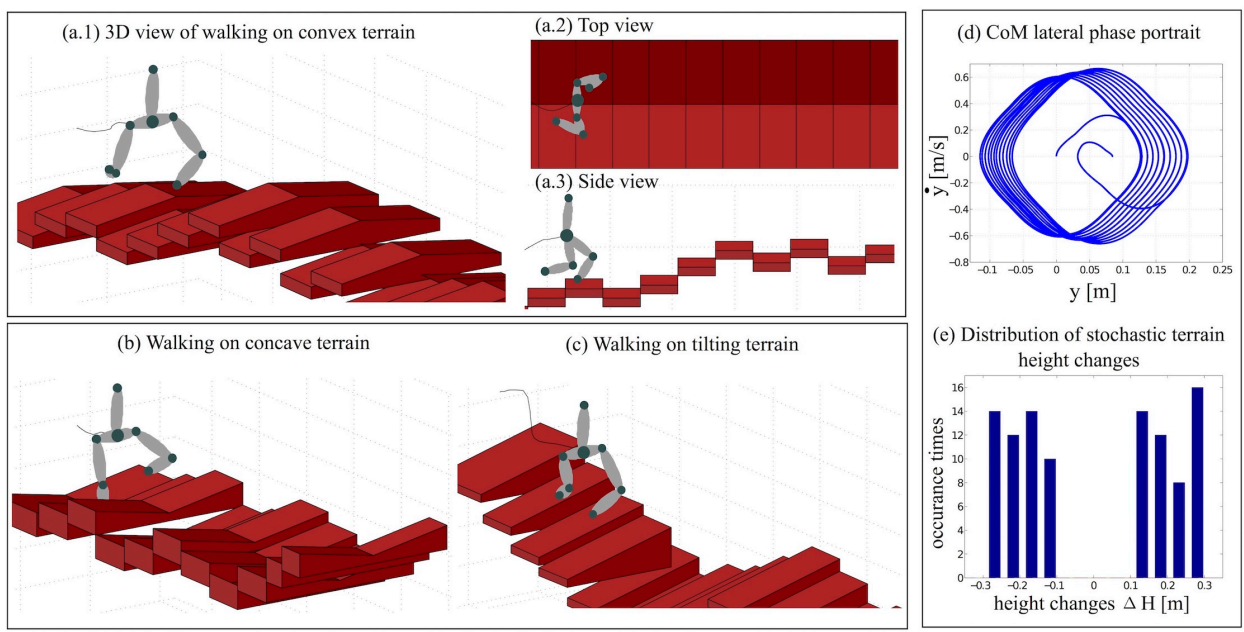}
 \vspace{-1in}
 \caption{\captionsize Traversing various rough terrains. The subfigures on the left block show dynamic locomotion over rough terrains with varying heights. The block on the lower right shows the height variation distribution over 100-steps.}
 \label{fig:DifferentTerrain}
\end{figure*}
Fig.~\ref{fig:DifferentTerrain} (a) shows a snapshot of bipedal walking on the terrain with convex steps.  Fig.~\ref{fig:DifferentTerrain} (b) and (c) show other types of rough terrains also tested in simulation over 100 steps. The lateral CoM phase portrait in Fig.~\ref{fig:DifferentTerrain} (d) shows stable walking over 25 steps.  The bar graph in Fig.~\ref{fig:DifferentTerrain} (e) shows the distribution of the height of the randomly generated terrain. 
\subsection{Dynamic Walking under External Disturbances}


\subsubsection{Recovery from Disturbance on the Sagittal Plane}

We first make the robot walk on a terrain based on the planning algorithms described in the paper. We then apply a pushing force in the sagittal direction, which causes an instantaneous velocity jump as shown in Fig.~\ref{fig:SagittalDisturbance} (a). This disturbance is quite large such that the robot's state cannot recover in one single step to its nominal PSM using the proposed optimal controller. Thus, the foot location re-planning strategy previously described is executed. The dashed line in Fig.~\ref{fig:SagittalDisturbance} (a) represents the original phase-space trajectory while the solid line represents the re-planned trajectory. Also, instead of an instantaneous step transition, a multi-contact transition is used as described in the Appendix~\ref{sec:multicontact}.
\begin{figure*}[!t]
 \centering
   \includegraphics[width=0.75\linewidth]{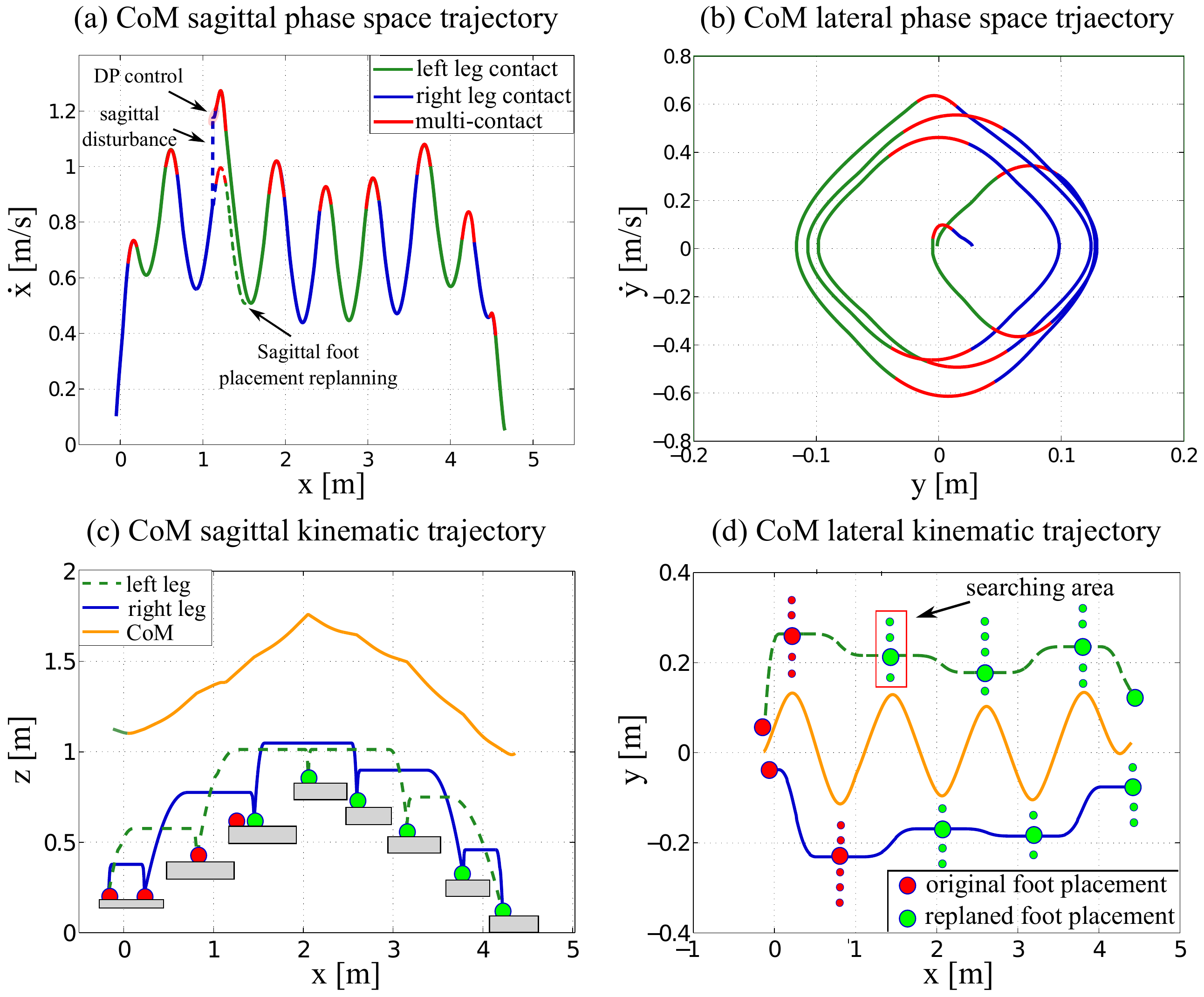}
 \caption{\captionsize Recovery from sagittal plane disturbance. To recover from a sagittal push with a $0.4$ m/s CoM velocity jump, the planner uses both DP continuous control and discrete foot placement re-planning in a sequential manner. $\reddot$ denotes the pre-defined foot placement before the disturbance while $\greendot$ denotes the re-planned foot placement after the disturbance.
} 
 \label{fig:SagittalDisturbance}
\end{figure*}

\begin{figure*}[!h]
 \centering
   \includegraphics[width=0.95\linewidth]{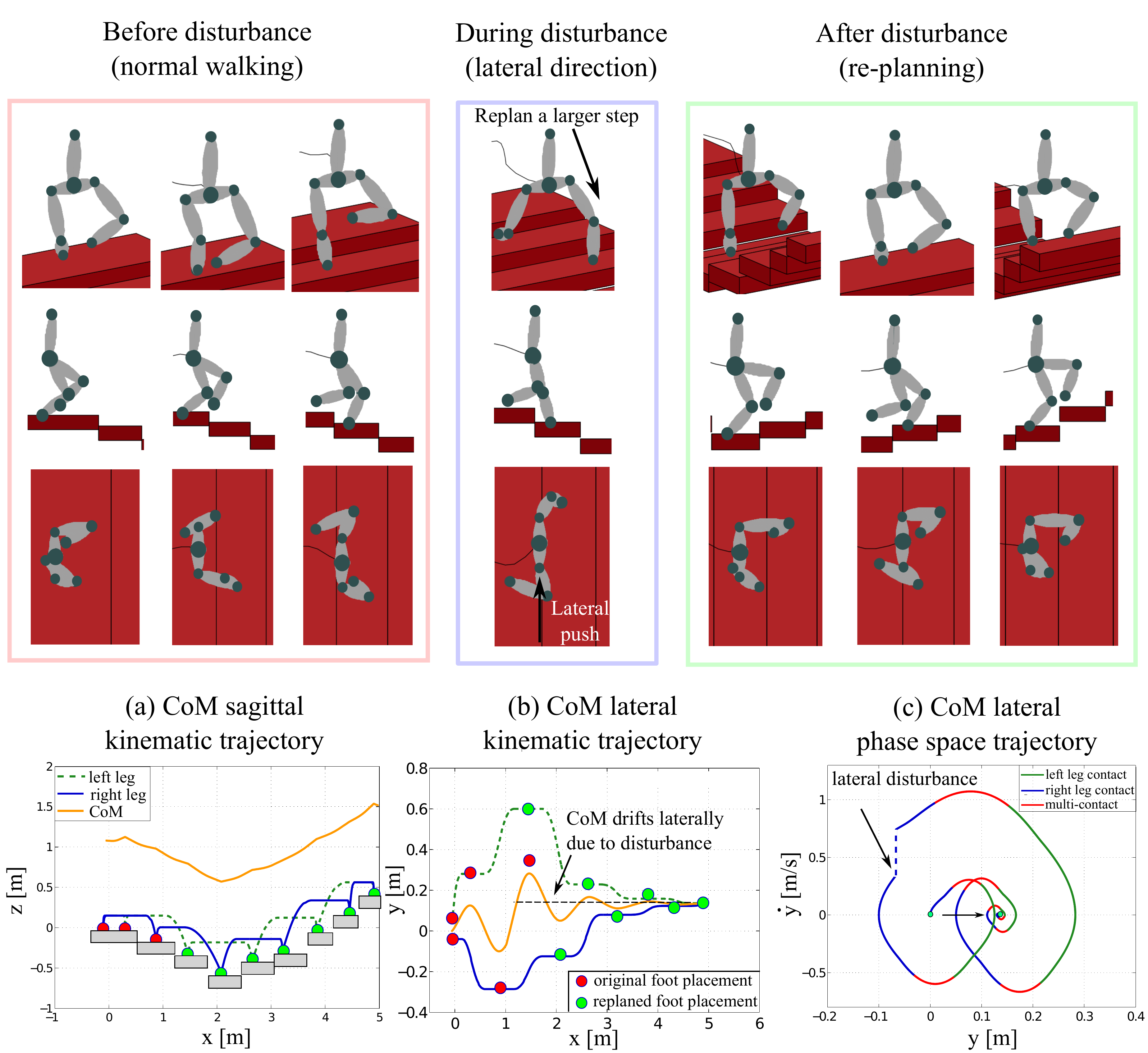}
 \caption{\captionsize Recovery from lateral plane disturbance. When the lateral disturbance occurs, the foot placement is re-planned to avoid falling down. Correspondingly, the CoM trajectory drifts to the left side as shown in (b) while a CoM velocity jump appears in (c). After the disturbance, the CoM trajectory is re-generated based on this new lateral foot placement to achieve stable walking.}
 \label{fig:SequantialSnapshot}
\end{figure*}

\begin{figure*}[!bht]
 \centering
  \vspace{-1in}
   \includegraphics[width=0.9\linewidth]{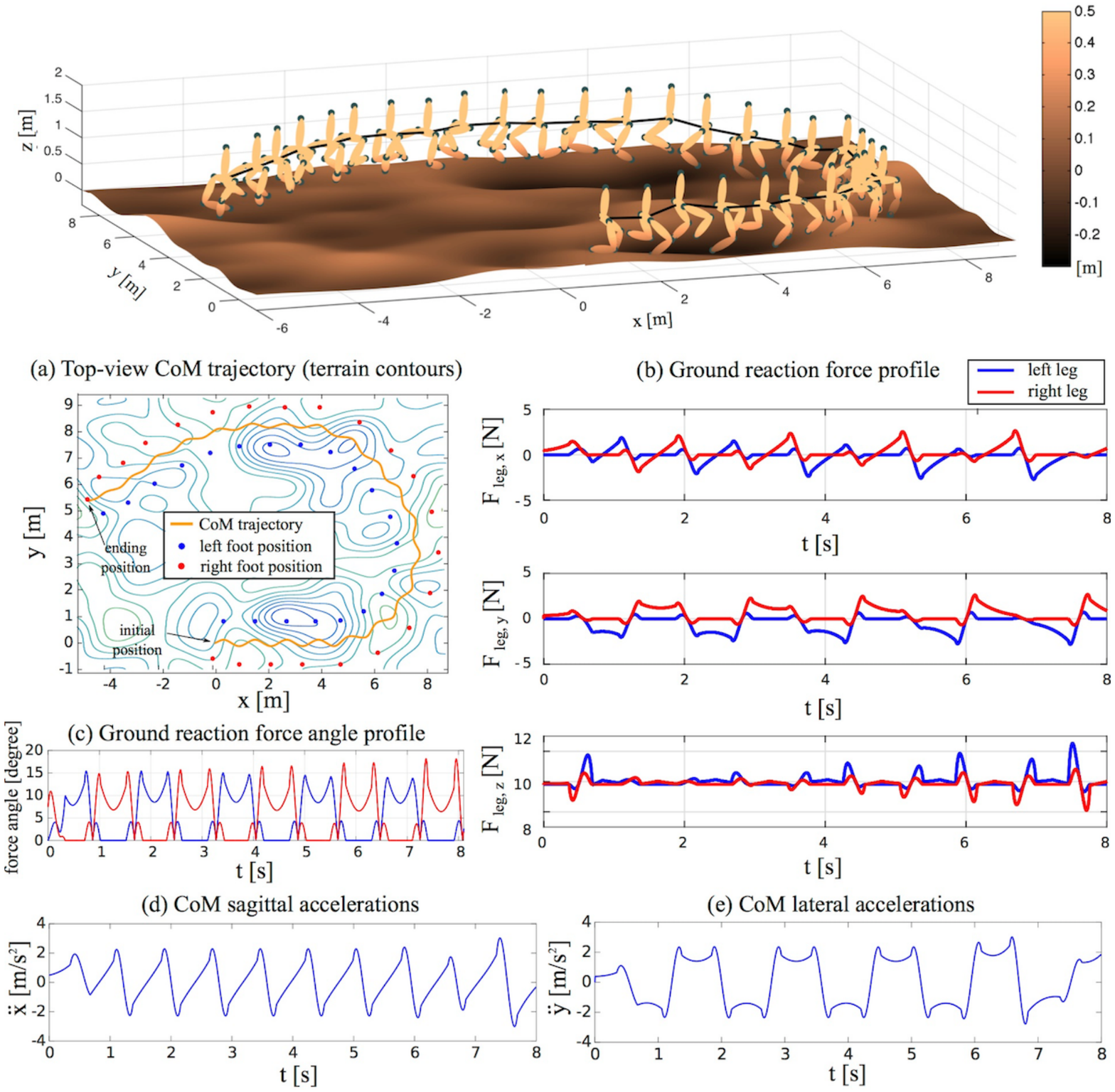}
   \vspace{-1in}
 \caption{\captionsize Circular walking over a random rough terrain. The figure above shows dynamic walking while steering in 3D. The terrain height randomly varies within [-0.24, 0.3] m. Subfigure (a) shows the top view of the CoM trajectory and the foot locations given the terrain contour. (b) shows each leg's ground reaction forces in local coordinate. The reaction forces at step transitions are smooth thanks to the dual-contact control policy. (c) shows that the angle of reaction forces is constrained within the $45^\circ$ friction cone. (d) and (e) show smooth CoM sagittal and lateral accelerations.}
 \label{fig:legforceProfile}
\end{figure*}

\subsubsection{Recovery from Disturbance on the Lateral Plane}

For this simulation, we make the robot once more walk on the rough terrain according to its nominal plan. Then, in its third step, we apply a lateral disturbance as shown in Fig.~\ref{fig:SequantialSnapshot} (b) and (c). To deal with this disturbance, a new lateral foot placement is re-planned according to Algorithm 2. 
\subsection{Circular Walking over a Rough Terrain}
\label{subsec:circularwalking}
Circular walking over a random rough terrain is shown in Fig.~\ref{fig:legforceProfile}. We use this example to validate the steering capability of our planner. The walking direction is defined by the heading angle $\theta$ shown in Def.~\ref{def:snpg}. The planning process is performed in the robot's local coordinate with respect to the heading angle. We then apply a local-to-global transformation. Also, this simulation validates the steering direction model introduced in Section~\ref{subsec:steeringDirec} and smoothness of the leg force profile by using multi-contact dynamics in Section~\ref{subsec:dualContact}.

To validate the applicability of our phase-space planner for more challenging terrain topologies, we test dynamic bouncing over disjoint terrain, and a preliminary study is shown in Appendix~\ref{sec:bouncingOverTerrain} and Fig.~\ref{fig:wedgejumping}.

\section{Discussions and Conclusions}
\label{sec:Discussion}

The main focus of this paper has been on addressing the needs for planning non-periodic bipedal locomotion behaviors. These types of behaviors arise in situations where terrains are non-flat, extremely rough, or even disjointed. The majority of bipedal locomotion methodologies have been historically focused on flat terrain or mildly rough terrain locomotion behaviors. Some of them are making their way into dynamically climbing stairs or inclined terrains. Additionally, Raibert experimented with planar hopping locomotion over rough terrains in the middle-80s. In contrast, our effort is centered around the goals of (i) providing metrics of robustness in rough terrain for robust control of the locomotion behaviors, (ii) generalizing gaits to any types of surfaces, (iii) providing formal tools to study planning, robustness, and reachability of the non-periodic gaits, and (iv) demonstrating the ability of our method to deal with large external disturbances.

In the nominal trajectory generation process of Algorithm 1, we assume a sequence of foot placements given a priori. There exist optimization methods determining discrete foot placements and therefore it has not been a focus of ours to explore this issue. As to the apex state design, this study uses a heuristic related to the terrain heights. More recently, we have proposed an advanced keyframe decision maker based on temporal-logic-based formal methods [\cite{zhao2016high}]. Choosing apex states in our planner strategy is not only a mathematical convenience but also enables designers to plan non-periodic apex velocities which is related to the walking speed. Apex states are a type of salient points more natural to regulate the walking speed.

Our choice of providing a priori CoM surfaces can be traced back to our initial design methodology for this line of work. Initially, we extracted the CoM trajectory from capturing human walking over rough terrains [\cite{zhao2012three, zhao2016humanwalking}]. We observed that the CoM trajectory approximately conforms to the terrain height and slope. This observation prompted us to use the following three-step procedure: 1) design the piecewise-linear CoM plane approximately in parallel with the terrain slope; 2) design heuristics to adjust the CoM plane sagittal and lateral slopes (i.e., tilting angles) according to the walking phases (step acceleration or deceleration phases). 

Our method could use generic CoM surfaces like in [\cite{morisawa2005pattern}], but the dynamics of Eq. (\ref{eq:accel}) would become more complicated. In that case, deriving an analytical phase-space distance metric should be done based on numerical approximation and curve fitting (i.e. NURBS). To avoid this added complexity, we chose to rely on the piecewise-linear CoM surface  model and smooth it out using multi-contact dynamics. We believe that our current method presented in this paper is sufficient to achieve smooth locomotion without using a more complicated metric.

Our planner is based on a simplified inverted pendulum model, which ignores swing leg dynamics. However, this type of dynamics can significantly affect the actual motion tracking performance. In the future, we will explore more sophisticated models that include this type of dynamics. In the dynamic programming approach of Eq. (\ref{eq:optimization-1}), we only constrain the pitch torque while the pitch angle does not have limits. The focus of the manuscript so far has been on the generation of the trajectories and on outlining a robust control approach. However, for real implementation users need to incorporate the dynamics of the flywheel to constraint the torso pitch's range of motion.

Zero lateral velocity at the sagittal apex is a simple heuristic that prevents the center of mass from drifting away from the local frame. It is important to remark that this heuristic is specified in the local frame, and therefore it accounts for the steering angle. As such, when considering the global frame, the lateral velocity at the end of each step is effectively non-zero.

The lateral foot placement is an output of the planner. Each time a new sagittal foot placement is re-planned in an online fashion, the lateral foot placement has also to be re-planned. We view this online re-planning stage, described in Subsections~\ref{subsec:secondoptimization} and \ref{subsec:searchlateral}, as a controller which is a part of the runtime methodology that should be implemented in real experiments.


Overall, future extensions of this work include: (i) Experimental validations, where additional constraints, modeling, pose estimation, and kinematic errors, among other problems will need to be considered. (ii) Proposing a more realistic robot model that incorporates swing leg dynamics. (iii) Devising more sophisticated trajectory optimization methods to design optimal motion trajectories [\cite{hereid20163d, pardo2017hybrid, xi2016selecting}] and even incorporating contact forces into a larger optimization problem [\cite{posa2014direct, mordatch2012discovery, dai2014whole}]. (iv) Proposing a realistic terrain perception model that does not assume perfect terrain information. In that case, we can design more realistic robust controllers according to terrain disturbances. Robust recovery studies in the presence of terrain perturbations in [\cite{piovan2016approximation, dai2012optimizing, griffin2016nonholonomic}] are valuable references.

 \begin{funding}
 This work was supported by the Office of Naval Research, ONR Grant [grant \#N000141512507], NASA Johnson Space Center, NSF/NASA NRI Grant [grant \#NNX12AM03G], and NSF CPS Synergy Grant [grant \#1239136].
 \end{funding}

\begin{appendix_sec}
\appendix
\label{appen}

\section{Index to Multimedia Extensions}

\begin{tabular}{l*{6}{c}r}
Extension  & Type & Description \\
\hline
& & Demonstrations of four locomotion simulations including (1) seven-step  \\
1 & Video & phase-space motion planning; (2) dynamic walking over rough terrains; (3) dynamic \\
& & walking under external disturbances; (4) bouncing maneuver over disjointed terrain.\\
\hline
\end{tabular}
\section{Phase-Space Manifold}
\label{sec:PSMBasics}
The desired behavior of the outputs lie in manifolds $\mathcal{M}_i$ as shown in Eq. (\ref{eq:surface}).  Here we present a brief review of space curves and surfaces that relate to the \cool{phase-space manifold} (\cool{PSM}) and present a Riemannian geometry metric that can be generalized to this family of problems. 
 
The trajectory of the center-of-mass (CoM) of the robot is a space curve in 3D, i.e., $\boldsymbol{p}_{\rm CoM}=(x,y,z)^T\onR{3}$.  Also, for a particular output $y_i$ (i.e., one element of Eq.~(\mbox{\ref{eq:plantStates}}b)), if we consider the case of an output-task with relative order $r_i=3$, the manifold $\mathcal{M}_i\onR{r_i}$ is a space curve $\mathcal{C}_i$ in Euclidean three-dimensional space (see Fig. \ref{fig:spacecurve}).   We assume that the curve is parametrized by an arc-length parameter $\zeta$ that we refer to as the phase progression variable in Def.~\ref{def:progVariable}. Hence the position vector $\boldsymbol{\rho}_i$ of any point on the curve can be defined by specifying the value of $\zeta$,
\begin{equation}  \label{(2)}
 \boldsymbol{\rho}_i(\zeta) = \sum_{k=1}^{r_i}\xi_k(\zeta){\bf E}_k
           = \xi_1(\zeta){\bf E}_1 + \xi_2(\zeta){\bf E}_2 + \xi_3(\zeta){\bf E}_3. 
\end{equation}
\begin{figure*}[t]
 \centering
 \includegraphics[width=0.6\linewidth]{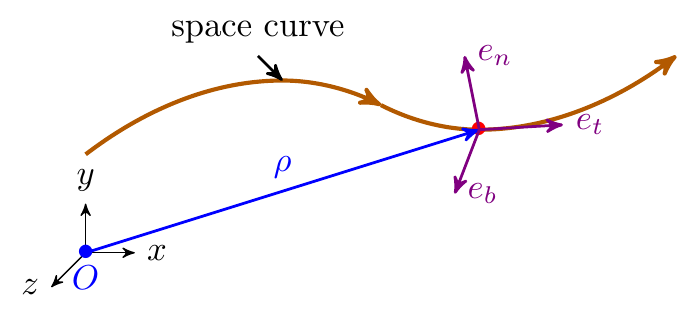}
 \caption{\captionsize A space curve showing the evolution of the Frenet triad.}
 \label{fig:spacecurve}
\end{figure*}
where ${\bf E}_k$ is the unit vector in the $k$-axis of the Euclidean space and $\xi_k(\zeta)$ is the projection coordinate of $\boldsymbol{\rho}$ on ${\bf E}_k$. A unit tangent vector ${\bf e}_t$ to the curve can also be defined,
%
\begin{equation} \label{(3)}
 {\bf e}_t = \dfrac{\partial \boldsymbol{\rho}_i}{\partial \zeta}. 
 \end{equation}
The derivative of this vector defines the curvature $\kappa$ and the unit normal vector ${\bf e}_n$,
%
\begin{equation} \label{(4)} 
 \dfrac{\partial {\bf e}_t}{\partial \zeta} = \kappa {\bf e}_n, 
\quad {\rm where} \quad
    \kappa = \left|\!\left| \dfrac{\partial {\bf e}_t}{\partial \zeta}\right|\!\right|. 
\end{equation}
In the case of a space surface (where $\boldsymbol{\rho}_i$ belongs to a manifold $\mathcal{M}_i$ in the output phase-space), instead of a vector ${\bf e}_t$, we have a tangent  manifold, denoted by $T_{\mathcal{M}_i}$.  The tangent space at any point can be mapped to the vector $\boldsymbol{\chi}_i\in \mathbb{R}^{r_i-1}$ that spans $T_{\mathcal{M}_i}$.  Without loss of generality, the actual motion is a specific line in space curve $\mathcal{M}_i$.  The tangent vector in the manifold $\mathcal{M}_i$ is $\boldsymbol{e}_\zeta=({\bf e}_i)_t$ while the cotangent vector in the manifold is ${\bf e}_\sigma=({\bf e}_i)_n$. ${\bf e}_\sigma$ denotes the normal deviation distance from the surface $\sigma_i$. For $r_i=3$, the binormal vector ${\bf e}_b$ is orthogonal to ${\bf e}_t$ and ${\bf e}_n$. These three vectors are called the Frenet space. These three Frenet frame vectors are proportional to the first three derivatives of the curve $\boldsymbol{\rho}$, as a benefit of taking the arc length $\zeta$ as the parameter.

In disturbance-free cases, the system will remain in the manifold if it starts on it.  It can be considered as the zero dynamics of the surface deviation $\sigma_i$.  When disturbance occurs, the state may escape the manifold and the controller should bring it back for recovery. To define a metric on the manifold itself and normal to it, we use Riemannian Geometry. In general, we treat each manifold $\mathcal{M}_i$ in Eq.~(\ref{eq:surface}) of the task-space $i^{\rm th}$-coordinate as independent from each other. The actual task manifold $\mathcal{M}$ is the intersection of all $\mathcal{M}_i$ manifolds,
\begin{equation} \label{(7)}
 \mathcal{M} = \bigcap_{i}^{} \mathcal{M}_i. 
\end{equation}
This manifold also has a tangent manifold $T_{\mathcal{M}} \in \mathbb{R}^r$, where $r=\sum_i{r_i}$.  Each manifold $\mathcal{M}_i$, separately have a null-space (cotangent manifold $T^{*}_{\mathcal{M}_i}$) and their intersection is the task cotangent manifold $T^{*}_{\mathcal{M}}$. 

In relation to the manifolds presented in Section~\ref{sec:model}, the relationship between CoM configuration manifolds and CoM phase-space manifolds can be visualized via the set diagram of Fig.~\ref{fig:surfacemanifold}.
\section{Derivation of The PIPM Dynamics in Eq.~(\ref{eq:accel})}
\label{sec:derivationEq}
We first expand Eq.~(\ref{eq:balance-3}) as follows
\begin{align}\label{eq:xzorigin}
(z  - z_{{\rm foot}_q}) \cdot m \ddot{x} &= (x - x_{{\rm foot}_q}) \cdot m(\ddot{z} + g ) - \tau_y,\\\label{eq:yzorigin}
-(z  - z_{{\rm foot}_q})\cdot m\ddot{y} &= ( y - y_{{\rm foot}_q}) \cdot m (\ddot{z} + g) - \tau_x,\\\label{eq:xyorigin}
(x  - x_{{\rm foot}_q})\cdot m\ddot{y} &= ( y - y_{{\rm foot}_q}) \cdot m \ddot{x} - \tau_z.
\end{align}
\begin{figure*}[!t]
 \centering
 \includegraphics[width=0.8\linewidth]{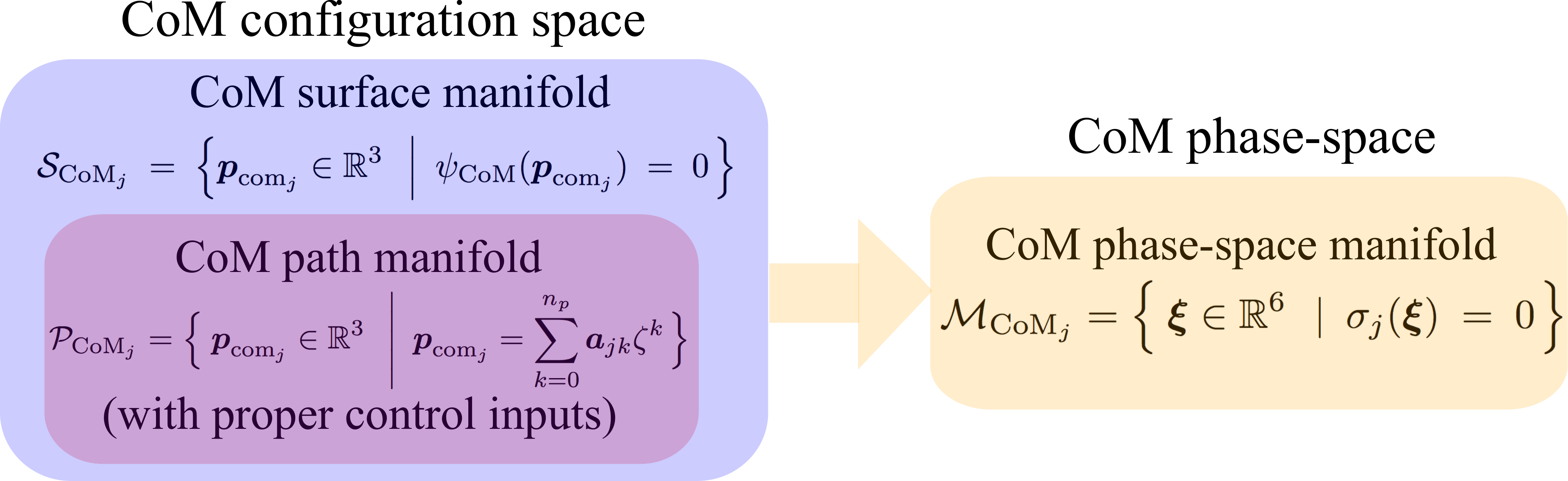}
 \caption{\captionsize Relationship between the CoM configuration space manifolds and the CoM phase-space manifold.}
 \label{fig:surfacemanifold}
\end{figure*}
\begin{figure*}[t]
 \centering
 \includegraphics[width=0.95\linewidth]{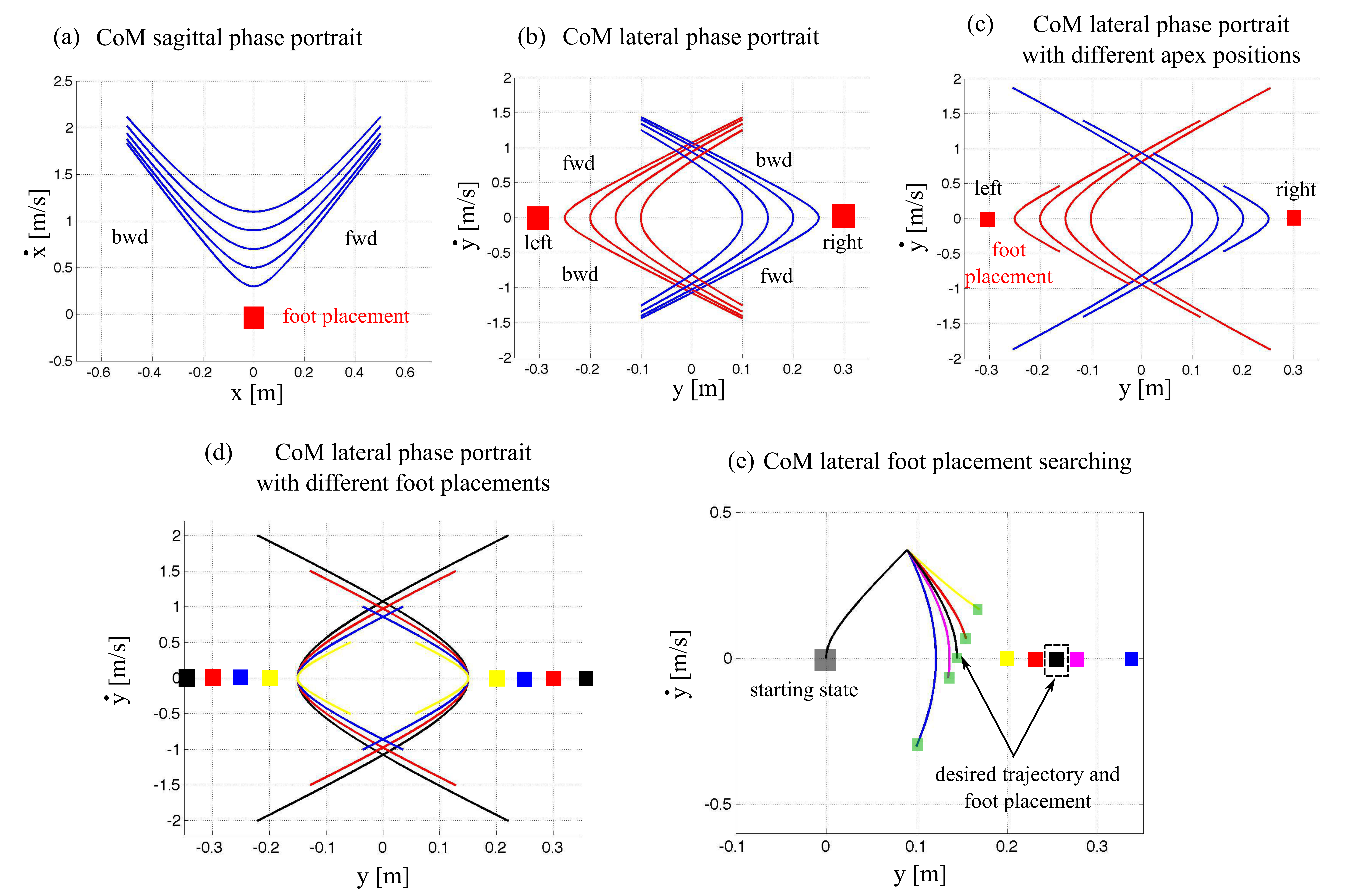}
 \caption{\captionsize Sagittal and lateral phase-space dynamics of one walking step. These phase-space trajectories are generated based on the prismatic
   inverted pendulum dynamics in Eq.~(\ref{eq:accel}) and phase-space manifold in Eq.~(\ref{eq:simplifiedPSM}). The phase portraits (a) and (b) correspond to the sagittal and
   lateral CoM phase behaviors given desired foot contact
   locations (red boxes), a desired CoM surface of motion,
   and initial position and velocity conditions. If the same duration is guaranteed for each trajectory, we can derive two different trajectories shown in (c) and (d) based on different initial conditions.  (c) shows lateral CoM behaviors given a fixed lateral foot placement and varying initial lateral position conditions. (d) corresponds to CoM
   trajectories derived given varying lateral foot placements and a fixed
   initial condition. In (e), we analyze lateral CoM trajectories of two consecutive steps with varying lateral foot placements. As the foot placement moves further apart, the acceleration becomes larger and the CoM position transverses (at the sagittal apex) less in the y direction.}
 \label{fig:Numerical_Integration}
\end{figure*}
Given the 3D surface in Eq.~(\ref{eq:linear2DSurface}), we differentiate it twice and obtain
\begin{align}\label{eq:accel_surface}
\ddot{z} = a_q \ddot{x} + b_q \ddot{y}.
\end{align}
Substituting Eq.~(\ref{eq:accel_surface}) into Eqs. (\ref{eq:xzorigin}) and (\ref{eq:yzorigin}), we have
\begin{align}\label{eq:xdynamics}
(a_q x + b_q y + c_q - z_{{\rm foot}_q})\ddot{x} - ( x - x_{{\rm foot}_q}) (a_q \ddot{x} + b_q \ddot{y}
  + g) + \tau_y/m &= 0,\\\label{eq:ydynamics}
 (a_q x + b_q y + c_q - z_{{\rm foot}_q}) \ddot{y} - ( y - y_{{\rm foot}_q}) (a_q \ddot{x} + b_q \ddot{y}
  + g) + \tau_x/m &= 0.
\end{align}
Combining Eqs. (\ref{eq:xyorigin}) and (\ref{eq:xdynamics}), we obtain
\begin{equation}\label{eq:xdynamicsfinal}
\ddot{x} = \frac{( x - x_{{\rm foot}_q}) g}{a_q x_{{\rm foot}_q} + b_q y_{{\rm foot}_q} + c_q - z_{{\rm foot}_q}} - \frac{\tau_y + b_q \tau_z}{m(a_q x_{{\rm foot}_q} + b_q y_{{\rm foot}_q} + c_q - z_{{\rm foot}_q})}.
\end{equation}
Combining Eqs. (\ref{eq:xyorigin}) and (\ref{eq:ydynamics}), we obtain
\begin{equation}\label{eq:ydynamicsfinal}
\ddot{y} = \frac{( y - y_{{\rm foot}_q}) g}{a_q x_{{\rm foot}_q} + b_q y_{{\rm foot}_q} + c_q - z_{{\rm foot}_q}} - \frac{\tau_x + a_q \tau_z}{m(a_q x_{{\rm foot}_q} + b_q y_{{\rm foot}_q} + c_q - z_{{\rm foot}_q})}.
\end{equation}
By combining Eqs.~(\ref{eq:accel_surface}), (\ref{eq:xdynamicsfinal}) and (\ref{eq:ydynamicsfinal}), $\ddot{z}$ can be derived accordingly. By defining a phase-space asymptotic slope $\omega_q$ as shown in Eq.~(\ref{eq:CoMaccelRate}), we can obtain Eq.~(\ref{eq:accel}) for the prismatic inverted pendulum based control system.

\section{Robust Hybrid Automaton}
\label{sec:RHAutomaton}
Mathematical notations for the robust hybrid automaton of Section~\ref{subsec:RHLA}:

\begin{itemize}
 \item $\zeta$ is the phase-space progression variable;

 \item $\mathcal{Q}$ is the set of discrete states;

 \item $\mathcal{X}$ is the set of continuous states. The system state is augmented to $\boldsymbol{s} \coloneqq \zeta \times \mathcal{Q} \times \mathcal{X}$ in a hybrid state space;

 \item $\mathcal{U} \coloneqq \{\boldsymbol{u}_q, q \in \mathcal{Q}\}$, is the set of control inputs. $\mathcal{U} = \{\boldsymbol{u}_c\} \cup \{\boldsymbol{u}_d\}$ where $\boldsymbol{u}_c, \boldsymbol{u}_d$ are continuous and discrete control inputs, respectively;

 \item $\mathcal{W}$ is the set of disturbances;

 \item $\mathcal{F}$ is the vector field, with $\mathcal{F}: \zeta \times \mathcal{Q} \times \mathcal{X} \times \mathcal{U} \times \mathcal{W} \rightarrow T_{\mathcal{X}}$, where $T_{\mathcal{X}}$ is tangent bundle of $\mathcal{X}$;
 
\item $\mathcal{I} \coloneqq \zeta \times \mathcal{Q} \times \mathcal{X}$, is the initial condition;

\item $\mathcal{D}(q): \mathcal{Q} \rightarrow 2^\mathcal{X}, q \in \mathcal{Q}$, is the domain\footnote{$2^\mathcal{X}$ represents the power set (all the subsets) of $\mathcal{X}$.};

\item $\mathcal{R} \coloneqq \{\mathcal{R}_q, q \in \mathcal{Q}\}$, is the collection of recoverability bundles;

\item $\mathcal{B} \coloneqq \{\mathcal{B}_q, q \in \mathcal{Q}\}$, is the collection of invariant bundles;

\item $\mathcal{E}(q, q+1) \coloneqq \mathcal{Q}\times\mathcal{Q}$, is the edge;

\item $\mathcal{G}(q, q+1): \mathcal{Q}\times\mathcal{Q}\rightarrow2^{\mathcal{X}_q}$ is the \cools{guard}, which is abbreviated as $\mathcal{G}_{q \rightarrow q+1}$; $\mathcal{G}(q, q+1) = \cup_{\tau,\mu}^{~}\mathcal{G}^{[\tau]}_{\mu}$, where $\tau$ and $\mu$ denote transition types defined in the paragraph below Table~\ref{table:SwitchingGuard}\footnote{More details are provided in [\cite{branicky1998unified}].}.

\item $\mathcal{T}(q, q+1): \mathcal{Q}\times\mathcal{Q}\rightarrow2^{\mathcal{X}_{q+1}}$ is the transition termination set;

\item $\Delta^{[\tau]}_{\mu(q \rightarrow q+1)}(\boldsymbol{s}_q^{-}, \boldsymbol{u}_{q}^{-}, w_d^{-})$, is the transition map.

\end{itemize}
\noindent Based on the automaton above, the hybrid system can be represented by
\begin{equation}\label{eq:hybridSystemTransitions}
\Sigma_{q} :
\left\{ \begin{array}{ll} 
   \left\{ 
    \boldsymbol{\mathcal{F}}_{\boldsymbol{x}}^{+},\boldsymbol{u}_{q+1}^+, \boldsymbol{x}_{q+1}^+
   \right\} 
   \;\;\leftarrow\;\;
   \Delta^{[\tau]}_{\mu(q \rightarrow q+1)}
            (\boldsymbol{s}_q^{-}, \boldsymbol{u}_{q}^{-}, w_d^{-}),
   &{\rm if}\;\;(\boldsymbol{s}_q^{-}, \boldsymbol{u}_{q}^{-}, w_d^{-}) 
    \in\mathcal{G}^{[\tau]}_{\mu}(q,q+1)
   \\ & \\
  \dot{\boldsymbol{x}}_{q} = 
   \boldsymbol{\mathcal{F}}_{\boldsymbol{x}}(\zeta, q,\boldsymbol{x}_{q}, \boldsymbol{u}_{q}, w_d),
   &
   {\rm otherwise}
\end{array}\right.
\end{equation}
where $\boldsymbol{s}_q = (\zeta_q,q,\boldsymbol{x}_{q}^T)^T$ is the hybrid automaton state. This automaton has non-periodic orbits, since our planning focuses on irregular terrain locomotion. A directed diagram of this non-periodic automaton is shown in Fig.~\ref{fig:Automaton}.
\section{Multi-Contact Maneuvers}
\label{sec:multicontact}

The objective of this section is to incorporate multi-contact transitions into our gait planner to achieve more natural motions. To achieve this capability, we fit a polynomial function with a smooth transition behavior between single contact phase curves at the transition points. For this process, desired boundary values of position, velocity and acceleration are given by the gait designer. It is necessary to also take into account time constraints to guarantee the synchronization of the sagittal and lateral behaviors. Boundary and timing conditions allow us to calculate the coefficients of the polynomials. More mathematical details of this approach can be found in [\cite{zhao2012three}]. In this study, a multi-contact transition curve is created utilizing $25\%$ of the total time slot for a given step. This is consistent with the timing that we have observed in human walking. This percentage is a parameter adjustable by the designer as demanded. The results are shown in Fig.~\ref{fig:Multicontact}.

\begin{figure}[!h]
 \centering
   \includegraphics[width=0.9\linewidth]{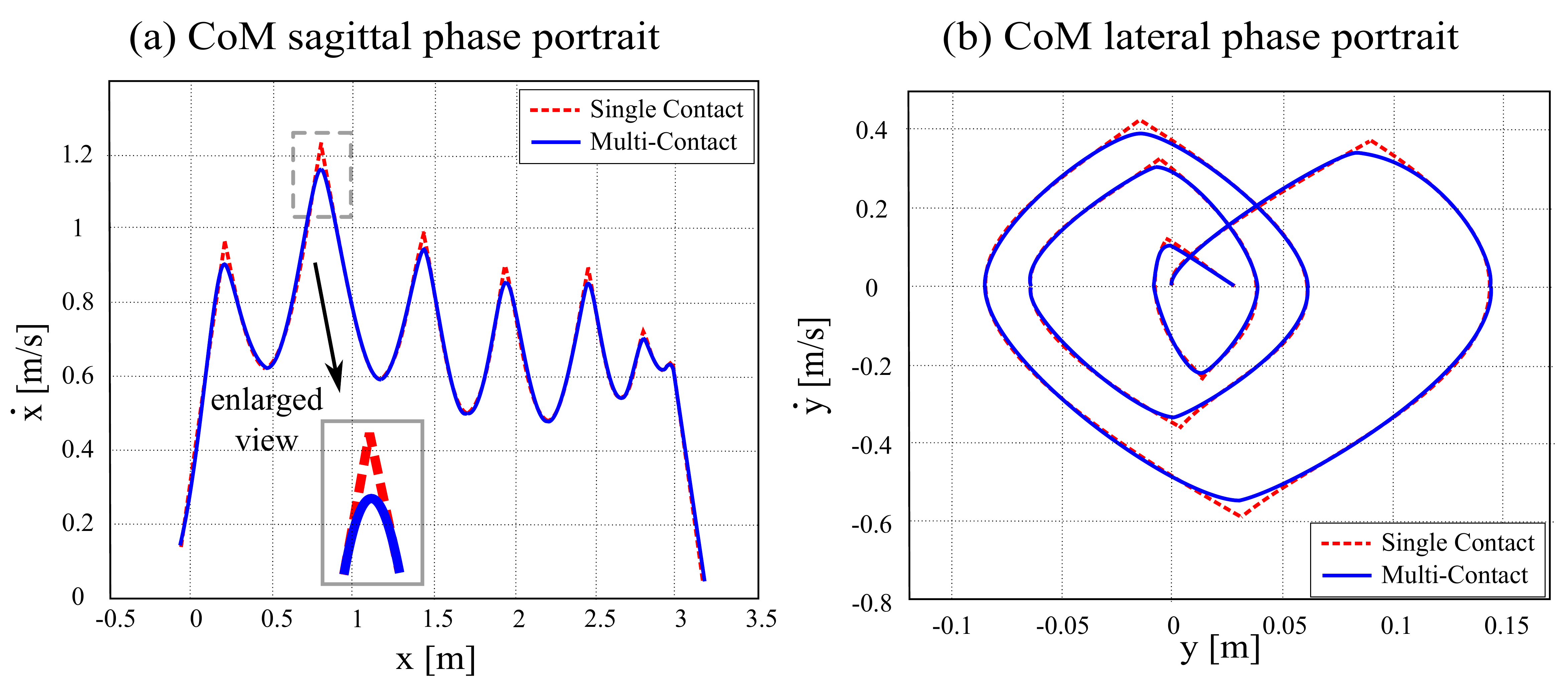}
 \caption{\captionsize Integration of multi-contact transition phases. The subfigures (a) and (b) are similar to their counterparts of Fig.~\ref{fig:3Dsinglecontact} but with an additional multi-contact phase. By using $5^{\rm th}$ order polynomials and guaranteeing continuity with the existing curves, we get the polynomial parameters for curve fitting.}
 \label{fig:Multicontact}
\end{figure}

\section{Proof of Phase-Space Tangent Manifold}
\label{sec:PSMDerivation}

In the nominal control case, the torques $\tau_y$ and $\tau_z$ of Eq.~(\ref{eq:dynx}) are zero. For this case, the sagittal inverted pendulum dynamics are simple, $\ddot{x} = \omega^2(x - x_{\rm foot})$, where $\omega$ is constant for a given step. Since the foot placement $x_{\rm foot}$ is also constant over the step, then $\ddot{x}_{\rm foot} = \dot{x}_{\rm foot} = 0$. Therefore the previous equation is equivalent to $\ddot{x} - \ddot{x}_{\rm foot} = \omega^2 (x - x_{\rm foot})$. Let us define a transformation $\tilde{x} = x - x_{\rm foot}$. We can then write $\ddot{\tilde{x}} = \omega^2 \tilde{x}$. Using Laplace transformations, we have $s^2 \tilde{x}(s) - \tilde{x}_0 - s \dot{\tilde{x}}_0 = \omega^2 \tilde{x}(s)$. Based on this, we get
\begin{equation}
\tilde{x}(t) = \mathscr{L}^{-1}\{\dfrac{\tilde{x}_0 + s\dot{\tilde{x}}_0}{s^2 - \omega^2}\}.
\end{equation}
Solving the equation above, we can derive an analytical solution
\begin{align}\label{eq:acceleration3}
\tilde{x}(t) = \dfrac{\tilde{x}_0 (e^{\omega t} + e^{-\omega t})}{2} + \dfrac{\dot{\tilde{x}}_0
(e^{\omega t} - e^{-\omega t})}{2 \omega} = \tilde{x}_0 {\rm cosh}(\omega t) + \dfrac{1}{\omega} \dot{\tilde{x}}_0 {\rm sinh}(\omega t),
\end{align}
and by taking its derivative, we get
\begin{align}\label{eq:acceleration3-0}
\dot{\tilde{x}}(t) = \omega \tilde{x}_0 {\rm sinh}(\omega t) + \dot{\tilde{x}}_0 {\rm cosh}(\omega t).
\end{align}
These two equations can be further expressed as
\begin{eqnarray}\label{eq: closedformposition}
x(t) &=& (x_0 - x_{\rm foot}) {\rm cosh}(\omega t) + \dfrac{1}{\omega} \dot{x}_0 {\rm sinh}(\omega t) + x_{\rm foot}
\\
\label{eq: closedformvelocity}
\dot{x}(t) &=& \omega (x_0 - x_{\rm foot}) {\rm sinh}(\omega t) + \dot{x}_0 {\rm cosh}(\omega t)
\end{eqnarray}
Now we have the following state space formulation
\begin{eqnarray}\nonumber
\begin{pmatrix}
x(t) - x_{\rm foot}\\
\dot{x}(t)\\
\end{pmatrix}
&=& \begin{pmatrix}
x_0 - x_{\rm foot} & \dot{x}_0 / \omega\\
\dot{x}_0 & \omega (x_0 - x_{\rm foot})\\
\end{pmatrix}
\begin{pmatrix}
{\rm cosh}(\omega t)\\
{\rm sinh}(\omega t)\\
\end{pmatrix} \qquad
\end{eqnarray}\nonumber
which implies
\begin{eqnarray}\label{eq: sinhcosh}
\begin{pmatrix}
{\rm cosh}(\omega t)\\
{\rm sinh}(\omega t)\\
\end{pmatrix}
&=& \dfrac{1}{\omega (x_0 - x_{\rm foot})^2 - \dot{x}^2_0/\omega}
\begin{pmatrix}
\omega (x_0 - x_{\rm foot}) & -\dot{x}_0 / \omega\\
-\dot{x}_0 & x_0 - x_{\rm foot}\\
\end{pmatrix}
\begin{pmatrix}
x - x_{\rm foot}\\
\dot{x}\\
\end{pmatrix}
\end{eqnarray}
since ${\rm cosh}^2(x) - {\rm sinh}^2(x) = 1$, we get 
\begin{align}\label{eq:surface0}
\big(\omega(x_0 - x_{\rm foot}) (x - x_{\rm foot}) - \dot{x}_0 \dot{x}/\omega\big)^2 - \big(- \dot{x}_0 (x - x_{\rm foot}) 
 + \dot{x} (x_0 - x_{\rm foot})\big)^2 = \big(\omega (x_0 - x_{\rm foot})^2 - \dot{x}^2_0/\omega\big)^2
\end{align}
After expanding the square terms and moving all terms to one side, we obtain 
\begin{align}
(x_0 - x_{\rm foot}) ^2\big(2\dot{x}^2_0 - \dot{x}^2 + \omega^2 (x - x_0) (x + x_0 - 2x_{\rm foot})\big) - \dot{x}^2_0 (x - x_{\rm foot})^2 + \dot{x}^2_0 (\dot{x}^2 - \dot{x}^2_0)/\omega^2 = 0
\end{align}
which is the phase-space tangent manifold $\sigma$ defined in Proposition~\ref{theorem:PSM}. $\qquad \qquad\qquad \qquad\qquad\qquad\qquad\qquad\qquad\qquad\qquad\qquad \qed$

\setcounter{algorithm}{2}
\begin{algorithm}[h!]
\begin{algorithmic}[1]
\setstretch{1.1}
\STATE Initialize walking step index $k \gets 1$, discrete state $q$, initial condition $\mathcal{I}_{q}$, $\epsilon$ for invariant bundle $\mathcal{B}_{q}(\epsilon)$, 
stage update indicator $b_{\rm update} \gets \FALSE$.
\WHILE{$\boldsymbol{x}_{q} \notin \mathcal{G}_a^{[\delta_j]}(q, q+1)$}
\IF{$\boldsymbol{x}_{q} \in \mathcal{G}_d^{[\delta_j]}(q, q_{\rm dist})$}
\STATE Execute $\Delta_{d(q \rightarrow q_{\rm dist})}^{[\delta_j]}(w_d)$ and quantize the disturbed state $\boldsymbol{x}_{q_{\rm dist}}$.
\STATE Generate optimal policies $(\zeta, \dot{x}, \ddot{x}, \tau, \omega, \mathcal{L}, \mathcal{V})_{\rm opt}$ by dynamic programming.
\STATE $b_{\rm update} \gets \TRUE$.
\STATE Compute the phase-space manifold $\sigma_{\rm trans}$ at transition instant by Eq.~(\ref{eq:simplifiedPSM}).
\IF{$\boldsymbol{x}_{q_{\rm trans}} \notin \mathcal{R}_{q}$}
\STATE Re-plan $x_{{\rm foot}_{q+1}}$ by Eq.~(\ref{eq:replanfoot}) and search $y_{{\rm foot}_{q+1}}$ by Algorithm~2.
\ENDIF
\ENDIF
\STATE Compute $\sigma_{i+1}$ over domain $\mathcal{D}_{q}$ by Eq.~(\ref{eq:simplifiedPSM}).
\IF{$b_{\rm update}$ is $\TRUE$}
\STATE Update stage index $i_{\rm stage}$ of recovery optimal control inputs.
\ENDIF
\IF{$\boldsymbol{x}_{q} \notin \mathcal{B}_{q}$}
\STATE Compute $\boldsymbol{u}_{c_{i+1}} =: (\tau_y, \omega)_{i_{\rm stage}}$ by Eq.~(\ref{eq:slidingcontrol_a}) and assign $\ddot{x}_{i+1} \gets \ddot{x}_{\rm opt}(i_{\rm stage})$.
\ELSE 
\STATE Compute $\boldsymbol{u}_{c_{i+1}} =: (\tau_y, \omega)_{i+1}$ by Eq.~(\ref{eq:slidingcontrol_b}) and assign $\ddot{x}_{i+1}$ by Eq.~(\ref{eq:accel}).
\ENDIF
\STATE Evolve $(x_{i+1}, \dot{x}_{i+1})$ over domain $\mathcal{D}_{q}$ by the analytical solution in Eq.~(\ref{eq:simplifiedPSM}).
\STATE $i \gets i + 1$.
\ENDWHILE
\STATE $q \gets q+1$, re-assign $\mathcal{I}_{q+1}$, $b_{\rm update} \gets \FALSE$ and jump to line 2 for next walking step.
\end{algorithmic}
\caption{Overall Robust Hybrid Locomotion Planning Structure}
\label{al:overall-planning}
\end{algorithm}

\section{Proof of Phase-Space Cotangent Manifold}
\label{sec:OrthogonalManifoldDerivation}
In this case, we use the tangent manifold in Eq.~(\ref{eq:simplifiedPSM}) to derive the cotangent manifold $\zeta$. By taking the derivative of Eq.~(\ref{eq:simplifiedPSM}), we have
\begin{align}\label{eq:d_sigma}
d \sigma & = \dfrac{\partial \sigma}{\partial x} dx 
           + \dfrac{\partial \sigma}{\partial \dot{x}} d \dot{x},
\end{align}           
where
\begin{align}
            \dfrac{\partial \sigma}{\partial x}
           = -2 \dot{x}_{\rm apex}^2 (x - x_{\rm foot}), 
           \quad
             \dfrac{\partial \sigma}{\partial \dot{x}}
           =  2 \dot{x}_{\rm apex}^2 \dot{x}/\omega^2.
\end{align}
The $\sigma$ manifold's normal vector is given by its gradient, 
 ${\bf e}_n=\big(-2 \dot{x}_{\rm apex}^2 (x-x_{\rm foot})\;,
                  \;2 \dot{x}_{\rm apex}^2\dot{x}/\omega^2 \big)^T$, 
and its tangent vector is orthogonal to ${\bf e}_n$, i.e.,  
 ${\bf e}_t=\big(2\dot{x}_{\rm apex}^2 \dot{x}/\omega^2\;,\; 
                 2\dot{x}_{\rm apex}^2 (x-x_{\rm foot})\big)^T$. 
 Since $\zeta$ is orthogonal to $\sigma$, the tangent vector of $\zeta$ is the normal vector of $\sigma$, i.e., 
\begin{align}\label{eq:d_zeta}
d \zeta & = \dfrac{\partial \zeta}{\partial x} dx 
          + \dfrac{\partial \zeta}{\partial \dot{x}} d \dot{x},
\end{align}
where
\begin{align}
            \dfrac{\partial \zeta}{\partial x} = 2 \dot{x}_{\rm apex}^2 \dot{x}/\omega^2, 
          \quad 
            \dfrac{\partial \zeta}{\partial \dot{x}}  = -2 \dot{x}_{\rm apex}^2 (x - x_{\rm foot})
\end{align}
Via the equations above, we can further obtain
\begin{align}\label{eq:dxdy}
\dfrac{d \dot{x}}{d x} = -\dfrac{\dot{x}}{\omega^2 (x - x_{\rm foot})} 
                 \qquad \Rightarrow \qquad 
                 \omega^2 \int_{\dot{x}_0}^{\dot{x}}\dfrac{d\dot{x}}{\dot{x}} 
                 = -\int_{x_0}^{x} \dfrac{dx}{x-x_{\rm foot}}
\end{align}
then we have
\begin{align}\label{eq:ln}
\ln(\dfrac{\dot{x}}{\dot{x}_0})^{\omega^2} + \ln \dfrac{x - x_{\rm foot}}{x_0 - x_{\rm foot}} = 0
\qquad \Rightarrow \qquad 
(\dfrac{\dot{x}}{\dot{x}_0})^{\omega^2} \dfrac{x - x_{\rm foot}}{x_0 - x_{\rm foot}} = 1
\end{align}
Thus, the cotangent manifold can be defined as 
\begin{align}\label{eq:cotangent-manifold}
\zeta  = \zeta_0(\dfrac{\dot{x}}{\dot{x}_0})^{\omega^2} \dfrac{x - x_{\rm foot}}{x_0 - x_{\rm foot}}
\end{align}
where the constant $\zeta_0$ is a nonnegative scaling factor, which is chosen as the phase progression value when contact switches occur. $(x_0, \dot{x}_0)$ is the initial condition at $\zeta=\zeta_0$. The equation above is the phase-space cotangent manifold $\zeta$ defined in Proposition~\ref{prop:PSCoM}. $\qquad\qquad\qquad\qquad\qquad\qquad\qquad\qquad\qquad\qquad\qquad\qquad\qquad\qquad\quad\qquad\qquad\qquad\qquad\qquad\qquad\qquad\qquad\qquad\qquad\qquad\qquad\qed$

\section{Dynamic Programming}
\label{sec:DP}
Dynamic programming divides a multi-period planning problem into simpler subproblems at different stages. In contrast to using a traditional time discretization, our study discretizes CoM sagittal position. Our objective is to generate recovery control policies offline for all admissible disturbance and store them as a policy table. Therefore, recovery can be achieved by looking up the table when disturbances are detected at runtime. We implement a grid-search backward DP. The cost function in Eq.~(\ref{eq:optimization-1}) can be defined as the value function $\mathcal{V}(q, \boldsymbol{x}_N)$. According to Bellman's equation, one step optimization at $n^{\rm th}$ stage is
\begin{align}\label{eq:onestepOptimization}
\mathcal{V}_n(q, \boldsymbol{x}_n) = \underset{\boldsymbol{u}_{\boldsymbol{x}}^c}{\text{min}}
\;\; & \mathcal{L}_n(q, \boldsymbol{x}_n, \boldsymbol{u}_{\boldsymbol{x}}^c) + \mathcal{V}_{n+1}(q, \boldsymbol{x}_{n+1}),
\end{align}
\noindent which is known as principle of optimality for discrete systems\footnote{Since the cost is computed iteratively in a backward way, the stage index $n$ decreases. In this value iteration algorithm, $\mathcal{V}_{n+1}(q, \boldsymbol{x}_{n+1})$ represents the total optimal cost from $(n+1)^{\rm th}$ stage to the terminal $N^{\rm th}$ stage for all feasible states $\dot{x}_{n+1}$. Accordingly, we solve the optimal control sequence from $(n+1)^{\rm th}$ to $N^{\rm th}$ stage for all $\dot{x}_{n+1}$. Then for $n^{\rm th}$ stage, we only need to solve the optimal cost from $n^{\rm th}$ to $(n+1)^{\rm th}$ stage.}.  Note that, the control input $\boldsymbol{u}_{\boldsymbol{x}}^c$ as well as CoM acceleration are assumed to be constant within one stage. For the integral of $\mathcal{L}_n(q, \boldsymbol{x}_n, \boldsymbol{u}_{\boldsymbol{x}}^c)$, linear interpolation is used to estimate CoM velocities. The velocities of two consecutive stages satisfy $\dot{x}_{n+1} = \dot{x}_{n} + \ddot{x}_{n} T_{n}$, where $T_{n}$ is the duration of one stage. Given the constant acceleration within one stage, we have
$\delta x_{n} = (\dot{x}_{n+1} + \dot{x}_{n})T_{n}/2$. 
Combining the two equations above, we can derive the constant acceleration

\begin{align}
\ddot{x}_{n} = \frac{\dot{x}_{n+1}^2-\dot{x}_{n}^2}{2 \delta x_{n}},
\end{align}
which is used to seed values to the equality constraint in the optimization problem of Eq.~(\ref{eq:optimization-1}), allowing us to solve for the continuous control input.

\begin{figure*}[!bht]
 \centering
   \includegraphics[width=0.95\linewidth]{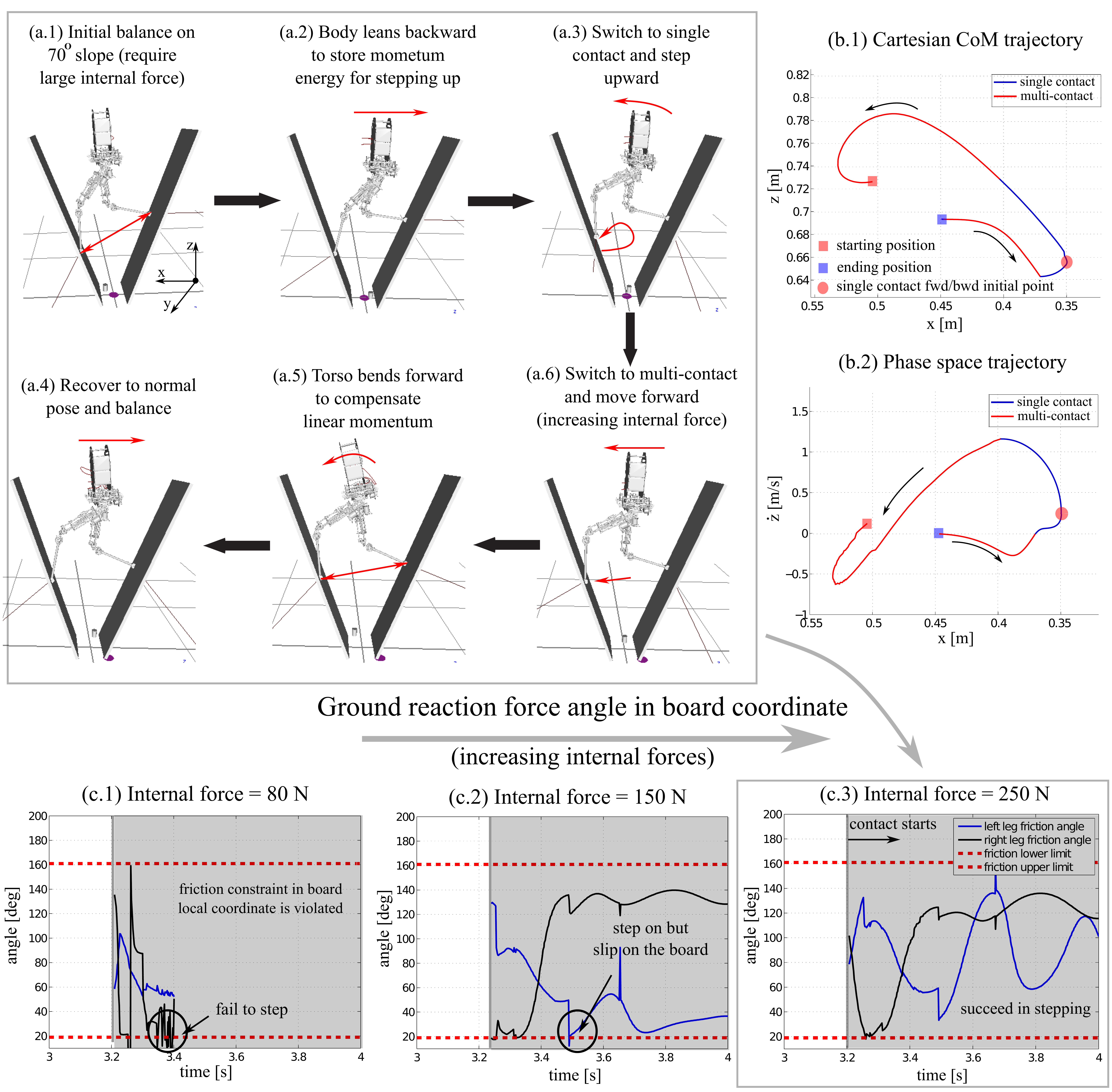}
 \caption{\captionsize Bouncing over a disjointed terrain. In this simulation, a biped balances on a steep disjointed terrain and dynamically bounces upwards. Internal force and torso pitch torque are controlled appropriately to achieve this motion. Subfigure (b) shows the 2D Cartesian CoM trajectory and the $x-\dot{z}$ phase portrait. (c) shows contact reactions for three different desired internal forces.}
 \label{fig:wedgejumping}
\end{figure*}

\section{Preliminary Results on Bouncing Over A Disjointed Terrain}
\label{sec:bouncingOverTerrain}
A more challenging locomotion scenario is preliminarily explored using a disjointed terrain. The slope of the surfaces is $70^\circ$. The goal is to step up over the surfaces by bouncing over the terrain. A physics based dynamic simulation called SrLib is used for validation and a whole body operational space controller [\cite{WBOSC15}] is implemented to follow the locomotion plans. The robot in the dynamic simulator has masses and inertias distributed across its body compared to the previous simulations. It possesses the same degrees of freedom, actuation joints and kinematic parameters as those in previous simulations. Another difference is that this simulation is planarized, meaning that the robot is not allowed to move laterally.
Snapshots of a one-step bouncing behavior are shown in Fig.~\ref{fig:wedgejumping} (a). To successfully bounce over the terrain, we design a CoM path manifold, shown in Fig.~\ref{fig:wedgejumping} (b.1), that mimics that of a pre-recorded human jumping motion [\cite{sentis2011humanoids}]. During the multi-contact phase, we apply a $250$ N internal tension force, shown in Fig.~\ref{fig:wedgejumping} (c), between the two surfaces to avoid sliding down due to the weight of the robot. The torso angular moment is also controlled immediately before and after the stepping-up motion. Our planner for this scenario operates in the $x\textnormal{--}\dot z$ phase-space as shown in Fig.~\ref{fig:wedgejumping} (b.2). This is more convenient as $\dot z$ captures the moment at which the center-of-mass starts falling down. More details on this strategy are discussed in [\cite{sentis2011humanoids}]. Note that the keyframe in this case also becomes defined as a state $(x, \dot{z})$, shown as a red circle in Fig.~\ref{fig:wedgejumping} (b.2). Even though the bouncing behavior on the disjointed terrain is intrinsically different from the previously-studied rough terrain walking, we still use the proposed single contact inverted pendulum model of Section~\ref{subsec:singlecontact} and the multi-contact dynamics of Section~\ref{subsec:dualContact}. The overall behavior is essentially different than the walking cases. The main reasons are that: (i) internal force control is needed to overcome gravity forces on the highly inclined surfaces and that (ii) the multi-contact contact phase is more dominant than the single contact phase.

\end{appendix_sec}

\bibliographystyle{chicago}
\bibliography{IJRR}

\begin{thebibliography}{}

\bibitem[\protect\citeauthoryear{Abdallah and Goswami}{Abdallah and
  Goswami}{2005}]{abdallah2005biomechanically}
Abdallah, M. and A.~Goswami (2005).
\newblock A biomechanically motivated two-phase strategy for biped upright
  balance control.
\newblock In {\em IEEE-RAS International Conference on Robotics and Automation,
  2005}, pp.\  1996--2001.

\bibitem[\protect\citeauthoryear{Arslan and Saranli}{Arslan and
  Saranli}{2012}]{arslan2012reactive}
Arslan, O. and U.~Saranli (2012).
\newblock Reactive planning and control of planar spring--mass running on rough
  terrain.
\newblock {\em IEEE Transactions on Robotics\/}~{\em 28\/}(3), 567--579.

\bibitem[\protect\citeauthoryear{Audren, Vaillant, Kheddar, Escande, Kaneko,
  and Yoshida}{Audren et~al.}{2014}]{audren2014model}
Audren, H., J.~Vaillant, A.~Kheddar, A.~Escande, K.~Kaneko, and E.~Yoshida
  (2014).
\newblock Model preview control in multi-contact motion-application to a
  humanoid robot.
\newblock In {\em IEEE/RSJ International Conference on Intelligent Robots and
  Systems}, pp.\  4030--4035.

\bibitem[\protect\citeauthoryear{Branicky, Borkar, and Mitter}{Branicky
  et~al.}{1998}]{branicky1998unified}
Branicky, M.~S., V.~S. Borkar, and S.~K. Mitter (1998).
\newblock A unified framework for hybrid control: Model and optimal control
  theory.
\newblock {\em IEEE Transactions Automatic Control\/}~{\em 43\/}(1), 31--45.

\bibitem[\protect\citeauthoryear{Brasseur, Sherikov, Collette, Dimitrov, and
  Wieber}{Brasseur et~al.}{2015}]{brasseur2015robust}
Brasseur, C., A.~Sherikov, C.~Collette, D.~Dimitrov, and P.-B. Wieber (2015).
\newblock A robust linear mpc approach to online generation of 3d biped walking
  motion.
\newblock In {\em IEEE-RAS International Conference on Humanoid Robots}, pp.\
  595--601.

\bibitem[\protect\citeauthoryear{Byl and Tedrake}{Byl and
  Tedrake}{2009}]{Byl:09}
Byl, K. and R.~Tedrake (2009).
\newblock Metastable walking machines.
\newblock {\em The International Journal of Robotics Research\/}~{\em 28\/}(8),
  1040--1064.

\bibitem[\protect\citeauthoryear{Caron and Kheddar}{Caron and
  Kheddar}{2016}]{caron2016multi}
Caron, S. and A.~Kheddar (2016).
\newblock Multi-contact walking pattern generation based on model preview
  control of 3d com accelerations.
\newblock In {\em IEEE-RAS 16th International Conference on Humanoid Robots},
  pp.\  550--557.

\bibitem[\protect\citeauthoryear{Carpentier, Tonneau, Naveau, Stasse, and
  Mansard}{Carpentier et~al.}{2016}]{carpentier2016versatile}
Carpentier, J., S.~Tonneau, M.~Naveau, O.~Stasse, and N.~Mansard (2016).
\newblock A versatile and efficient pattern generator for generalized legged
  locomotion.
\newblock In {\em IEEE International Conference on Robotics and Automation},
  pp.\  3555--3561.

\bibitem[\protect\citeauthoryear{Dai and Tedrake}{Dai and
  Tedrake}{2012}]{dai2012optimizing}
Dai, H. and R.~Tedrake (2012).
\newblock Optimizing robust limit cycles for legged locomotion on unknown
  terrain.
\newblock In {\em IEEE Conference on Control and Decision}, pp.\  1207--1213.

\bibitem[\protect\citeauthoryear{Dai, Valenzuela, and Tedrake}{Dai
  et~al.}{2014}]{dai2014whole}
Dai, H., A.~Valenzuela, and R.~Tedrake (2014).
\newblock Whole-body motion planning with centroidal dynamics and full
  kinematics.
\newblock In {\em IEEE-RAS International Conference on Humanoid Robots}, pp.\
  295--302.

\bibitem[\protect\citeauthoryear{Englsberger, Koolen, Bertrand, Pratt, Ott, and
  Albu-Schaffer}{Englsberger et~al.}{2014}]{englsberger2014trajectory}
Englsberger, J., T.~Koolen, S.~Bertrand, J.~Pratt, C.~Ott, and A.~Albu-Schaffer
  (2014).
\newblock Trajectory generation for continuous leg forces during double support
  and heel-to-toe shift based on divergent component of motion.
\newblock In {\em IEEE/RSJ International Conference on Intelligent Robots and
  Systems}, pp.\  4022--4029.

\bibitem[\protect\citeauthoryear{Englsberger, Kozlowski, and Ott}{Englsberger
  et~al.}{2015}]{englsberger2015biologically}
Englsberger, J., P.~Kozlowski, and C.~Ott (2015).
\newblock Biologically inspired deadbeat control for running on 3d stepping
  stones.
\newblock In {\em IEEE-RAS International Conference on Humanoid Robots}, pp.\
  1067--1074.

\bibitem[\protect\citeauthoryear{Englsberger, Ott, and
  Albu-Schaffer}{Englsberger et~al.}{2015}]{englsberger2015three}
Englsberger, J., C.~Ott, and A.~Albu-Schaffer (2015).
\newblock Three-dimensional bipedal walking control based on divergent
  component of motion.
\newblock {\em IEEE Transactions on Robotics\/}~{\em 31\/}(2), 355--368.

\bibitem[\protect\citeauthoryear{Englsberger, Ott, Roa, Albu-Schaffer, and
  Hirzinger}{Englsberger et~al.}{2011}]{englsberger2011bipedal}
Englsberger, J., C.~Ott, M.~Roa, A.~Albu-Schaffer, and G.~Hirzinger (2011).
\newblock Bipedal walking control based on capture point dynamics.
\newblock In {\em IEEE/RSJ International Conference on Intelligent Robots and
  Systems}, pp.\  4420--4427.

\bibitem[\protect\citeauthoryear{Erez and Smart}{Erez and
  Smart}{2007}]{erez2007bipedal}
Erez, T. and W.~D. Smart (2007).
\newblock Bipedal walking on rough terrain using manifold control.
\newblock In {\em IEEE/RSJ International Conference on Intelligent Robots and
  Systems}, pp.\  1539--1544.

\bibitem[\protect\citeauthoryear{Faraji, Pouya, and Ijspeert}{Faraji
  et~al.}{2014}]{faraji2014robust}
Faraji, S., S.~Pouya, and A.~Ijspeert (2014).
\newblock Robust and agile 3d biped walking with steering capability using a
  footstep predictive approach.
\newblock {\em Robotics: Science and Systems (RSS), Berkeley, CA\/}.

\bibitem[\protect\citeauthoryear{Feng, Whitman, Xinjilefu, and Atkeson}{Feng
  et~al.}{2015}]{feng2015optimization}
Feng, S., E.~Whitman, X.~Xinjilefu, and C.~G. Atkeson (2015).
\newblock Optimization-based full body control for the darpa robotics
  challenge.
\newblock {\em Journal of Field Robotics\/}~{\em 32\/}(2), 293--312.

\bibitem[\protect\citeauthoryear{Fern{\'a}ndez-Rodr{\'\i}guez}{Fern{\'a}ndez-Rodr{\'\i}guez}{1988}]{Fernandez:PhD}
Fern{\'a}ndez-Rodr{\'\i}guez, B. (1988).
\newblock {\em Control of Multivariable Nonlinear Systems by the Sliding Mode
  Method}.
\newblock Ph.\ D. thesis, Massachusetts Institute of Technology.

\bibitem[\protect\citeauthoryear{Frazzoli}{Frazzoli}{2001}]{frazzoli2001robust}
Frazzoli, E. (2001).
\newblock {\em Robust hybrid control for autonomous vehicle motion planning}.
\newblock Ph.\ D. thesis, Massachusetts Institute of Technology.

\bibitem[\protect\citeauthoryear{Griffin and Grizzle}{Griffin and
  Grizzle}{2017}]{griffin2016nonholonomic}
Griffin, B. and J.~Grizzle (2017).
\newblock Nonholonomic virtual constraints and gait optimization for robust
  walking control.
\newblock {\em The International Journal of Robotics Research\/}.

\bibitem[\protect\citeauthoryear{Grizzle, Chevallereau, Sinnet, and
  Ames}{Grizzle et~al.}{2014}]{grizzle2014models}
Grizzle, J.~W., C.~Chevallereau, R.~W. Sinnet, and A.~D. Ames (2014).
\newblock Models, feedback control, and open problems of 3d bipedal robotic
  walking.
\newblock {\em Automatica\/}~{\em 50\/}(8), 1955--1988.

\bibitem[\protect\citeauthoryear{Hamed, Buss, and Grizzle}{Hamed
  et~al.}{2016}]{hamed2016exponentially}
Hamed, K.~A., B.~G. Buss, and J.~W. Grizzle (2016).
\newblock Exponentially stabilizing continuous-time controllers for periodic
  orbits of hybrid systems: Application to bipedal locomotion with ground
  height variations.
\newblock {\em The International Journal of Robotics Research\/}~{\em 35\/}(8),
  977--999.

\bibitem[\protect\citeauthoryear{Hauser}{Hauser}{2014}]{hauser2014fast}
Hauser, K. (2014).
\newblock Fast interpolation and time-optimization with contact.
\newblock {\em The International Journal of Robotics Research\/}~{\em 33\/}(9),
  1231--1250.

\bibitem[\protect\citeauthoryear{Hereid, Cousineau, Hubicki, and Ames}{Hereid
  et~al.}{2016}]{hereid20163d}
Hereid, A., E.~A. Cousineau, C.~M. Hubicki, and A.~D. Ames (2016).
\newblock 3d dynamic walking with underactuated humanoid robots: A direct
  collocation framework for optimizing hybrid zero dynamics.
\newblock In {\em IEEE International Conference on Robotics and Automation},
  pp.\  1447--1454.

\bibitem[\protect\citeauthoryear{Hobbelen and Wisse}{Hobbelen and
  Wisse}{2007}]{hobbelen2007disturbance}
Hobbelen, D.~G. and M.~Wisse (2007).
\newblock A disturbance rejection measure for limit cycle walkers: The gait
  sensitivity norm.
\newblock {\em IEEE Transactions on Robotics\/}~{\em 23\/}(6), 1213--1224.

\bibitem[\protect\citeauthoryear{Hof}{Hof}{2008}]{hof2008extrapolated}
Hof, A.~L. (2008).
\newblock The 'extrapolated center of mass' concept suggests a simple control
  of balance in walking.
\newblock {\em Human Movement Science\/}~{\em 27\/}(1), 112--125.

\bibitem[\protect\citeauthoryear{Hofmann}{Hofmann}{2006}]{hofmann2006robust}
Hofmann, A. (2006).
\newblock {\em Robust execution of bipedal walking tasks from biomechanical
  principles}.
\newblock Ph.\ D. thesis, Massachusetts Institute of Technology.

\bibitem[\protect\citeauthoryear{Hubicki, Grimes, Jones, Renjewski,
  Spr{\"o}witz, Abate, and Hurst}{Hubicki et~al.}{2016}]{hubicki2016atrias}
Hubicki, C., J.~Grimes, M.~Jones, D.~Renjewski, A.~Spr{\"o}witz, A.~Abate, and
  J.~Hurst (2016).
\newblock Atrias: Design and validation of a tether-free 3d-capable spring-mass
  bipedal robot.
\newblock {\em The International Journal of Robotics Research\/}~{\em
  35\/}(12), 1497--1521.

\bibitem[\protect\citeauthoryear{Hutter, Sommer, Gehring, Hoepflinger, Bloesch,
  and Siegwart}{Hutter et~al.}{2014}]{hutter2014quadrupedal}
Hutter, M., H.~Sommer, C.~Gehring, M.~Hoepflinger, M.~Bloesch, and R.~Siegwart
  (2014).
\newblock Quadrupedal locomotion using hierarchical operational space control.
\newblock {\em The International Journal of Robotics Research\/}~{\em 33\/}(8),
  1047--1062.

\bibitem[\protect\citeauthoryear{Hyon and Cheng}{Hyon and
  Cheng}{2007}]{hyon2007disturbance}
Hyon, S. and G.~Cheng (2007).
\newblock Disturbance rejection for biped humanoids.
\newblock In {\em IEEE-RAS International Conference on Robotics and Automation,
  2007}, pp.\  2668--2675.

\bibitem[\protect\citeauthoryear{Hyon, Hale, and Cheng}{Hyon
  et~al.}{2007}]{hyon2007full}
Hyon, S.-H., J.~G. Hale, and G.~Cheng (2007).
\newblock Full-body compliant human--humanoid interaction: balancing in the
  presence of unknown external forces.
\newblock {\em IEEE Transactions on Robotics\/}~{\em 23\/}(5), 884--898.

\bibitem[\protect\citeauthoryear{Isidori}{Isidori}{1985}]{Isidori:Book}
Isidori, A. (1985).
\newblock Nonlinear control systems: An introduction.
\newblock In M.~Thomas (Ed.), {\em Lecture Notes in Control and Information
  Sciences}. Berlin: Springer-Verlag.

\bibitem[\protect\citeauthoryear{Kajita, Kanehiro, Kaneko, Fujiwara, Harada,
  Yokoi, and Hirukawa}{Kajita et~al.}{2003}]{kajita2003biped}
Kajita, S., F.~Kanehiro, K.~Kaneko, K.~Fujiwara, K.~Harada, K.~Yokoi, and
  H.~Hirukawa (2003).
\newblock Biped walking pattern generation by using preview control of
  zero-moment point.
\newblock In {\em IEEE International Conference on Robotics and Automation},
  pp.\  1620--1626.

\bibitem[\protect\citeauthoryear{Kim, Zhao, Thomas, Fernandez, and Sentis}{Kim
  et~al.}{2016}]{WBOSC15}
Kim, D., Y.~Zhao, G.~Thomas, B.~R. Fernandez, and L.~Sentis (2016).
\newblock Stabilizing series-elastic point-foot bipeds using whole-body
  operational space control.
\newblock {\em IEEE Transactions on Robotics\/}~{\em 32\/}(6), 1362--1379.

\bibitem[\protect\citeauthoryear{Komura, Leung, Kudoh, and Kuffner}{Komura
  et~al.}{2005}]{komura2005feedback}
Komura, T., H.~Leung, S.~Kudoh, and J.~Kuffner (2005).
\newblock A feedback controller for biped humanoids that can counteract large
  perturbations during gait.
\newblock In {\em IEEE-RAS International Conference on Robotics and Automation,
  2005}, pp.\  1989--1995.

\bibitem[\protect\citeauthoryear{Koolen, De~Boer, Rebula, Goswami, and
  Pratt}{Koolen et~al.}{2012}]{koolen2012capturability}
Koolen, T., T.~De~Boer, J.~Rebula, A.~Goswami, and J.~Pratt (2012).
\newblock Capturability-based analysis and control of legged locomotion, part
  1: Theory and application to three simple gait models.
\newblock {\em The International Journal of Robotics Research\/}~{\em 31\/}(9),
  1094--1113.

\bibitem[\protect\citeauthoryear{Koolen, Posa, and Tedrake}{Koolen
  et~al.}{2016}]{koolen2016balance}
Koolen, T., M.~Posa, and R.~Tedrake (2016).
\newblock Balance control using center of mass height variation: Limitations
  imposed by unilateral contact.
\newblock In {\em IEEE-RAS 16th International Conference on Humanoid Robots},
  pp.\  8--15.

\bibitem[\protect\citeauthoryear{Kuindersma, Deits, Fallon, Valenzuela, Dai,
  Permenter, Koolen, Marion, and Tedrake}{Kuindersma
  et~al.}{2016}]{kuindersma2016optimization}
Kuindersma, S., R.~Deits, M.~Fallon, A.~Valenzuela, H.~Dai, F.~Permenter,
  T.~Koolen, P.~Marion, and R.~Tedrake (2016).
\newblock Optimization-based locomotion planning, estimation, and control
  design for the atlas humanoid robot.
\newblock {\em Autonomous Robots\/}~{\em 40\/}(3), 429--455.

\bibitem[\protect\citeauthoryear{Kuo}{Kuo}{2002}]{kuo2002energetics}
Kuo, A.~D. (2002).
\newblock Energetics of actively powered locomotion using the simplest walking
  model.
\newblock {\em Journal of biomechanical engineering\/}~{\em 124\/}(1),
  113--120.

\bibitem[\protect\citeauthoryear{Kuo and Zajac}{Kuo and
  Zajac}{1992}]{kuo1992human}
Kuo, A.~D. and F.~E. Zajac (1992).
\newblock Human standing posture: multi-joint movement strategies based on
  biomechanical constraints.
\newblock {\em Progress in Brain Research\/}~{\em 97}, 349--358.

\bibitem[\protect\citeauthoryear{Li, Zhou, Castano, Wang, Negrello, Tsagarakis,
  and Caldwell}{Li et~al.}{2015}]{li2015fall}
Li, Z., C.~Zhou, J.~Castano, X.~Wang, F.~Negrello, N.~G. Tsagarakis, and D.~G.
  Caldwell (2015).
\newblock Fall prediction of legged robots based on energy state and its
  implication of balance augmentation: A study on the humanoid.
\newblock In {\em IEEE-RAS International Conference on Robotics and
  Automation}, pp.\  5094--5100.

\bibitem[\protect\citeauthoryear{Liu, Wensing, Orin, and Zheng}{Liu
  et~al.}{2015}]{liu2015trajectory}
Liu, Y., P.~M. Wensing, D.~E. Orin, and Y.~F. Zheng (2015).
\newblock Trajectory generation for dynamic walking in a humanoid over uneven
  terrain using a 3d-actuated dual-slip model.
\newblock In {\em IEEE/RSJ International Conference on Intelligent Robots and
  Systems}, pp.\  374--380.

\bibitem[\protect\citeauthoryear{Lygeros, Tomlin, and Sastry}{Lygeros
  et~al.}{2008}]{lygeros2008hybrid}
Lygeros, J., C.~Tomlin, and S.~Sastry (2008).
\newblock {\em Hybrid systems: modeling, analysis and control}.
\newblock Preprint.

\bibitem[\protect\citeauthoryear{Majumdar}{Majumdar}{2013}]{majumdar2013robust}
Majumdar, A. (2013).
\newblock {\em Robust online motion planning with reachable sets}.
\newblock Master thesis, Massachusetts Institute of Technology.

\bibitem[\protect\citeauthoryear{Manchester, Mettin, Iida, and
  Tedrake}{Manchester et~al.}{2011}]{manchester2011stable}
Manchester, I.~R., U.~Mettin, F.~Iida, and R.~Tedrake (2011).
\newblock Stable dynamic walking over uneven terrain.
\newblock {\em The International Journal of Robotics Research\/}~{\em 30\/}(3),
  265--279.

\bibitem[\protect\citeauthoryear{Manchester and Umenberger}{Manchester and
  Umenberger}{2014}]{manchester2014real}
Manchester, I.~R. and J.~Umenberger (2014).
\newblock Real-time planning with primitives for dynamic walking over uneven
  terrain.
\newblock In {\em IEEE International Conference on Robotics and Automation},
  pp.\  4639--4646.

\bibitem[\protect\citeauthoryear{Matthis and Fajen}{Matthis and
  Fajen}{2013}]{matthis2013humans}
Matthis, J.~S. and B.~R. Fajen (2013).
\newblock Humans exploit the biomechanics of bipedal gait during visually
  guided walking over complex terrain.
\newblock {\em Proceedings of the Royal Society of London B: Biological
  Sciences\/}~{\em 280\/}(1762), 20130700.

\bibitem[\protect\citeauthoryear{Mordatch, De~Lasa, and Hertzmann}{Mordatch
  et~al.}{2010}]{mordatch2010robust}
Mordatch, I., M.~De~Lasa, and A.~Hertzmann (2010).
\newblock Robust physics-based locomotion using low-dimensional planning.
\newblock {\em ACM Transactions on Graphics\/}~{\em 29\/}(4), 71.

\bibitem[\protect\citeauthoryear{Mordatch, Todorov, and Popovi{\'c}}{Mordatch
  et~al.}{2012}]{mordatch2012discovery}
Mordatch, I., E.~Todorov, and Z.~Popovi{\'c} (2012).
\newblock Discovery of complex behaviors through contact-invariant
  optimization.
\newblock {\em ACM Transactions on Graphics\/}~{\em 31\/}(4), 43.

\bibitem[\protect\citeauthoryear{Morisawa, Kajita, Kanehiro, Kaneko, Miura, and
  Yokoi}{Morisawa et~al.}{2012}]{morisawa2012balance}
Morisawa, M., S.~Kajita, F.~Kanehiro, K.~Kaneko, K.~Miura, and K.~Yokoi (2012).
\newblock Balance control based on capture point error compensation for biped
  walking on uneven terrain.
\newblock In {\em IEEE-RAS International Conference on Humanoid Robots}, pp.\
  734--740.

\bibitem[\protect\citeauthoryear{Morisawa, Kajita, Kaneko, Harada, Kanehiro,
  Fujiwara, and Hirukawa}{Morisawa et~al.}{2005}]{morisawa2005pattern}
Morisawa, M., S.~Kajita, K.~Kaneko, K.~Harada, F.~Kanehiro, K.~Fujiwara, and
  H.~Hirukawa (2005).
\newblock Pattern generation of biped walking constrained on parametric
  surface.
\newblock In {\em IEEE-RAS International Conference on Robotics and
  Automation}, pp.\  2405--2410.

\bibitem[\protect\citeauthoryear{Nguyen, Agrawal, Da, Martin, Geyer, Grizzle,
  and Sreenath}{Nguyen et~al.}{2017}]{nguyen2017walking}
Nguyen, Q., A.~Agrawal, X.~Da, W.~Martin, H.~Geyer, J.~Grizzle, and K.~Sreenath
  (2017).
\newblock Dynamic walking on randomly-varying discrete terrain with one-step
  preview.
\newblock In {\em Robotics: Science and Systems}.

\bibitem[\protect\citeauthoryear{Pardo, Neunert, Winkler, Grandia, and
  Buchli}{Pardo et~al.}{2017}]{pardo2017hybrid}
Pardo, D., M.~Neunert, A.~Winkler, R.~Grandia, and J.~Buchli (2017).
\newblock Hybrid direct collocation and control in the constraint-consistent
  subspace for dynamic legged robot locomotion.
\newblock In {\em Robotics Science and Systems}.

\bibitem[\protect\citeauthoryear{Park, Ramezani, and Grizzle}{Park
  et~al.}{2013}]{park2013finite}
Park, H.-W., A.~Ramezani, and J.~Grizzle (2013).
\newblock A finite-state machine for accommodating unexpected large
  ground-height variations in bipedal robot walking.
\newblock {\em IEEE Transactions on Robotics\/}~{\em 29\/}(2), 331--345.

\bibitem[\protect\citeauthoryear{Park, Wensing, and Kim}{Park
  et~al.}{2017}]{park2017high}
Park, H.-W., P.~M. Wensing, and S.~Kim (2017).
\newblock High-speed bounding with the mit cheetah 2: Control design and
  experiments.
\newblock {\em The International Journal of Robotics Research\/}~{\em 36\/}(2),
  167--192.

\bibitem[\protect\citeauthoryear{Pham, Caron, and Nakamura}{Pham
  et~al.}{2013}]{pham2013kinodynamic}
Pham, Q.-C., S.~Caron, and Y.~Nakamura (2013).
\newblock Kinodynamic planning in the configuration space via admissible
  velocity propagation.
\newblock In {\em Robotics: Science and Systems}.

\bibitem[\protect\citeauthoryear{Piovan and Byl}{Piovan and
  Byl}{2015}]{piovan2015reachability}
Piovan, G. and K.~Byl (2015).
\newblock Reachability-based control for the active slip model.
\newblock {\em The International Journal of Robotics Research\/}~{\em 34\/}(3),
  270--287.

\bibitem[\protect\citeauthoryear{Piovan and Byl}{Piovan and
  Byl}{2016}]{piovan2016approximation}
Piovan, G. and K.~Byl (2016).
\newblock Approximation and control of the slip model dynamics via partial
  feedback linearization and two-element leg actuation strategy.
\newblock {\em IEEE Transactions on Robotics\/}~{\em 32\/}(2), 399--412.

\bibitem[\protect\citeauthoryear{Posa, Cantu, and Tedrake}{Posa
  et~al.}{2014}]{posa2014direct}
Posa, M., C.~Cantu, and R.~Tedrake (2014).
\newblock A direct method for trajectory optimization of rigid bodies through
  contact.
\newblock {\em The International Journal of Robotics Research\/}~{\em 33\/}(1),
  69--81.

\bibitem[\protect\citeauthoryear{Posa, Koolen, and Tedrake}{Posa
  et~al.}{2017}]{posa2017balancing}
Posa, M., T.~Koolen, and R.~Tedrake (2017).
\newblock Balancing and step recovery capturability via sums-of-squares
  optimization.
\newblock In {\em Robotics: Science and Systems}.

\bibitem[\protect\citeauthoryear{Pratt, Carff, Drakunov, and Goswami}{Pratt
  et~al.}{2006}]{pratt2006capture}
Pratt, J., J.~Carff, S.~Drakunov, and A.~Goswami (2006).
\newblock Capture point: A step toward humanoid push recovery.
\newblock In {\em IEEE-RAS International Conference on Humanoid Robots}, pp.\
  200--207.

\bibitem[\protect\citeauthoryear{Pratt, Chew, Torres, Dilworth, and
  Pratt}{Pratt et~al.}{2001}]{pratt2001virtual}
Pratt, J., C.-M. Chew, A.~Torres, P.~Dilworth, and G.~Pratt (2001).
\newblock Virtual model control: An intuitive approach for bipedal locomotion.
\newblock {\em The International Journal of Robotics Research\/}~{\em 20\/}(2),
  129--143.

\bibitem[\protect\citeauthoryear{Raibert}{Raibert}{1986}]{raibert1986legged}
Raibert, M.~H. (1986).
\newblock {\em Legged robots that balance}.
\newblock MIT press.

\bibitem[\protect\citeauthoryear{Ramezani, Hurst, Hamed, and Grizzle}{Ramezani
  et~al.}{2014}]{ramezani2014performance}
Ramezani, A., J.~W. Hurst, K.~A. Hamed, and J.~Grizzle (2014).
\newblock Performance analysis and feedback control of atrias, a
  three-dimensional bipedal robot.
\newblock {\em Journal of Dynamic Systems, Measurement, and Control\/}~{\em
  136\/}(2).

\bibitem[\protect\citeauthoryear{Ramos and Hauser}{Ramos and
  Hauser}{2015}]{ramos2015generalizations}
Ramos, O.~E. and K.~Hauser (2015).
\newblock Generalizations of the capture point to nonlinear center of mass
  paths and uneven terrain.
\newblock In {\em IEEE-RAS International Conference on Humanoid Robots}, pp.\
  851--858.

\bibitem[\protect\citeauthoryear{Saglam and Byl}{Saglam and
  Byl}{2014}]{saglam2014robust}
Saglam, C.~O. and K.~Byl (2014).
\newblock Robust policies via meshing for metastable rough terrain walking.
\newblock In {\em Robotics: Science and Systems}.

\bibitem[\protect\citeauthoryear{Schouwenaars, Mettler, Feron, and
  How}{Schouwenaars et~al.}{2003}]{schouwenaars2003robust}
Schouwenaars, T., B.~Mettler, E.~Feron, and J.~P. How (2003).
\newblock Robust motion planning using a maneuver automation with built-in
  uncertainties.
\newblock In {\em American Control Conference, 2003}, pp.\  2211--2216.

\bibitem[\protect\citeauthoryear{Sentis, Park, and Khatib}{Sentis
  et~al.}{2010}]{sentis2010compliant}
Sentis, L., J.~Park, and O.~Khatib (2010).
\newblock Compliant control of multicontact and center-of-mass behaviors in
  humanoid robots.
\newblock {\em IEEE Transactions on Robotics\/}~{\em 26\/}(3), 483--501.

\bibitem[\protect\citeauthoryear{Sentis and Slovich}{Sentis and
  Slovich}{2011}]{sentis2011humanoids}
Sentis, L. and M.~Slovich (2011).
\newblock Motion planning of extreme locomotion maneuvers using multi-contact
  dynamics and numerical integration.
\newblock In {\em IEEE-RAS International Conference on Humanoid Robots}, pp.\
  760 --767.

\bibitem[\protect\citeauthoryear{Srinivasan and Ruina}{Srinivasan and
  Ruina}{2006}]{srinivasan2006computer}
Srinivasan, M. and A.~Ruina (2006).
\newblock Computer optimization of a minimal biped model discovers walking and
  running.
\newblock {\em Nature\/}~{\em 439\/}(7072), 72--75.

\bibitem[\protect\citeauthoryear{Stephens and Atkeson}{Stephens and
  Atkeson}{2010}]{stephens2010push}
Stephens, B.~J. and C.~G. Atkeson (2010).
\newblock Push recovery by stepping for humanoid robots with force controlled
  joints.
\newblock In {\em IEEE-RAS International Conference on Humanoid Robots, 2010},
  pp.\  52--59.

\bibitem[\protect\citeauthoryear{Takenaka, Matsumoto, and Yoshiike}{Takenaka
  et~al.}{2009}]{takenaka2009real}
Takenaka, T., T.~Matsumoto, and T.~Yoshiike (2009).
\newblock Real time motion generation and control for biped robot-1st report:
  Walking gait pattern generation.
\newblock In {\em IEEE/RSJ International Conference on Intelligent Robots and
  Systems, 2009}, pp.\  1084--1091.

\bibitem[\protect\citeauthoryear{Tassa, Erez, and Todorov}{Tassa
  et~al.}{2012}]{tassa2012synthesis}
Tassa, Y., T.~Erez, and E.~Todorov (2012).
\newblock Synthesis and stabilization of complex behaviors through online
  trajectory optimization.
\newblock In {\em IEEE/RSJ International Conference on Intelligent Robots and
  Systems, 2012}, pp.\  4906--4913.

\bibitem[\protect\citeauthoryear{Utkin}{Utkin}{2013}]{utkin2013sliding}
Utkin, V.~I. (2013).
\newblock {\em Sliding modes in control and optimization}.
\newblock Springer Science \& Business Media.

\bibitem[\protect\citeauthoryear{Van~Heerden}{Van~Heerden}{2017}]{van2017real}
Van~Heerden, K. (2017).
\newblock Real-time variable center of mass height trajectory planning for
  humanoids robots.
\newblock {\em IEEE Robotics and Automation Letters\/}~{\em 2\/}(1), 135--142.

\bibitem[\protect\citeauthoryear{Wensing and Orin}{Wensing and
  Orin}{2014}]{wensing20143d}
Wensing, P.~M. and D.~E. Orin (2014).
\newblock 3d-slip steering for high-speed humanoid turns.
\newblock In {\em IEEE/RSJ International Conference on Intelligent Robots and
  Systems}, pp.\  4008--4013.

\bibitem[\protect\citeauthoryear{Wieber}{Wieber}{2006}]{wieber2006trajectory}
Wieber, P.-B. (2006).
\newblock Trajectory free linear model predictive control for stable walking in
  the presence of strong perturbations.
\newblock In {\em IEEE-RAS International Conference on Humanoid Robots}, pp.\
  137--142.

\bibitem[\protect\citeauthoryear{Winter}{Winter}{1995}]{winter1995human}
Winter, D.~A. (1995).
\newblock Human balance and posture control during standing and walking.
\newblock {\em Gait \& posture\/}~{\em 3\/}(4), 193--214.

\bibitem[\protect\citeauthoryear{Wu and Geyer}{Wu and Geyer}{2013}]{wu20133}
Wu, A. and H.~Geyer (2013).
\newblock The 3-d spring--mass model reveals a time-based deadbeat control for
  highly robust running and steering in uncertain environments.
\newblock {\em IEEE Transactions on Robotics\/}~{\em 29\/}(5), 1114--1124.

\bibitem[\protect\citeauthoryear{Wu and Popovi{\'c}}{Wu and
  Popovi{\'c}}{2010}]{wu2010terrain}
Wu, J.-c. and Z.~Popovi{\'c} (2010).
\newblock Terrain-adaptive bipedal locomotion control.
\newblock {\em ACM Transactions on Graphics (TOG)\/}~{\em 29\/}(4), 72.

\bibitem[\protect\citeauthoryear{Xi, Yesilevskiy, and Remy}{Xi
  et~al.}{2016}]{xi2016selecting}
Xi, W., Y.~Yesilevskiy, and C.~D. Remy (2016).
\newblock Selecting gaits for economical locomotion of legged robots.
\newblock {\em The International Journal of Robotics Research\/}~{\em 35\/}(9),
  1140--1154.

\bibitem[\protect\citeauthoryear{Yang, Westervelt, Serrani, and
  Schmiedeler}{Yang et~al.}{2009}]{yang2009framework}
Yang, T., E.~Westervelt, A.~Serrani, and J.~P. Schmiedeler (2009).
\newblock A framework for the control of stable aperiodic walking in
  underactuated planar bipeds.
\newblock {\em Autonomous Robots\/}~{\em 27\/}(3), 277--290.

\bibitem[\protect\citeauthoryear{Yun and Goswami}{Yun and
  Goswami}{2011}]{yun2011momentum}
Yun, S.-k. and A.~Goswami (2011).
\newblock Momentum-based reactive stepping controller on level and non-level
  ground for humanoid robot push recovery.
\newblock In {\em 2011 IEEE/RSJ International Conference on Intelligent Robots
  and Systems}, pp.\  3943--3950.

\bibitem[\protect\citeauthoryear{Zhao, Ma, Zeagler, and Ames}{Zhao
  et~al.}{2014}]{zhao2014human}
Zhao, H.-H., W.-L. Ma, M.~B. Zeagler, and A.~D. Ames (2014).
\newblock Human-inspired multi-contact locomotion with amber2.
\newblock In {\em ACM/IEEE 5th International Conference on Cyber-Physical
  Systems}, pp.\  199--210.

\bibitem[\protect\citeauthoryear{Zhao, Schutz, and Berns}{Zhao
  et~al.}{2013}]{zhao2013biologically}
Zhao, J., S.~Schutz, and K.~Berns (2013).
\newblock Biologically motivated push recovery strategies for a 3d bipedal
  robot walking in complex environments.
\newblock In {\em IEEE International Conference on Robotics and Biomimetics},
  pp.\  1258--1263.

\bibitem[\protect\citeauthoryear{Zhao, Fernandez, and Sentis}{Zhao
  et~al.}{2016}]{zhao2016robustplanning}
Zhao, Y., B.~Fernandez, and L.~Sentis (2016).
\newblock Robust phase-space planning for agile legged locomotion over various
  terrain topologies.
\newblock {\em Robotics: Science and Systems\/}.

\bibitem[\protect\citeauthoryear{Zhao, Kim, Fernandez, and Sentis}{Zhao
  et~al.}{2013}]{zhao2013phase}
Zhao, Y., D.~Kim, B.~Fernandez, and L.~Sentis (2013).
\newblock Phase space planning and robust control for data-driven locomotion
  behaviors.
\newblock In {\em IEEE-RAS International Conference on Humanoid Robots}, pp.\
  80--87.

\bibitem[\protect\citeauthoryear{Zhao, Matthis, Barton, Hayhoe, and
  Sentis}{Zhao et~al.}{2016}]{zhao2016humanwalking}
Zhao, Y., J.~S. Matthis, S.~L. Barton, M.~Hayhoe, and L.~Sentis (2016).
\newblock Exploring visually guided locomotion over complex terrain: A
  phase-space planning method.
\newblock In {\em Dynamic Walking Conference}.

\bibitem[\protect\citeauthoryear{Zhao and Sentis}{Zhao and
  Sentis}{2012}]{zhao2012three}
Zhao, Y. and L.~Sentis (2012).
\newblock A three dimensional foot placement planner for locomotion in very
  rough terrains.
\newblock In {\em IEEE-RAS International Conference on Humanoid Robots}, pp.\
  726--733.

\bibitem[\protect\citeauthoryear{Zhao, Topcu, and Sentis}{Zhao
  et~al.}{2016}]{zhao2016high}
Zhao, Y., U.~Topcu, and L.~Sentis (2016).
\newblock High-level planner synthesis for whole-body locomotion in
  unstructured environments.
\newblock In {\em IEEE 55th Conference on Decision and Control}, pp.\
  6557--6564.

\bibitem[\protect\citeauthoryear{Zhou, Doyle, Glover, et~al.}{Zhou
  et~al.}{1996}]{zhou1996robust}
Zhou, K., J.~C. Doyle, K.~Glover, et~al. (1996).
\newblock {\em Robust and optimal control}.
\newblock Prentice hall New Jersey.

\end{thebibliography}

\end{document}